\documentclass[11pt]{article}
\usepackage{graphicx}
\usepackage{geometry}
\graphicspath{{Fig/}}

\usepackage{amsmath,amssymb,amsthm,mathtools}
\usepackage{float} 

\DeclareMathOperator*{\argmin}{arg\,min}

\DeclareMathOperator{\Tr}{Tr}
\newcommand{\R}{{\mathbb R}}

\newtheorem{theorem}{Theorem}
\newtheorem{proposition}[theorem]{Proposition}
\newtheorem{lemma}{Lemma}
\newtheorem{claim}{Claim}

\newtheorem{remark}{Remark}
\newtheorem{definition}{Definition}

\numberwithin{equation}{section}

\begin{document}

\title{Full-Model Optimality for Tunable Linear Generative Priors in Compressed Sensing}
\author{Zhaoming Li\thanks{Department of Mathematics, Northeastern University, Boston, MA 02115, USA. Email: \texttt{li.zhaom@northeastern.edu}} \and Paul Hand\thanks{Department of Mathematics and Khoury College of Computer Sciences, Northeastern University, Boston, MA 02115, USA.}}
\date{}
\maketitle

\begin{abstract}
Generative models have been studied experimentally and theoretically as priors for inverse problems such as compressed sensing. Recent work by Gunn et al. studied the use of generative priors with tunable complexity, where a family of generative priors with varying complexity is maintained and a specific complexity can be selected at inversion time.  They demonstrated that lower reconstruction errors can be experimentally attained for a variety of inverse problems by appropriately tuning the complexity of the generative prior.
In the present paper, we establish theory for compressed sensing in the setting
of a tunable family of linear generative priors naturally related through their singular value decompositions. 
We prove that in noiseless Gaussian compressed sensing, the full-dimensional linear prior attains the minimum expected reconstruction error over the entire family of linear priors.   Thus, in this idealized
linear noiseless setting, tuning to a lower-complexity prior does not
improve the expected reconstruction error.  This result is in contract to the behavior of denoising, where lower complexity priors attain lower reconstruction errors due to a standard bias-variance tradeoff.  
This result indicates that the experimental benefits of tunability in compressed sensing with neural network priors arises due to nonlinearities in the generative models.  
\end{abstract}
\section{Introduction}
Compressed sensing concerns the recovery of an unknown signal $x^\star \in \mathbb{R}^n$ from a limited number of linear measurements
$$y = A x^\star + \eta,$$ where $A \in \mathbb{R}^{m \times n}$ is a measurement operator (e.g., a linear projection, convolutional blur, or other
transformation)\cite{song2022solving} with $m \ll n$, and $\eta \in \mathbb{R}^m$ denotes measurement noise.
A wide range of practical problems can be formulated as a recovery of an unknown signal from the noisy linear measurements,
such as medical imaging\cite{lustig2008compressed,ye2019compressed, wunsch1996ocean}, signal processing\cite{orovic2016compressive}, and computational sensing\cite{laurenzis2018computational} and  single-pixel imaging. The above problem is ill-posed due to the compressive nature of the linear operator, meaning that there are infinitely many signals that fit the given measurements. Hence one needs additional assumptions to overcome the inherent ill-posedness.

To enable successful recovery in such severely underdetermined settings, one must impose structural assumptions on the unknown signal $x^\star$. Classical compressed sensing theory exploits low-dimensional structure,
such as sparsity~\cite{Cand_s_2007, tibshirani1996regression,candes2006stable,1614066}
in a known basis or low-rank~\cite{fazel2002matrix,candes2012exact,Recht_2010} structure for matrix recovery, to regularize the inverse problem. Although finding the sparsest solution to an underdetermined system of linear equations is NP-hard,
convex relaxations such as $\ell_1$ minimization can provably recover the true sparse signal $x^\star$
under suitable conditions on the measurement matrix, such as the Restricted Isometry Property 
or related Restricted
Eigenvalue Condition.
These models admit elegant theory and strong recovery guarantees.

More recently, there has been a surge of interest in the use of generative models as signal structure prior in inverse problems.
Motivated by the success of compressed sensing using generative models, a growing body of work studies recovery under the assumption that $x^\star$ lies in,
or is well-approximated by, the range of a generative model.
In this framework, the compressed sensing problem can be viewed as
$
    \text{finding } x \in \mathbb{R}^n 
    \ \text{such that} \ y = A x,
$
subject to the constraint that $x \in \mathrm{Range}(\mathcal G_k)$, where $\mathcal G_k : \mathbb{R}^k \to \mathbb{R}^n$ denotes a generative model that maps a low dimensional latent variable to a high dimensional signal. In practice, generative priors are typically employed through a two-stage workflow: 
\begin{enumerate}
    \item \textbf{Training}: A generative model is trained in an unsupervised learning on a representative dataset, independently of any specific downstream inverse problem; and 
    \item \textbf{Inversion}: At deployment time, an optimization problem is solved using the fixed, trained model to enforce consistency with the observed measurements.
\end{enumerate}
  This isolation enables the use of rich, data-driven signal models that can be developed independently of the downstream inverse problem and reused across multiple measurement settings. A commonality of much of the existing literature is that the complexity of the generative model is fixed at training time. While the notion of complexity may vary across model classes, for example, being implicitly determined by the network architectural choices in GANs\cite{bora2017compressed} or by the latent dimensionality in normalizing flows (NFs) \cite{durkan2019neural}, it cannot be adjusted once the model has been trained.
As a result, once a generative model has been trained, its complexity is fixed and cannot be adapted to the inverse problem at hand.

To address the issue of selecting model complexity at inversion time, recent work\cite{gunn2026latent} has introduced generative priors with vary complexity, in which a single trained model implicitly represents a family of signal classes indexed by a complexity parameter. We refer to such a family as tunable because its latent dimension can be
selected at inversion time.
This tunable idea provides a different perspective on how generative priors can be deployed. Rather than committing to a single model complexity at training time,
one can defer the choice of complexity until deployment, when the measurement operator,
noise level, and sampling rate are fully specified. Empirical studies\cite{gunn2026latent} demonstrate that reconstruction performance depends on this choice:
as the model complexity varies, the reconstruction error often exhibits a non-monotone, U-shaped behavior that depends on factors such as the noise level and the number of measurements, with an intermediate complexity often yielding the best recovery.These observations suggest that it is worthwhile to train a tunable generative model in advance, so that its complexity can be selected for the specific inverse problem encountered at test time.

Beyond empirical observations, recent work by Gunn et al. \cite{gunn2026latent} provides theoretical insight into tunable generative priors in a simplified setting.
Specifically, it analyzes the problem of denoising under additive Gaussian noise using a family of
linear generative models indexed by a complexity parameter.
In this setting, where the forward operator is the identity and all signal coordinates are observed,
the expected reconstruction error can be computed explicitly as a function of the model complexity.
The resulting characterization reveals a non-monotone, U-shaped dependence of the reconstruction
error on the complexity parameter, reflecting a fundamental trade-off between underfitting the signal
and overfitting the noise.
In particular, the analysis shows that the optimal model complexity depends on the noise level,
with higher noise favoring simpler models.

Our goal is to determine whether the benefits of tunability observed
experimentally in compressed sensing with generative priors already arise in
the simplest linear setting. This leads to a natural baseline question:
can a lower-complexity prior outperform the full prior when the generator is
linear and the target signal is itself drawn from the full model? The answer is not implied by the existing denoising theory. In denoising,
the benefit of an intermediate model complexity is explained by a trade-off
between underfitting the signal and overfitting the measurement noise. That
mechanism is absent in the noiseless compressed sensing problem considered
here. At the same time, because the measurement operator is underdetermined,
one might expect that discarding weak spectral directions could improve
reconstruction by restricting the feasible set. We show that this does not
occur under the fully averaged risk over the Gaussian signal and measurement
ensembles. For the tunable linear family
\(\{\mathcal G_k\}_{k=1}^n\) obtained by truncating the singular-value
decomposition of the full model \(G_n\), the expected reconstruction error
satisfies $E(n)\le E(k), 1\le k\le n.$
Thus, the full generative model attains the minimum expected reconstruction
error throughout the family. This conclusion holds for every admissible
decreasing singular-value spectrum, including spectra concentrated in a small
number of dominant directions. Hence, even when the signal distribution is
effectively low-dimensional, the bias introduced by spectral truncation is
not compensated by an improvement in reconstruction error.

Our result identifies noiseless Gaussian compressed sensing with linear
generative priors as a baseline setting in which tunability does not improve
the expected reconstruction error. In particular, linear spectral truncation
alone cannot account for the gains observed experimentally with more general
generative priors. The contrast suggests that such gains depend on mechanisms
absent from the present model, such as nonlinear generator geometry,
measurement noise, structured sensing operators, or properties of the
reconstruction algorithm. An important direction for future work is to
determine which of these mechanisms can rigorously produce a tunability
effect, particularly for nonlinear generators such as ReLU networks, noisy
compressed sensing, phase retrieval, and other nonlinear inverse problems.
\subsection{Methodology}
In this paper, we study tunable generative priors in the case of
linear generative models. A generative model induces a probability
distribution on the signal space by pushing forward a latent random
variable. In the linear setting, this means that signals are generated
according to
\begin{align}
     x = Gz,
    \qquad z\sim \mathcal N(0,I),\label{xGz}
\end{align}
where \(G\) is a matrix, we say that \(G\) maps a latent variable to the ambient signal
space.

We fix an invertible matrix \(G_n\in\mathbb R^{n\times n}\) and write its
singular value decomposition as
\begin{equation}
    G_n = U\Sigma V^\top,
    \qquad
    \Sigma=\operatorname{diag}(\sigma_1,\ldots,\sigma_n),
    \qquad
    \sigma_1\ge \cdots \ge \sigma_n>0,
\end{equation}

where $U, V \in \R^{n\times n}$ are orthogonal matrices. 

For each \(k\in [n]\), we define a \(k\)-dimensional
generative model as follows.  Let
\begin{equation}
     \mathcal{G}_k := U_k\Sigma_k\in\mathbb R^{n\times k},\label{priors}
\end{equation}
where \(U_k\) consists of the first \(k\) columns of \(U\), and \(
    \Sigma_k=\operatorname{diag}(\sigma_1,\ldots,\sigma_k).
\)
Each $\mathcal{G}_k$ induces a probability distribution by \eqref{xGz}, which can be verified to be $x \sim \mathcal{N}(0,\mathcal{G}_k\mathcal{G}_k^T)$. 

The family \(\{\mathcal{G}_k\}_{k=1}^n\) is therefore a nested hierarchy of linear
generative priors, ordered by model dimensionality. Observe that although $\mathcal{G}_n$ and $\mathrm{G}_n$ are different as matrices, they induce an identical distribution for $x$. 
Note that the reduced model \(\mathcal{G}_k\)
captures the \(k\) directions of largest variance of the full Gaussian
prior.

We consider a compressed sensing problem in which the signal is sampled
from the full-dimensional generative model,
\[
    x^\star = G_n z^\star,
    \qquad z^\star\sim\mathcal N(0,I_n),
\]
and the measurements are noiseless and Gaussian:
\begin{equation}
    y = A x^\star,
    \qquad
    A\in\mathbb R^{m\times n},
    \qquad
    A_{ij}\sim \mathcal N(0,1),
    \qquad
    m \ll n. \label{eq:Gaussian}
\end{equation}

\footnote{When \(k=m\), if \(AG_k\) is invertible,
the two optimization problems in \eqref{eq0} have the same
unique solution 
$\widehat z(m)=(A\mathcal G_m)^{-1}y$. For Gaussian $A$, $AG_k$ is full rank, this invertibility holds with probability one.}
\begin{align}
\widehat x(k)=\mathcal G_k\widehat z(k), \qquad \widehat z(k)
=
\begin{cases}
\displaystyle
\arg\min_{z_k\in\mathbb R^k}
\frac12\|z_k\|_2^2
\quad
\text{subject to}
\quad
 y = A\mathcal{G}_kz_k,
& k\ge m, \\[2ex]
\displaystyle
\arg\min_{z_k\in\mathbb R^k}
\frac12\|A\mathcal G_kz_k-y\|_2^2,
& k \le m.
\end{cases}
\label{eq0} 
\end{align}
The central quantity in this paper is the expected reconstruction error across the model complexities $k$:
\begin{align}
    \mathrm E(k)\coloneqq \mathrm E_A\mathrm E_{x^{\star}\thicksim \mathcal{G}_n}\left\| \hat{x}(k) -x^{\star} \right\|_{2}^{2} = \mathrm E_A\mathrm E_{z^\star\thicksim \mathcal{N}(0,I_n)}\left\| 
\mathcal{G}_k\hat{z}(k)-\mathcal{G}_nz^\star\right\|_{2}^{2} \label{eq:recon err}
\end{align}
We emphasize that the signal is sampled from the full-complexity model
\(G_n\), whereas reconstruction imaging is performed using a potentially
lower-complexity prior \(G_k\).
This quantity captures how reconstruction quality varies with the chosen model dimension 
k, and in particular whether a generative prior of lower latent dimension can outperform the full prior.
\subsection{Main results}\label{sec:standing}
Our main results concern the expected reconstruction error for noiseless compressed sensing under a nested family of linear generative models, as
specified in \eqref{eq:Gaussian}.
The main theorem shows that, in the noiseless
linear setting, choosing a lower-dimensional model at inversion time cannot
outperform the full model.
\begin{theorem}[Noiseless optimality of the full model]
\label{thm:noiseless-optimality}
Fix an invertible matrix \(G_n\in\mathbb R^{n\times n}\) with singular value
decomposition
\[
    G_n=U\Sigma V^\top,
    \ \Sigma=\operatorname{diag}(\sigma_1,\ldots,\sigma_n),
    \
\sigma_1\ge\cdots\ge\sigma_n>0,
\]
where \(U,V\in\mathbb R^{n\times n}\) are orthogonal.
For each \(k\in[n]\), define the rank-\(k\) linear generative model
\[
    \mathcal G_k:=U_k\Sigma_k\in\mathbb R^{n\times k},
    \qquad
    \Sigma_k:=\operatorname{diag}(\sigma_1,\ldots,\sigma_k),
\]
where \(U_k\) consists of the first \(k\) columns of \(U\).
Let \(A\in\mathbb R^{m\times n}\) have i.i.d.\
\(\mathcal N(0,1)\) entries, where \(2\le m\le n\).
Let \(z^\star\sim\mathcal N(0,I_n)\) be independent of \(A\), and define
\[
    x^\star:=G_nz^\star,
    \qquad
    y:=Ax^\star .
\]
For each \(k\in[n]\), let \(\widehat x(k)\) be the reconstruction signal obtained
from the model \(\mathcal G_k\) according to the reconstruction rule
\eqref{eq0}, and let \(E(k)\) denote the expected reconstruction error
defined in \eqref{eq:recon err}.\footnote{The quantity \(E(k)\) is
interpreted as an extended-real-valued expectation. In particular,
\(E(m-1)=E(m)=+\infty\); see
Proposition~\ref{prop:boundary-divergence}.}
Then
\[
    E(n)\le E(k)
    \qquad\text{for every }k\in[n].
\]
\end{theorem}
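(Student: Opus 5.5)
The plan is to avoid computing $E(k)$ for each $k$. Instead I will show that $\widehat x(n)$ is the Bayes-optimal (MMSE) estimator of $x^\star$ given the observed data $(A,y)$. Every other $\widehat x(k)$ is also an estimator built from $(A,y)$ alone, so it can do no better in mean square. This needs no case split between $k\ge m$ and $k\le m$, and it automatically covers the cases where $E(k)=+\infty$, such as $k=m-1,m$ in Proposition~\ref{prop:boundary-divergence}.

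\textbf{Step 1: identify $\widehat x(n)$ as the conditional mean.} Write $C:=\mathcal G_n\mathcal G_n^\top=U\Sigma^2U^\top=G_nG_n^\top$, which is positive definite.
\begin{itemize}
\item For $k=n\ge m$, rule \eqref{eq0} is the minimum-norm solution of $A\mathcal G_n z=y$. Here $A\mathcal G_n$ has full row rank $m$ almost surely, since $A$ is Gaussian and $\mathcal G_n$ is invertible.
\item So $\widehat z(n)=(A\mathcal G_n)^\top\bigl(A\mathcal G_n\mathcal G_n^\top A^\top\bigr)^{-1}y$, and hence
\[
\widehat x(n)=CA^\top(ACA^\top)^{-1}y .
\]
\item Conditionally on $A$, the pair $(x^\star,y)=(x^\star,Ax^\star)$ is jointly Gaussian with mean zero. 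We have $\operatorname{Cov}(x^\star,y)=CA^\top$ and $\operatorname{Cov}(y)=ACA^\top$, the latter invertible almost surely. The Gaussian conditioning formula then gives $\mathrm E[x^\star\mid A,y]=CA^\top(ACA^\top)^{-1}y$.
\item Thus $\widehat x(n)=\mathrm E[x^\star\mid A,y]$ almost surely. This uses that $A$ is independent of $z^\star$, so conditioning on $A$ leaves $x^\star\sim\mathcal N(0,C)$.
\end{itemize}

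\textbf{Step 2: each $\widehat x(k)$ is a function of $(A,y)$.} For every $k\in[n]$, $\widehat x(k)=\mathcal G_k\widehat z(k)$ is an almost surely well-defined Borel function of $(A,y)$.
\begin{itemize}
\item For $k\ge m$ it equals $\mathcal G_k(A\mathcal G_k)^{+}y$.
\item For $k\le m$ it is the least-squares solution $\mathcal G_k(A\mathcal G_k)^{+}y$, where $A\mathcal G_k$ has full column rank almost surely.
\item The pseudoinverse is Borel measurable, being a limit of rational functions of the entries, or a rational function on the full-rank set, which has probability one.
\end{itemize}

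\textbf{Step 3: orthogonality argument.} First, $\mathrm E\|x^\star\|^2=\Tr C<\infty$ and $\|\widehat x(n)\|\le$ the norm of a conditional expectation, so $E(n)\le \Tr C<\infty$. If $E(k)=+\infty$ there is nothing to prove. Otherwise $\widehat x(k)\in L^2$, and we decompose
\[
\mathrm E\|\widehat x(k)-x^\star\|^2=\mathrm E\|\widehat x(k)-\widehat x(n)\|^2+E(n).
\]
The cross term vanishes because $x^\star-\mathrm E[x^\star\mid A,y]$ is orthogonal in $L^2$ to every square-integrable $\sigma(A,y)$-measurable vector, and $\widehat x(k)-\widehat x(n)$ is such a vector. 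This yields $E(n)\le E(k)$.

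The main point needing care is Step 1. Specifically, one must check that the minimum-$\|z\|$ rule applied to $\mathcal G_n$ reproduces exactly the Gaussian posterior mean, including the almost-sure invertibility of $ACA^\top$. The remaining steps, measurability and extended-real-valued expectations, are routine.
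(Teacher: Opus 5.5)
Your argument is correct, and it takes a genuinely different route from the paper.

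The paper never identifies $\widehat x(n)$ as a posterior mean. It first reduces to diagonal $G_n$ using rotational invariance of the Gaussian $A$, then splits into three regimes:
\begin{itemize}
\item For $m+1\le k\le n$ it proves $E(k)$ is monotone decreasing (Theorem~\ref{Theorem1}). This uses a column-by-column projection analysis that averages over each new column via $\mathrm E[m_km_k^\top]=\sigma_k^2I_m$.
\item For $1\le k\le m-2$ it computes $E(k)=\bigl(1+\frac{k}{m-k-1}\bigr)\sum_{i>k}\sigma_i^2$ from inverse-Wishart moments. It then bounds $E(n)$ by the same quantity using the Halko--Martinsson--Tropp range-finder estimate (Lemma~\ref{lemma:closed-form-k-less-m}).
\item For $k\in\{m-1,m\}$ it shows $E(k)=+\infty$ via the Bartlett decomposition (Proposition~\ref{prop:boundary-divergence}).
\end{itemize}

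Your orthogonality argument replaces all of this with one observation: the minimum-norm latent solution for the full model equals $\mathrm E[x^\star\mid A,y]$. This buys real generality:
\begin{itemize}
\item The inequality holds for almost every fixed $A$, not only after averaging over $A$.
\item It uses neither Gaussianity nor rotational invariance of $A$, only its independence from $z^\star$.
\item It does not use the ordering of the $\sigma_i$ or the truncated-SVD form of $\mathcal G_k$; any estimator built from $(A,y)$ loses to $\widehat x(n)$.
\item It yields the exact excess-risk identity $E(k)-E(n)=\mathrm E\|\widehat x(k)-\widehat x(n)\|^2$ whenever $E(k)<\infty$.
\end{itemize}
In return, the paper's computational route gives quantitative information about the shape of $k\mapsto E(k)$: the closed form below the threshold, divergence at $k=m-1,m$, and monotonicity above it. None of that follows from the MMSE argument.

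One small repair: the phrase in Step~3 about $\|\widehat x(n)\|$ should read $\mathrm E\|\widehat x(n)\|^2\le\mathrm E\|x^\star\|^2$, by $L^2$-contractivity of conditional expectation. Equivalently, $E(n)=\Tr C-\mathrm E\|\widehat x(n)\|^2\le\Tr C$.
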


Theorem~\ref{thm:noiseless-optimality} establishes that noiseless compressed
sensing with linear generative priors does not exhibit a tunability effect.
Specifically, if the underlying signal is sampled from an invertible linear
model, then using any corresponding lower-rank linear model as a signal prior
results in an expected reconstruction error no smaller than that obtained
using the full model. This conclusion holds for every admissible decreasing
singular value spectrum of
the full-dimensional model. In particular, it applies even when the signal model is effectively low-dimensional.

The proof of Theorem~\ref{thm:noiseless-optimality} is divided into three
regimes according to the value of the latent dimension \(k\) relative to
the measurement dimension \(m\).
For \(m+1\le k\le n\), the latent reconstruction problem is
underdetermined. In this regime, we derive an exact expression for the
expected reconstruction error and show that \(E(k)\) is finite and
monotone decreasing in \(k\). 
For the two critical dimensions \(k=m-1\) and \(k=m\), the exact error
formulas involve first inverse moments of Gaussian Gram matrices that are
infinite. Consequently,
$E(m-1)=E(m)=+\infty.$
For \(1\le k\le m-2\), the reconstruction problem is strictly
overdetermined. We derive an exact formula for \(E(k)\) and compare it
with an upper bound for the full-model error \(E(n)\), obtaining $ E(k)\ge E(n).$
Combining the conclusions from the three regimes shows that $E(n)\le E(k)$ for every $ k\in[n],$
and proves Theorem~\ref{thm:noiseless-optimality}.

We first record the monotonicity result in the underdetermined regime. 
\begin{theorem}\label{Theorem1}
    Under the assumptions of Theorem~\ref{thm:noiseless-optimality}, the expected reconstruction error \(E(k)\) is finite, and \(E(k)\) is monotone decreasing for every $k\in\{m+1,\dots,n\}$.

\end{theorem}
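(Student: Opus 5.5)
The plan is to derive an exact expression for $E(k)$ when $k\ge m+1$ by rotating into the singular basis and averaging out the Gaussian randomness in stages, then compare consecutive values. Set $\tilde A:=AU$; it again has i.i.d.\ $\mathcal N(0,1)$ entries by rotation invariance, and $\|\cdot\|_2$ is unchanged by $U$. Write $g:=V^\top z^\star\sim\mathcal N(0,I_n)$ and split
\[
g=(g_1,g_2),\qquad \tilde A=[A_1\ A_2],\qquad \Sigma=\operatorname{diag}(\Sigma_k,\Sigma_k'),
\]
with $A_1\in\mathbb R^{m\times k}$, $g_1\in\mathbb R^k$, $\Sigma_k'=\operatorname{diag}(\sigma_{k+1},\dots,\sigma_n)$. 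Then $y=A_1\Sigma_kg_1+A_2\Sigma_k'g_2$. Put $B:=A_1\Sigma_k$, which has full row rank almost surely. For $k\ge m$ the rule \eqref{eq0} gives $\widehat z(k)=B^+y$ with $B^+=B^\top(BB^\top)^{-1}$. Let $P=B^+B$ be the projection onto the row space of $B$. Then
\[
\widehat x(k)-x^\star = U\bigl(\Sigma_k(-(I-P)g_1+B^+A_2\Sigma_k'g_2),\,-\Sigma_k'g_2\bigr).
\]

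Next I take expectations over $g$ and over $A_2$, which is independent of $A_1$ and hence of $B$. The cross terms vanish because $g_1$, $g_2$, $A_2$ are independent and centered. Writing $\tau_k:=\sum_{j>k}\sigma_j^2$ and using $\mathrm E\,A_2\Sigma_k'^2A_2^\top=\tau_k I_m$, I get
\[
E(k)=\tau_k+\mathrm E\operatorname{Tr}\bigl(\Sigma_k^2(I-P)\bigr)+\tau_k\,\mathrm E\operatorname{Tr}\bigl(\Sigma_k^4A_1^\top (A_1\Sigma_k^2A_1^\top)^{-2}A_1\bigr).
\]
The first two terms are bounded by $\sum_j\sigma_j^2$.

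\textbf{Finiteness.} The last term is at most $\sigma_1^2\,\tau_k\,\mathrm E\operatorname{Tr}\bigl((A_1\Sigma_k^2A_1^\top)^{-1}\bigr)$. I bound this by $\sigma_1^2\tau_k\sigma_k^{-2}\,\mathrm E\operatorname{Tr}\bigl((A_1A_1^\top)^{-1}\bigr)$. This is the inverse Wishart mean $m/(k-m-1)$, which is finite when $k\ge m+2$. The case $k=m+1$ needs a sharper argument. There I would keep the factor $\Sigma_k^4A_1^\top(\cdot)^{-2}A_1$ intact, rather than bounding it by the trace of $(BB^\top)^{-1}$. Writing $A_1^\top(\cdot)^{-2}A_1$ through $B^+$ and $(B^+)^\top$, one sees it involves $\|B^+\|_F^2$ weighted by $\Sigma_k^2$. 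I expect the singular-value density of an $m\times(m+1)$ Gaussian matrix to make $\mathrm E\,\sigma_{\min}(B)^{-2}$ finite precisely when $k-m\ge1$. So the plan is to invoke the standard smallest-singular-value tail $\mathbb P(\sigma_{\min}(A_1)\le\varepsilon)\lesssim \varepsilon^{k-m+1}$, which for $k-m+1=2$ makes $\mathrm E\,\sigma_{\min}^{-2}$ finite, though only marginally. For $k=n$ we have $\tau_n=0$, and the formula collapses to $\mathrm E\operatorname{Tr}(\Sigma^2(I-P))$.

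\textbf{Monotonicity.} To show $E(k+1)\le E(k)$ for $m+1\le k<n$, I would first try a conceptual route before computing. When $k=n$, the Gaussian prior makes $\widehat x(n)=\mathrm E[x^\star\mid y,A]$, the Bayes estimator, so $E(n)$ is minimal among all measurable functions of $(y,A)$. For intermediate $k$, the analogous idea is to compare $\widehat x(k)$ and $\widehat x(k+1)$ conditionally on $A_1$ and the $(k{+}1)$-st column of $\tilde A$. Adding the direction $u_{k+1}$ converts part of the "noise" $A_2\Sigma_k'g_2$, namely the piece $\sigma_{k+1}a_{k+1}g_{k+1}$, into modelled signal. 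The difference $E(k)-E(k+1)$ then becomes an expectation of a rank-one update of $BB^\top$, which I would handle with Sherman--Morrison. The resulting terms should be of the form $\sigma_{k+1}^2\bigl(1-\mathrm E\,[\text{leverage}]\bigr)$ plus a nonnegative variance-reduction term proportional to $\tau_{k+1}$.

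The main obstacle is showing that this difference is nonnegative for every decreasing spectrum. The bias term changes by $-\sigma_{k+1}^2\,\mathrm E(\dots)$, but the noise-amplification term changes both through $\tau_k\to\tau_{k+1}$ and through the inverse Gram matrix. For the sign argument I would use the ordering $\sigma_{k+1}\le\sigma_j$ for $j\le k$, together with the exchangeability of the Gaussian columns, to dominate the new leverage score by the average of the old ones.
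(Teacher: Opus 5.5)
\textbf{Finiteness at $k=m+1$.} Your exact formula for $k\ge m$ is the paper's. Your inverse-Wishart bound correctly gives $E(k)<\infty$ for $k\ge m+2$, and $E(n)\le\|\Sigma\|_F^2$ is trivial. The step for $k=m+1$ fails, however, and it fails because the claim itself is false there, not because a sharper tool is needed. The tail $\mathbb P(\sigma_{\min}\le\varepsilon)\lesssim\varepsilon^{2}$ only gives $\mathbb P(\sigma_{\min}^{-2}>t)\lesssim t^{-1}$, which is not integrable. That tail is also sharp, so $\mathrm E\,\sigma_{\min}^{-2}=+\infty$; finiteness needs $k\ge m+2$, not $k\ge m+1$.

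Concretely, take $A_1\in\mathbb R^{m\times(m+1)}$. The quantity $1/\bigl((A_1A_1^\top)^{-1}\bigr)_{11}$ is the squared distance from the first row of $A_1$ to the span of the other $m-1$ rows. It is therefore $\chi^2_2$-distributed, and $\mathrm E[1/\chi^2_2]=\int_0^\infty\frac{1}{2x}e^{-x/2}\,dx=+\infty$. Keeping $\Sigma_k^4$ intact does not help, since $\sigma_i\ge\sigma_k$ for $i\le k$ gives
\[
\Tr\bigl(\Sigma_k^4A_1^\top(A_1\Sigma_k^2A_1^\top)^{-2}A_1\bigr)\;\ge\;\sigma_k^2\,\Tr\bigl((A_1\Sigma_k^2A_1^\top)^{-1}\bigr)\;\ge\;\frac{\sigma_k^2}{\sigma_1^2}\,\Tr\bigl((A_1A_1^\top)^{-1}\bigr).
\]
Hence $E(m+1)=+\infty$ whenever $m+1<n$, because then $\tau_{m+1}>0$. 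Finiteness holds exactly for $k\ge m+2$ together with $k=n$, and the monotonicity has to be read in the extended reals. The paper's proof of this statement never addresses finiteness. Its only downstream use is $E(n)<\infty$, which is immediate.

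\textbf{Monotonicity.} Your outline stops at exactly the step that needs an idea, and the heuristic behind it points the wrong way. For every realization of $A$, your first two terms before averaging, $\|\Sigma\|_F^2-\langle\Sigma_k^2,P_k\rangle$ with $P_k=M_k^\dagger M_k$ and $M_k=A_k\Sigma_k$ (your $B$), are \emph{nondecreasing} in $k\ge m$. Appending the weak column $v=\sigma_{k+1}a_{k+1}$ removes leverage $\delta_i=(P_k)_{ii}-(P_{k+1})_{ii}\ge0$ from the strong coordinates. The ordering $\sigma_{k+1}\le\sigma_i$ is precisely what makes $\sum_{i\le k}(\sigma_i^2-\sigma_{k+1}^2)\delta_i\ge0$ a net loss in captured energy. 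All of the gain therefore sits in the noise term. An upper bound on the new column's leverage goes in the wrong direction; what must be bounded from above is the lost leverage $\delta_i$.

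The paper does this as follows. It conditions on the first $k$ columns and averages only over $a_{k+1}$. It then uses convexity of the inverse, $(S+vv^\top)^{-1}\succeq S^{-1}-S^{-1}vv^\top S^{-1}$ with $S=M_kM_k^\top$, to get $\delta_i\le\bigl(M_k^\top S^{-1}vv^\top S^{-1}M_k\bigr)_{ii}$. So the loss is at most $\sum_{i\le k}\sigma_i^2\delta_i$, and the bound on this sum averages, via $\mathrm E\,vv^\top=\sigma_{k+1}^2I_m$, to exactly $\sigma_{k+1}^2\langle\Sigma_k^2,M_k^\top S^{-2}M_k\rangle=\sigma_{k+1}^2\|\Sigma_kM_k^\dagger\|_F^2$. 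This is exactly the noise saved by moving $\sigma_{k+1}^2$ out of $\tau_k$. The remaining change $\tau_{k+1}\bigl(\|\Sigma_{k+1}M_{k+1}^\dagger\|_F^2-\|\Sigma_kM_k^\dagger\|_F^2\bigr)$ is $\le0$ pointwise. This holds because appending a column can only decrease the trace and the old diagonal entries of $M^\dagger(M^\dagger)^\top$, combined with $\sigma_{k+1}\le\sigma_i$. Without this first-order bound and the exact cancellation it produces, your outline does not establish the sign.

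Your Bayes observation is correct: conditionally on $A$, $\widehat x(n)=\mathrm E[x^\star\mid y,A]$. It gives $E(n)\le E(k)$ for every $k$ in one line, a much shorter route to the main theorem than the paper's three regimes. But it says nothing about $E(k+1)$ versus $E(k)$, which is what this statement asserts.
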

In the critical regime, the expected reconstruction error is infinite at both $k=m-1$ and $k=m$. This happens because the relevant Gaussian matrices can be arbitrarily close to singular, causing the inverse terms in the reconstruction error to have nonintegrable tails. Thus, although the estimators are well-defined almost surely, their expected reconstruction errors satisfy $E(m-1)=E(m)=+\infty.$

\begin{proposition} \label{prop:boundary-divergence}
    Under the assumptions of Theorem~\ref{thm:noiseless-optimality}, assume
\(2\le m<n\). Then the expected reconstruction error diverges:
\[
E(m-1)=+\infty,
\qquad
E(m)=+\infty.
\]
\end{proposition}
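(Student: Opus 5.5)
We will reduce both cases to a first inverse moment of a Gaussian Gram matrix and then show that one diagonal entry of that inverse already has infinite mean.

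\textbf{Step 1: whiten the problem.} Put $B:=AU$, which again has i.i.d.\ $\mathcal N(0,1)$ entries, and $w:=V^\top z^\star\sim\mathcal N(0,I_n)$; these are independent. Then $x^\star=U\Sigma w$ and $y=B\Sigma w$. Split $B=[B_k\ B_{>k}]$, $\Sigma=\operatorname{diag}(\Sigma_k,\Sigma_{>k})$ and $w=(w_{\le k},w_{>k})$. Define
\[
M_k:=A\mathcal G_k=B_k\Sigma_k,\qquad R_k:=B_{>k}\Sigma_{>k}w_{>k},
\]
so that $y=M_kw_{\le k}+R_k$. Since $U$ is orthogonal,
\[
\|\widehat x(k)-x^\star\|_2^2=\|\Sigma_k(\widehat z(k)-w_{\le k})\|_2^2+\|\Sigma_{>k}w_{>k}\|_2^2\ \ge\ \|\Sigma_k(\widehat z(k)-w_{\le k})\|_2^2 .
\]
The key independence is that $R_k$ is independent of $B_k$. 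Moreover, conditionally on $w$, $R_k\sim\mathcal N\bigl(0,\|\Sigma_{>k}w_{>k}\|_2^2 I_m\bigr)$.

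\textbf{Step 2: reduce to a Gram inverse.} For $k=m$, the matrix $M_m$ is square and a.s.\ invertible, so $\widehat z(m)-w_{\le m}=M_m^{-1}R_m$. Hence
\[
\Sigma_m(\widehat z(m)-w_{\le m})=B_m^{-1}R_m .
\]
For $k=m-1$, $M_{m-1}$ has a.s.\ full column rank, and the least-squares solution gives $\widehat z(m-1)-w_{\le m-1}=M_{m-1}^{+}R_{m-1}$. Because $M_{m-1}^{+}=\Sigma_{m-1}^{-1}B_{m-1}^{+}$, this yields
\[
\Sigma_{m-1}(\widehat z(m-1)-w_{\le m-1})=B_{m-1}^{+}R_{m-1}.
\]
In both cases the integrand is nonnegative, so Tonelli lets us condition first on $(B_k,w)$. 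Using the conditional law of $R_k$, this gives
\[
E(k)\ \ge\ \Bigl(\sum_{j>k}\sigma_j^2\Bigr)\,\mathrm E\,\Tr\bigl((B_k^\top B_k)^{-1}\bigr),\qquad k\in\{m-1,m\}.
\]
The prefactor is strictly positive because $m<n$ and $\sigma_n>0$.

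\textbf{Step 3: show the Gram inverse has infinite mean.} Write $b_1,\dots,b_k\in\mathbb R^m$ for the columns of $B_k$. Since $B_k^\top B_k$ is positive definite, $\Tr\bigl((B_k^\top B_k)^{-1}\bigr)\ge \bigl((B_k^\top B_k)^{-1}\bigr)_{11}$. By the standard Schur-complement identity,
\[
\bigl((B_k^\top B_k)^{-1}\bigr)_{11}=\frac{1}{d^2},\qquad d:=\operatorname{dist}\bigl(b_1,\operatorname{span}\{b_2,\dots,b_k\}\bigr).
\]
Condition on $b_2,\dots,b_k$; a.s.\ they span a $(k-1)$-dimensional subspace. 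By rotation invariance of $b_1$, $d^2$ is the squared norm of the projection of $b_1$ onto an $(m-k+1)$-dimensional subspace, so $d^2\sim\chi^2_{m-k+1}$.
\begin{itemize}
\item For $k=m$, $d^2\sim\chi^2_1$, whose density behaves like $t^{-1/2}$ near $0$.
\item For $k=m-1$, $d^2\sim\chi^2_2$, whose density tends to $1/2$ at $0$.
\end{itemize}
In either case $\int_0^1 t^{-1}\,p(t)\,dt=\infty$, so $\mathrm E[1/d^2]=+\infty$. Combining with Step 2 gives $E(m)=E(m-1)=+\infty$.

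\textbf{Main obstacle.} The step needing most care is the least-squares case $k=m-1$. We must confirm that the error does not vanish through some cancellation: the residual $R_{m-1}$ has a component along the column space of $B_{m-1}$, and this component is exactly what gets amplified by the small singular value. Conditioning on $B_{m-1}$ and using isotropy of $R_{m-1}$ handles this cleanly. We also need $m\ge 2$, so that $k=m-1\ge1$ is a valid model.
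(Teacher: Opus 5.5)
Your proof is correct and follows essentially the same route as the paper. Both arguments reduce the claim, for $k\in\{m-1,m\}$, to showing $\mathrm E\,\Tr\bigl((A_k^\top A_k)^{-1}\bigr)=+\infty$ with the prefactor $\sum_{j>k}\sigma_j^2>0$, and both bound that trace below by a single diagonal entry distributed as $1/\chi^2_{m-k+1}$ with $m-k+1\in\{1,2\}$. The differences are cosmetic: you do the whitening inline with Tonelli instead of citing Lemma~\ref{lem:diagonal-reduction} and the closed form of Lemma~\ref{lemma8}, and you identify the diagonal entry through the Schur-complement distance-to-span identity and rotation invariance, whereas the paper obtains the identical quantity $1/(R_k)_{kk}^2$ from a QR factorization and the Bartlett decomposition.
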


Combining Theorem~\ref{Theorem1} with Proposition~\ref{prop:boundary-divergence}, we can rule out all model dimensions at and above the measurement threshold, except for the full model. Indeed, Proposition~\ref{prop:boundary-divergence} gives $E(m)=+\infty,$
while Theorem~\ref{Theorem1} implies
$E(n)\le E(k)
\ 
\text{for every } m+1\le k\le n.$
Therefore, 
$E(n)\le E(k)
\
\text{for every } m \le k\le n.$
No model dimension \(m\le k \le n\) can outperform the full model. Moreover, Proposition~\ref{prop:boundary-divergence} also gives
$E(m-1)=+\infty,$
which rules out the endpoint \(k=m-1\) of the overdetermined regime. Hence the only lower-dimensional models that remain to be compared with the full model are $1\le k\le m-2.$

Our next result the strictly overdetermined regime \(1\le k\le m-2\). In this regiem, the reconstruction error admits an exact closed-form expression, which
will later allow us to compare the lower-dimensional model with the full
model.
\begin{lemma}
\label{lemma:closed-form-k-less-m}
Under the same model assumptions as Theorem~\ref{thm:noiseless-optimality},
for every integer \(k\in\{1,\ldots,m-2\}\),
error of the MAP estimator using the model \(\mathcal G_k\) satisfies
\[
    E(k) =
    \left(1+\frac{k}{m-k-1}\right)
    \sum_{i=k+1}^{n}\sigma_i^2 \ge E(n).
\]
\end{lemma}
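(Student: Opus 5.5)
The plan has two independent parts: first compute \(E(k)\) exactly by rotating coordinates and conditioning on the measurement columns that the model can see, then obtain \(E(n)\le E(k)\) from the observation that the full-model reconstruction is the Bayes (MMSE) estimator.

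\textbf{Reduction to diagonal form.} Since \(U\) is orthogonal and \(A\) has i.i.d.\ Gaussian entries, \(AU\) has the same law as \(A\). Also \(\|\cdot\|_2\) is invariant under \(U^\top\), and \(x^\star=G_nz^\star\) has the same law as \(U\Sigma w\) with \(w\sim\mathcal N(0,I_n)\). We may therefore assume \(U=I\), \(\mathcal G_k=[\Sigma_k;0]\), and \(x^\star=\Sigma w\). Split \(A=[A_1\ A_2]\) with \(A_1\in\mathbb R^{m\times k}\), and split \(w=(w_1,w_2)\) accordingly. Write \(\Sigma_{>k}=\operatorname{diag}(\sigma_{k+1},\ldots,\sigma_n)\) and \(s:=\sum_{i>k}\sigma_i^2\). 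Then
\[
y=A_1\Sigma_kw_1+\eta,\qquad \eta:=A_2\Sigma_{>k}w_2 .
\]

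\textbf{Closed form of the least-squares error.} For \(k\le m-2\), \(A_1\) has full column rank almost surely. The least-squares solution then gives
\[
\mathcal G_k\widehat z(k)=\bigl(\Sigma_kw_1+(A_1^\top A_1)^{-1}A_1^\top\eta,\;0\bigr).
\]
Hence
\[
\|\widehat x(k)-x^\star\|_2^2=\|(A_1^\top A_1)^{-1}A_1^\top\eta\|_2^2+\|\Sigma_{>k}w_2\|_2^2 .
\]
The second term has expectation \(s\). For the first term, condition on \(A_1\) and \(w_2\). Then \(\eta=A_2(\Sigma_{>k}w_2)\) is \(\mathcal N(0,\|\Sigma_{>k}w_2\|^2I_m)\) and is independent of \(A_1\). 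Consequently
\[
\mathbb E\|(A_1^\top A_1)^{-1}A_1^\top\eta\|^2=s\,\mathbb E\operatorname{Tr}\bigl((A_1^\top A_1)^{-1}\bigr).
\]
By the inverse-Wishart mean, \(\mathbb E (A_1^\top A_1)^{-1}=I_k/(m-k-1)\), which is finite exactly because \(m-k-1\ge1\). This yields
\[
E(k)=\Bigl(1+\frac{k}{m-k-1}\Bigr)s .
\]
The main technical point is this conditional-independence step, together with justifying the inverse-Wishart moment.

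\textbf{Comparison with \(E(n)\).} Here I would avoid computing \(E(n)\) altogether. Given \(A\), the minimum-norm latent solution is
\[
\widehat z(n)=\mathcal G_n^\top A^\top\bigl(A\mathcal G_n\mathcal G_n^\top A^\top\bigr)^{-1}y,
\]
so
\[
\widehat x(n)=\mathcal G_n\mathcal G_n^\top A^\top\bigl(A\mathcal G_n\mathcal G_n^\top A^\top\bigr)^{-1}y .
\]
This is exactly the Gaussian posterior mean \(\mathbb E[x^\star\mid A,y]\) for the prior \(x^\star\sim\mathcal N(0,\mathcal G_n\mathcal G_n^\top)=\mathcal N(0,G_nG_n^\top)\) and noiseless observation \(y=Ax^\star\). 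The matrix \(A\Sigma^2A^\top\) is invertible almost surely because \(m\le n\).

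The conditional mean minimizes \(\mathbb E\|f(A,y)-x^\star\|^2\) over all measurable estimators \(f\). Indeed, for each \(A\) the decomposition
\[
\mathbb E\|f-x^\star\|^2=\mathbb E\|f-\mathbb E[x^\star\mid A,y]\|^2+\mathbb E\|\mathbb E[x^\star\mid A,y]-x^\star\|^2
\]
holds, and since \(\mathbb E\|x^\star\|^2<\infty\) it remains valid even when the left side is infinite. Because \(\widehat x(k)\) is a function of \((A,y)\), it follows that \(E(n)\le E(k)\). Note that this argument does not use \(k\le m-2\), so it also gives a direct check of the comparison in the other regimes.
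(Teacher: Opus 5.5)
Your proposal is correct. The closed-form part follows the paper's route (Lemmas~\ref{lemma8} and~\ref{lemma9}). You split off the tail energy $\sum_{i>k}\sigma_i^2$, average the least-squares term over the columns the model cannot see to get $\sum_{i>k}\sigma_i^2\,\Tr\bigl((A_1^\top A_1)^{-1}\bigr)$, and finish with the inverse-Wishart mean. Conditioning on $(A_1,w_2)$ is only a different bookkeeping of the same Gaussian averaging.

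The comparison is genuinely different. The paper writes $E(n)=\mathrm E\|(I-P_{\Sigma\Omega})\Sigma\|_F^2$ with $\Omega=A^\top$. It then invokes the Frobenius form of the Halko--Martinsson--Tropp bound (Lemma~\ref{lem:hmt-frob-specialized}: deterministic range-finder inequality, Gaussian moment identity, inverse-Wishart mean), whose right-hand side happens to coincide with $E(k)$. You instead observe that $\widehat x(n)$ is the Gaussian posterior mean, so it is risk-minimal among all estimators measurable in $(A,y)$.

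The two arguments in fact establish the same inequality for each fixed $A$. With $\Omega_1=A_1^\top$ and $\Omega_2=A_2^\top$ one has $\Sigma_{>k}\Omega_2\Omega_1^\dagger=(A_1^\dagger A_2\Sigma_{>k})^\top$. Hence the right-hand side $\|\Sigma_{>k}\|_F^2+\|\Sigma_{>k}\Omega_2\Omega_1^\dagger\|_F^2$ of the deterministic bound is exactly the conditional risk of $\widehat x(k)$ given $A$, and its left-hand side is that of $\widehat x(n)$. The paper's block-projector computation is therefore a linear-algebraic proof of the MMSE inequality against this one competitor.

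What the paper's route buys is an explicit matrix inequality tied to the randomized low-rank approximation literature. It is, however, confined to $k\le m-2$, since it needs $\Omega_1$ of full row rank and a finite inverse-Wishart mean. Your route needs no random-matrix input for the comparison and works for every $k\in[n]$ at once, including the divergent dimensions $k=m-1,m$ and the underdetermined regime. On its own it proves Theorem~\ref{thm:noiseless-optimality}, so neither the monotonicity theorem nor the boundary-divergence proposition is needed for that theorem. Combined with your closed form, it even recovers the expected Frobenius bound of Lemma~\ref{lem:hmt-frob-specialized} in this setting.
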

Lemma~\ref{lemma:closed-form-k-less-m} gives an exact formula for
\(E(k)\) when \(1\le k\le m-2\) and shows that $E(k)\ge E(n).$
Hence the full model is optimal throughout the strictly overdetermined
regime.

Our analysis directly studies reconstruction error in expectation, rather than
establishing a high-probability or asymptotic upper bound. This differs from
much of the standard non-asymptotic compressed sensing literature, where recovery guarantees are often
stated as deterministic consequences of conditions such as the restricted
isometry property or null-space property \cite{candes2005decoding,foucart2013mathematical}, together with high-probability results
showing that random measurement matrices satisfy these conditions \cite{candes2005decoding,candes2006robust,foucart2013mathematical}. By contrast,
in our linear generative setting we derive an exact decomposition of the
expected reconstruction error under the Gaussian measurement ensemble. These expressions make the dependence of
\(E(k)\) on the latent dimension explicit: for \(k<m\), they give a
closed-form formula, while for \(k\ge m\), they yield monotonicity of
the risk. Combining the two regimes shows that the full prior \(G_n\)
minimizes the expected reconstruction error over the entire tunable
linear family. Hence the noiseless linear model serves as a baseline
case in which tunability is available at the level of model selection,
but does not improve the fully expected reconstruction error.
\subsection{Experimental validation}
\label{sec:experimental-validation}

We numerically illustrate Theorem~\ref{thm:noiseless-optimality} for several
representative singular-value spectra. We set \(n=100\) and \(m=20\).
By the diagonal reduction in Lemma~\ref{lem:diagonal-reduction}, it suffices
to take $G_n=\Sigma=\operatorname{diag}(\sigma_1,\ldots,\sigma_n).$
We consider the following five spectra:
\begin{itemize}
    \item Flat spectrum:
    \[
        \sigma_i=1,
        \qquad 1\le i\le n.
    \]

    \item Step spectrum:
    \[
        \sigma_i=
        \begin{cases}
            1, & i\le k_0,\\
            0.1, & i>k_0,
        \end{cases}
        \qquad k_0=\lfloor m/7 \rfloor=2.
    \]

    \item Power-law spectrum:
    \[
        \sigma_i=i^{-\alpha},
        \qquad \alpha=2.
    \]
    \item Exponential spectrum:
    \[
        \sigma_i=e^{-\beta(i-1)},
        \qquad \beta=0.1.
    \]

    \item Linear spectrum:
    \[
    \sigma_i=
        \begin{cases}
            1, & n=1,\\
            1-\varepsilon\frac{i-1}{n-1},\ \varepsilon =1, & o therwise.
        \end{cases}
    \]
\end{itemize}

Rather than solving the reconstruction problem separately for each realization,
we evaluate the expected reconstruction error using the analytical expressions
derived above. For \(1\le k\le m-2\), we compute \(E(k)\) directly from
\eqref{eq:E(k)over}. For \(m+1\le k\le n\), we evaluate the expression in
\eqref{eq:E(k)un} and approximate its remaining expectation over the measurement
matrix using \(T=100\) independent matrices
\(A\in\mathbb R^{m\times n}\) with i.i.d.\
\(\mathcal N(0,1)\) entries. The expectation over the Gaussian latent variable
has already been evaluated analytically in these formulas. The same samples of
\(A\) are used to estimate \(E(n)\). We omit \(k=m-1\) and \(k=m\), since $ E(m-1)=E(m)=+\infty.$
Figure~\ref{fig:exp} displays the singular-value spectra in the top row and the
corresponding excess errors
$  E(k)-E(n)$
in the bottom row, for
$  k\in\{1,\ldots,m-2\}\cup\{m+1,\ldots,n\}.$
A positive excess error means that the model \(\mathcal G_k\) has larger expected
reconstruction error than the full model \(\mathcal G_n\).

\begin{figure}[H]
    \centering
    \includegraphics[width=1.3\linewidth]{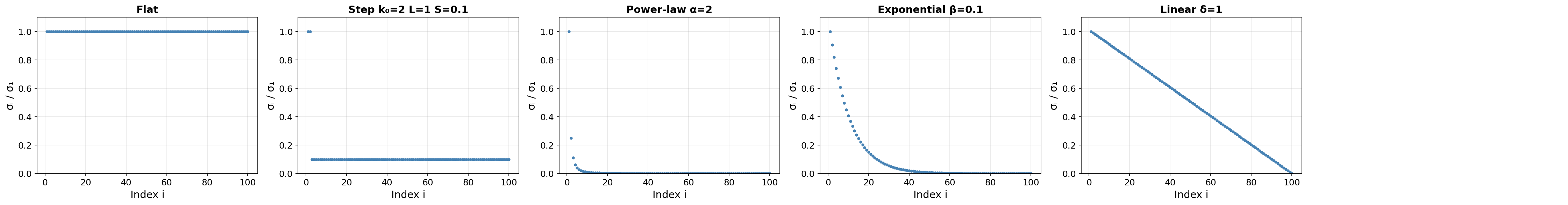}

    \vspace{0.5em}

\includegraphics[width=1.3\linewidth]{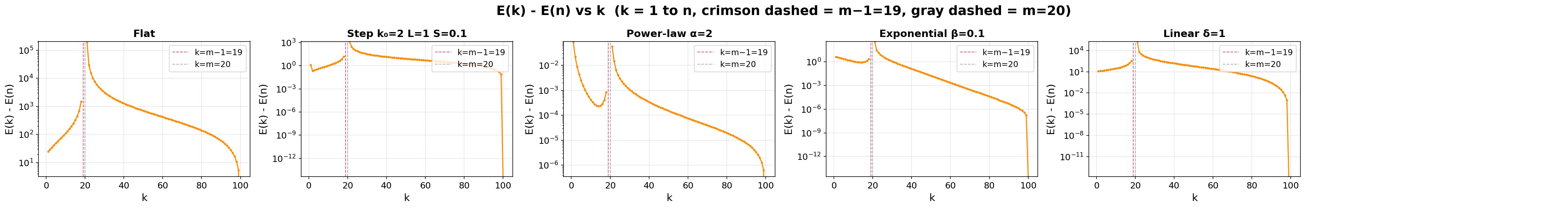}

    \caption{
    Singular-value spectra (top) and corresponding excess reconstruction errors
    \(E(k)-E(n)\) (bottom), with \(n=100\) and \(m=20\).
    The dimensions \(k=m-1\) and \(k=m\), indicated by vertical dashed lines,
    are omitted because their expected reconstruction errors are infinite.
    }
    \textbf{Alt text:}
    Two rows of plots showing five singular-value spectra (top) and the corresponding excess reconstruction errors $E(k)-E(n)$ versus latent dimension $k$ (bottom). Vertical dashed lines mark $k=m-1$ and $k=m$.
    \label{fig:exp}
\end{figure}

Across all five spectra, $E(k)-E(n)$ is nonnegative, in agreement with
Theorem~\ref{thm:noiseless-optimality}. For some spectra, \(E(k)\) is
non-monotone and may exhibit a U-shaped profile in the strictly overdetermined
regime \(1\le k\le m-2\). Nevertheless, these local minima remain above
\(E(n)\). In the underdetermined regime \(m+1\le k\le n\), the error decreases
monotonically toward the full-model error. Thus, although \(E(k)\) can vary non-monotonically with the dimension over overdetermined regime, in all of the experiments reported above, the minimum reconstruction error is attained by the full model \(k=n\).

\subsection{Discussion}
The results of this paper provide a complete characterization of the effect of model dimension in the noiseless linear Gaussian setting. For \(1\leq k\leq m-2\), the closed-form expression for \(E(k)\) yields the direct comparison \(E(k)\geq E(n)\). At the two critical dimensions \(k=m-1\) and \(k=m\), the expected reconstruction error diverges, while for \(k\geq m+1\), \(E(k)\) is finite and monotonically decreasing. Combining these three regimes gives $E(n)\leq E(k),
1\leq k\leq n.$
Thus, although the dependence of \(E(k)\) on the dimension may be locally non-monotone, the full model is globally optimal within the nested family of linear generative priors considered here.

An important consequence is that effective low-dimensionality of the signal distribution is not, by itself, sufficient to produce a tunability benefit. The full-model optimality result holds for every admissible decreasing singular-value spectrum, including spectra in which most of the signal variance is concentrated in a small number of leading directions. Hence, even when the full model is effectively low-dimensional, truncating the weak spectral directions cannot reduce the fully averaged reconstruction error below that of the full prior. In this setting, the bias introduced by restricting the signal model is never compensated by a sufficient reduction in reconstruction error.

These conclusions also clarify the scope of the linear theory. They show that tunability observed in more general inverse problems cannot be explained solely by spectral concentration or by the underdetermined nature of the measurement system. Rather, such behavior must arise from mechanisms that are absent from the present noiseless linear Gaussian model. In particular, a tunability benefit may become possible when the generative model is nonlinear, when the measurement map is nonlinear, or when both forms of nonlinearity are present. Other departures from the present setting, including measurement noise or algorithm-dependent effects, may also alter the dependence of reconstruction error on model complexity. Determining which of these mechanisms is responsible for tunability, and how it interacts with the choice of latent dimension, remains an important direction for future work.
\subsection{Organization of the paper}
The remainder of the paper is organized as follows. 
Subsection~\ref{sec:notations} introduces the notation used throughout the paper.
In Section~\ref{sec:Architecture of the Proof}, we present the architecture of
the proof of Theorem~\ref{thm:noiseless-optimality}. The proof is divided into
three regimes. Section \ref{secun}, we consider  underdetermined regime
\(m+1\le k\le n\), we prove that \(E(k)\) is monotone decreasing in \(k\).
In section \ref{secov}, it is a strictly overdetermined regime \(1\le k\le m-2\),
we derive a closed-form expression for \(E(k)\) and compare it with the
full-model error. Finally, in Section \ref{sec:boundary-divergence} we consider the boundary dimensions \(k=m-1\) and \(k=m\), we show that
the expected reconstruction error diverges.
\subsection{Notation}
\label{sec:notations}We introduce notation that will be used throughout the proofs of the
monotonicity result for \(k\ge m\), Theorem~\ref{Theorem1}, and the
closed-form risk formula for \(k<m\), Lemma~\ref{lemma:closed-form-k-less-m}.
This notation is also used in the proof of
Theorem~\ref{thm:noiseless-optimality}, which combines the two regimes
to show that the full model minimizes the expected reconstruction error
over the entire tunable linear family.
Let $[n] = \{1,\dots,n\}$. For $A \in \R^{m \times n}$, $k \in [n]$, we will write $A_k$ as the first $k$ columns of $A$, $\widetilde{A_k}$ as the last $n-k$ columns of $A$. Thus $A =\Big[A_k,\widetilde{A_k}\Big]$. 
Let $a_i \in \R^m$ be the $i$-th column of $A$. In the case of a diagonal $\mathrm{G}_k$ in Theorem \ref{Theorem1} we can write $\mathrm{G}_n = \begin{pmatrix}
   G_k & 0\\
   0 & \widetilde{G_k}
\end{pmatrix}$, where $\mathrm{G}_k=diag(\sigma_1,\dots,\sigma_k)$, $\widetilde{G_k}=diag(\sigma_{k+1},\dots,\sigma_n)$.  For any  $M \in \R^{n \times k}$ with $k \leq n$, suppose  $M$ is full rank; let $P_{M^T} = M^T\Big(MM^T\Big)^{-1}M$ be the orthogonal projection on Range($\mathrm{M})$. Abusing notation, when we write $(M_1-M_2)_{ii}$, it means $(M_1)_{ii}-(M_2)_{ii}$.
\section{Proof of the main theorem}\label{sec:Architecture of the Proof}
The proof of Theorem~\ref{thm:noiseless-optimality} has three components. 
First, we reduce the analysis to diagonal generators in Section \ref{sec:diag}. This reduction
stated in Lemma~\ref{lem:diagonal-reduction}, follows from the rotational
invariance of the Gaussian measurement operator and shows that \(E(k)\)
depends only on the singular values of \(G_n\). Consequently, it suffices to
study $G_n=\Sigma=\operatorname{diag}(\sigma_1,\ldots,\sigma_n).$
Second, we analyze the expected reconstruction error across the different
model complexity regimes, and treating the two dimensions adjacent to the
measurement threshold separately.
In Section~\ref{secun}, for \(k\ge m \), we derive an exact
decomposition that yields the monotonicity of \(E(k)\), 
this monotonicity result follows from an explicit analysis of the projection structure 
induced by the measurement operator. Hence the full model \(G_n\) minimizes
the expected reconstruction error within the regime \(k\ge m\). 
In
Section~\ref{secov}, for \(k<m\), we derive the corresponding decomposition
and find the closed formula of expected reconstruction error. This formula makes explicit how the reconstruction error depends on the
latent dimension and the singular-value spectrum in the regime where
the number of measurements exceeds the latent dimension. Finally, in Section~\ref{sec:boundary-divergence}, we complete the proof by
handling the two remaining critical dimensions \(k=m-1\) and \(k=m\), where the
corresponding inverse Gaussian terms have non-integrable tails.
\subsection{Reduction to diagonal generators}\label{sec:diag}
\begin{lemma}\label{lem:diagonal-reduction}
Let \(G_n=U\Sigma V^\top\) and let \(\mathcal G_k=U_k\Sigma_k\). Under
Gaussian measurements, the expected reconstruction error \(E(k)\) depends
only on the singular values \((\sigma_1,\ldots,\sigma_n)\). Consequently, it
suffices to prove Theorem~\ref{thm:noiseless-optimality} in the diagonal case
\[   G_n=\Sigma=\operatorname{diag}(\sigma_1,\ldots,\sigma_n).
\]
\end{lemma}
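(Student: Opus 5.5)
We reduce to the diagonal case by a change of variables in both the latent space and the measurement operator, using the orthogonal invariance of the Gaussian ensemble. The first step is to remove \(V\) from the data-generating model. Since \(z^\star\sim\mathcal N(0,I_n)\), the vector \(w^\star:=V^\top z^\star\) is again \(\mathcal N(0,I_n)\) and independent of \(A\). We can then write \(x^\star=U\Sigma w^\star\), so that
\[
\mathcal G_n z' = U\Sigma z'
\]
produces the same joint law of \((A,x^\star)\). The matrix \(V\) therefore plays no role.

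Next we absorb \(U\). Set \(\widetilde A:=AU\). Because the rows of \(A\) are i.i.d.\ \(\mathcal N(0,I_n)\) and \(U\) is orthogonal, \(\widetilde A\) again has i.i.d.\ \(\mathcal N(0,1)\) entries and is independent of \(w^\star\). With \(E_k\in\mathbb R^{n\times k}\) the first \(k\) columns of \(I_n\), we have \(U_k=UE_k\), and hence
\[
A\mathcal G_k = AU E_k\Sigma_k=\widetilde A\,E_k\Sigma_k=\widetilde A\,\Sigma^{(k)},
\qquad
y=\widetilde A\Sigma w^\star,
\]
where \(\Sigma^{(k)}:=E_k\Sigma_k\) is the rank-\(k\) diagonal truncated generator. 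Both optimization problems in \eqref{eq0} depend on the data only through \(A\mathcal G_k\) and \(y\). The minimizer \(\widehat z(k)\) is the minimum-norm solution for \(k\ge m\) and the least-squares solution for \(k\le m\), and in both cases it is almost surely uniquely given by a pseudo-inverse formula. It is therefore the same measurable function \(\widehat z(k)=\Phi_k(\widetilde A\Sigma^{(k)},\widetilde A\Sigma w^\star)\) as in the diagonal problem with data \((\widetilde A,w^\star)\).

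Finally we compare errors. By orthogonality of \(U\),
\[
\|\mathcal G_k\widehat z(k)-x^\star\|_2=\|U(\Sigma^{(k)}\widehat z(k)-\Sigma w^\star)\|_2=\|\Sigma^{(k)}\widehat z(k)-\Sigma w^\star\|_2 .
\]
Taking expectations over the pair \((\widetilde A,w^\star)\), which has the same law as \((A,z^\star)\), gives exactly the error \(E(k)\) of the diagonal model \(G_n=\Sigma\) with generators \(\Sigma^{(k)}\). This identity holds as an equality of extended-real nonnegative expectations, so it covers the infinite cases as well.

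The only delicate point is that the estimator must be a well-defined, almost surely unique function of \((A\mathcal G_k,y)\), so that the change of variables passes through the argmin. This holds because \(A\mathcal G_k\) has full rank \(\min(m,k)\) almost surely: \(\widetilde A\Sigma^{(k)}\) is Gaussian times a full-column-rank matrix. On that full-measure event the minimizer is given explicitly by the Moore–Penrose pseudo-inverse, and that formula is equivariant under the substitution above.
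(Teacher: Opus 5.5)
Your proof is correct and is essentially the paper's argument. You remove $U$ by orthogonal invariance of the Euclidean norm, observe that $\widehat z(k)$ computed from $(A,U_k\Sigma_k)$ equals the one computed from $(AU,\Sigma^{(k)})$, and use that $AU$ has the same law as $A$. You are more careful than the paper on three points it leaves implicit: you explicitly eliminate $V$ via $V^\top z^\star\sim\mathcal N(0,I_n)$ (the paper absorbs this by writing the error with $\mathcal G_n z^\star$), you check that $(AU,V^\top z^\star)$ has the same joint law as $(A,z^\star)$, and you note that the identity holds for extended-real expectations and almost-surely unique minimizers.
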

\allowdisplaybreaks{
\begin{proof}[Proof of Lemma \ref{lem:diagonal-reduction}.]
Without loss of generality, it suffices to consider the case when $\mathrm{G}_n$ is diagonal, that is $U=V=I$. To see this, let $\mathrm{G}_n = U\Sigma V^T $ and $\mathcal{G}_k = U_k\Sigma_k$.
    First we define \begin{align*}
   \hat{z}(k,A,\mathcal{G}_k) = 
\begin{cases} 
\underset{z_{k} \in \R^k}{\argmin}\ \frac{1}{2}\| z_k \|_2^2 \quad \text{s.t.} \quad y = A\mathcal{G}_kz_k, & \text{for } k \geq m ,  \\
\underset{z_{k} \in \R^k}{\argmin} \ \frac{1}{2} \left\|y- A\mathcal{G}_k z_k  \right\|_2^2, & \text{for }  k < m .
\end{cases} \ 
\end{align*}
Observe that $\hat{x}(k)= \mathcal{G}_k\hat{z}(k,A,U_k\Sigma_k) =U_k\Sigma_k\hat{z}(k,A,U_k\Sigma_k).$
The expected reconstruction error satisfies: 
\begin{align}
    \mathrm E(k) &:= \mathrm E_A\mathrm E_{x^\star\thicksim \mathcal{G}_n}\left\|\hat{x}(k) -x^\star\right\|_{2}^{2} \nonumber\\
    & = \mathrm E_A\mathrm E_{z^\star\thicksim \mathcal{N}(0,I_n)}\left\|\mathcal{G}_k\hat{z}(k,A,U_k\Sigma_k)-\mathcal{G}_n z^\star\right\|_{2}^{2}\nonumber\\
    & = \mathrm E_A\mathrm E_{z^\star}\left\|U_k\Sigma_k\hat{z}(k,A,U_k\Sigma_k) -U\Sigma z^\star\right\|_{2}^{2}\nonumber\\
    & = \mathrm E_A\mathrm E_{z^\star}\left\|U\binom{\Sigma_k}{0}\hat{z}(k,A,U_k\Sigma_k) -U\Sigma z^\star\right\|_{2}^{2} \nonumber \\
    & = \mathrm \mathrm E_A\mathrm E_{z^{\star}}\left\|\binom{\Sigma_k}{0}\hat{z}(k,A,U_k\Sigma_k) -\Sigma z^\star\right\|_{2}^{2}, \label{eq1}
\end{align}
where the second-to-last equation is given by the unitary invariance of the $\ell_2$ norm.  It remains to show that $\mathrm E(k)$ does not depend on $U$.\\
Observe $\hat{z}(k,A,U_k\Sigma_k) = \hat{z}(k,AU,\binom{\Sigma_k}{0}).$ Thus,
\begin{align*}
    \mathrm  E(k) &=\mathrm E_A\mathrm E_{z^{\star}}\left\|\binom{\Sigma_k}{0}\hat{z}(k,A,U_k\Sigma_k) -\Sigma z^\star\right\|_{2}^{2}\\
    & = \mathrm E_A\mathrm E_{z^\star}\left\|\binom{\Sigma_k}{0}\hat{z}(k,AU,\binom{\Sigma_k}{0}) -\Sigma z^\star\right\|_{2}^{2}\\
    & = \mathrm E_A\mathrm E_{z^\star}\left\|\binom{\Sigma_k}{0}\hat{z}(k,A,\binom{\Sigma_k}{0}) -\Sigma z^\star\right\|_{2}^{2}.\\
\end{align*}
where the last equality follows because $A$ and $AU$ are equal in distribution due to the rotational invariance of the Gaussian. This concludes the proof that without loss of generality, we can take 
$U=V=I$, which is the case of a diagonal $\mathrm{G}_n = diag(\sigma_1,\dots,\sigma_n)$.
By Theorem \ref{Theorem1} in Section \ref{sec:standing}, $\mathrm E(k)$ decreases monotonically over $ m\leq k \leq n$.
By Lemma \ref{lemma:closed-form-k-less-m} in Section \ref{sec:standing} , we show that there exist no$k$ over $ 1\leq k \leq m$ the reconstruction error $\mathrm E(k)$ beats the $\mathrm E(n)$.
\end{proof}}
\subsection{Proof of Main Result}
\begin{proof}[Proof of Theorem \ref{thm:noiseless-optimality}.]
Fix a spectrum $\sigma_1\ge\cdots\ge\sigma_n>0$.
We prove that \(E(n)\le E(k)\) for every \(k\in \{1,\dots,n\}\) by splitting the range of \(k\) into three regimes around the measurement threshold $m$. Let
\[
C(k):=\Bigl(1+\frac{k}{m-k-1}\Bigr)\sum_{i=k+1}^{n}\sigma_i^2 .
\]

\noindent\textbf{Regime 1: $m<k\le n$.}
Here $A\mathcal G_k\in\mathbb R^{m\times k}$ is wide, and by
Theorem \ref{Theorem1}, $E(k)$ is monotonic decreasing on
$\{m,\dots,n\}$. Hence its minimum over this range is attained at the right endpoint
$k=n$, so
\[
E(k)\;\ge\;E(n)\qquad\text{for all }m<k\le n.
\]

\noindent\textbf{Regime 2: $1\le k\le m-2$.}
\[
E(k)=C(k)=\Bigl(1+\frac{k}{m-k-1}\Bigr)\sum_{i=k+1}^{n}\sigma_i^2 .
\]
Moreover, by Lemma~\ref{lemma:closed-form-k-less-m}, it provides a bound on the full-model error for every admissible internal rank $k$ satisfying $m-k \ge 2$,
\[
E(n)\;\le\;\Bigl(1+\frac{k}{m-k-1}\Bigr)\sum_{i=k+1}^{n}\sigma_i^2\;=\;C(k).
\]
Combining the two displays gives $E(n)\le C(k)=E(k)$ for all $1\le k\le m-2$.

\noindent\textbf{Regime 3: $k\in\{m-1,\,m\}$.}
These two boundary dimensions are handled by Proposition~\ref{prop:boundary-divergence}, which states that
\[
E(m-1)=+\infty,
\
E(m)=+\infty.
\]
Since \(E(n)\) is finite by Theorem~\ref{Theorem1}, we immediately obtain
\[
E(n)\le E(k),
\qquad
k\in{m-1,m}.
\] 

\noindent The three regimes
$1\le k\le m-2,
\
k\in\{m-1,m\},
\
m+1\le k\le n$ exhaust $\{1,\dots,n\}$, and in each regime we obtained
$E(n)\le E(k)$. Since the spectrum was arbitrary, the inequality holds for every
strictly decreasing admissible spectrum, which completes the proof.
\end{proof}
\subsection{Main Result for Underdetermined System}\label{secun}
Recall without loss of generality, we have assumed that $\mathrm{G}_n = diag(\sigma_1,\dots,\sigma_n)$, then $\mathcal{G}_n = \mathrm{G}_n$. In this case $\mathcal{G}_k = \binom{\mathrm{G}_k}{0} \in \R^{n \times k}$ where $\mathrm{G}_k= diag(\sigma_1,\dots,\sigma_k)$. For $k \geq m$, 
\begin{align}
&\hat{x}(k)= \mathcal{G}_k\hat{z}(k) \nonumber \\
    &\hat{z}(k)=\argmin_{z_k \in \R^k}\ \frac{1}{2} \| z_k \|_2^2 \quad \text{s.t.} \quad y = A\mathcal{G}_k z_k. \label{eq2}
\end{align}
\begin{lemma}\label{Under_E(k)}
 Suppose $x^\star \thicksim \mathcal{G}_n$, $\mathcal{G}_n = diag(\sigma_1,\dots,\sigma_n)$ has singular values $\sigma_1 \geq \sigma_2 \geq \dots \geq \sigma_n >0 $. Let $G_k = diag(\sigma_1,\dots,\sigma_k)$. Let $A \in \R ^{m\times n}$, $A_{ij}\stackrel{i.i.d}{\sim} \mathcal{N}(0,1)$. For fixed $A$, for $k \in [m,n]$,  and the MAP estimator from $\eqref{eq2}$ obeys 
\begin{align}
    \mathrm E_{x^\star \sim \mathcal{G}_n} \big\| \hat{x}(k) - x^\star \big\|_2^2 = \Big\| G_k P_{G_k^T A_k^T} - G_k \Big\|_F^2 + \big\| \widetilde{G}_k \big\|_F^2 \nonumber + \Big\| G_k (A_k G_k)^\dagger \widetilde{A}_k \widetilde{G}_k \Big\|_F^2,
\end{align}
furthermore,
\begin{align*}
    \mathrm E(k) &=\mathrm E_A \mathrm E_{x^\star \sim \mathcal{G}_n} \big\| \hat{x}(k) - x^\star \big\|_2^2 \\
    &= \mathrm E_{A_k} \left( \| G_n \|_F^2 - \Big\langle G_k^2, P_{G_k^T A_k^T} \Big\rangle + \Big\| G_k (A_k G_k)^\dagger \Big\|_F^2 \sum_{i = k+1}^n \sigma_i^2 \right)
\end{align*}
where $P_{G_k^T A_k^T} = G_k^T A_k^T (A_k G_k G_k^T A_k^T)^{-1} A_k G_k$.
\end{lemma}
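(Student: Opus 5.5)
Our plan is to write the minimum-norm solution in closed form and split the error along the block structure $A=[A_k,\widetilde A_k]$, $G_n=\operatorname{diag}(G_k,\widetilde G_k)$, $z^\star=(z_1,z_2)$ with $z_1\in\mathbb R^k$, $z_2\in\mathbb R^{n-k}$. For $k\ge m$, the matrix $A_kG_k\in\mathbb R^{m\times k}$ has full row rank $m$ almost surely, because $A_k$ is Gaussian and $G_k$ is invertible. Hence $\widehat z(k)=(A_kG_k)^\dagger y$ with $(A_kG_k)^\dagger=G_k^TA_k^T(A_kG_kG_k^TA_k^T)^{-1}$. Since $y=A_kG_kz_1+\widetilde A_k\widetilde G_kz_2$, we get
\[
\widehat x(k)-x^\star=\binom{G_kP_{G_k^TA_k^T}z_1-G_kz_1+G_k(A_kG_k)^\dagger\widetilde A_k\widetilde G_kz_2}{-\widetilde G_kz_2},
\]
using $(A_kG_k)^\dagger A_kG_k=P_{G_k^TA_k^T}$.

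\textbf{First display.} We take $\mathrm E_{z^\star}$ with $A$ fixed. Since $z_1$ and $z_2$ are independent, centered and isotropic, the cross terms vanish, and $\mathrm E\|Mz\|^2=\|M\|_F^2$. This gives $\|G_kP-G_k\|_F^2+\|G_k(A_kG_k)^\dagger\widetilde A_k\widetilde G_k\|_F^2+\|\widetilde G_k\|_F^2$, which is the stated identity.

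\textbf{Second display.} We simplify each term and then average over $A$.
\begin{itemize}
\item Since $P=P_{G_k^TA_k^T}$ is symmetric and idempotent, $\|G_k(P-I)\|_F^2=\operatorname{Tr}(G_k^2(I-P))=\|G_k\|_F^2-\langle G_k^2,P\rangle$.
\item Adding $\|\widetilde G_k\|_F^2$ gives $\|G_n\|_F^2-\langle G_k^2,P\rangle$.
\item For the third term, set $B=G_k(A_kG_k)^\dagger$, which depends only on $A_k$ and is independent of $\widetilde A_k$. Then
\[
\mathrm E_{\widetilde A_k}\|B\widetilde A_k\widetilde G_k\|_F^2=\operatorname{Tr}\bigl(B\,\mathrm E[\widetilde A_k\widetilde G_k^2\widetilde A_k^T]\,B^T\bigr)=\Bigl(\sum_{i>k}\sigma_i^2\Bigr)\|B\|_F^2,
\]
because $\mathrm E[\widetilde A_k\widetilde G_k^2\widetilde A_k^T]=\operatorname{Tr}(\widetilde G_k^2)I_m$.
\end{itemize}
Applying the tower property $\mathrm E_A=\mathrm E_{A_k}\mathrm E_{\widetilde A_k}$ then yields the formula.

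\textbf{Main obstacle.} The algebra is routine; the delicate part is justifying the use of Fubini/Tonelli. All terms are nonnegative, so Tonelli applies and iterated expectations are valid in the extended reals even before finiteness is known. We also need almost-sure invertibility of $A_kG_kG_k^TA_k^T$, which holds because the Gaussian $A_k$ has rank $m$ almost surely when $k\ge m$. Finiteness itself (e.g.\ of $\mathrm E\|B\|_F^2$, which involves an inverse Wishart-type moment) is not claimed here and can be deferred to Theorem~\ref{Theorem1}.
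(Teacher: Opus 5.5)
Your proposal is correct and follows essentially the same route as the paper: the closed-form minimum-norm solution $\widehat z(k)=(A_kG_k)^\dagger y$, the block split of the error with the cross term killed by independence of $z_k^\star$ and $\widetilde z_k^\star$ (Lemma~\ref{Lemma10}), and then the projector identity together with $\mathrm E[\widetilde A_k\widetilde G_k^2\widetilde A_k^\top]=\operatorname{Tr}(\widetilde G_k^2)I_m$, exactly as in Lemma~\ref{lemma2}. Your Tonelli remark makes the identity valid in the extended reals and is a worthwhile addition that the paper leaves implicit. It is genuinely needed: $\mathrm E_{A_k}\|G_k(A_kG_k)^\dagger\|_F^2$ is infinite not only at $k=m$ but also at $k=m+1$, since it is at least $(\sigma_k^2/\sigma_1^2)\,\mathrm E\operatorname{Tr}((A_kA_k^\top)^{-1})$ with $A_kA_k^\top\sim\mathcal W_m(I,m+1)$. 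Hence for $m+1<n$ one gets $E(m+1)=+\infty$, and finiteness cannot be deferred to Theorem~\ref{Theorem1} at that dimension.
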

\begin{proof}[Proof of Lemma \ref{Under_E(k)}.]
    By \eqref{eq2}, the constraint that $y = A\mathcal{G}_k\hat{z}(k)$. We write $z^\star = \binom{z^\star_k}{\widetilde{z}^\star_k}$, where $z^\star_k \in \mathbb{R}^k, \ \widetilde{z}^\star_k \in \mathbb{R}^{n-k}$. As $A\mathcal{G}_k = A_kG_k$ has rank $m$ with probability 1, the solution to \eqref{eq2} is 
\(
\hat{z}(k) = (A_kG_k)^{\dagger}AG_nz^\star,
\)
where $(A_kG_k)^{\dagger} = G_k^TA_k^T(A_kG_kG_k^TA_k^T)^{-1}$. \\
The expected reconstruction error satisfies: 
{\allowdisplaybreaks
\begin{align}
\mathrm E_{x^\star \sim \mathcal{G}_n} \left\| \hat{x}(k) - x^\star \right\|_{2}^{2}
& = \mathrm E_{z^\star \sim \mathcal{N}(0,I)} \left\| \mathcal{G}_k \hat{z}(k) - \mathcal{G}_n z^\star \right\|_{2}^{2} \nonumber \\
& =\mathrm  E_{z^\star} \left\| \binom{G_k}{0} \hat{z}(k) - G_n z^\star \right\|_{2}^{2} \nonumber \\
& =\mathrm  E_{z^\star}\left\| \binom{G_k}{0} \hat{z}(k) - \begin{pmatrix} G_k & 0 \\ 0 & \widetilde{G_k} \end{pmatrix} z^\star \right\|_{2}^{2} \nonumber\\
& = \mathrm E_{z^\star} \left\| \begin{pmatrix} G_k \hat{z}(k) \\ 0 \end{pmatrix} - \begin{pmatrix} G_k z^\star_{k} \\ \widetilde{G_k} \widetilde{z^\star_{k}} \end{pmatrix} \right\|_2^2 \nonumber\\
& =\mathrm  E_{z^\star} \left\| G_k \hat{z}(k) - G_k z^\star_k \right\|_2^2 + E_{z^\star} \left\| \widetilde{G_k} \widetilde{z^\star_{k}} \right\|_2^2 \nonumber \\
& =\mathrm  E_{z^\star} \left\| G_k (A_k G_k)^{\dagger} A G_n z^\star - G_k z^\star_k \right\|_2^2 + \| \widetilde{G_k} \|_F^2  \label{eq3} \\
& =\mathrm  E_{z^\star} \left\| G_k (A_k G_k)^{\dagger} (A_k G_k z^\star_k + \widetilde{A_k} \widetilde{G_k} \widetilde{z^\star_{k}}) - G_k z^\star_k \right\|_2^2 + \| \widetilde{G_k} \|_F^2 \nonumber\\
& =\mathrm  E_{z^\star} \left\| (G_k (A_k G_k)^{\dagger} A_k - I_k) G_k z^\star_{k} + G_k (A_k G_k)^{\dagger} \widetilde{A_k} \widetilde{G_k} \widetilde{z^\star_{k}} \right\|_2^2 + \| \widetilde{G_k} \|_F^2 \nonumber\\
& =\mathrm  E_{z^\star} \left\| (G_k (A_k G_k)^{\dagger} A_k - I_k) G_k z^\star_{k} \right\|_2^2 \nonumber \\
& \quad + 2\mathrm  E_{z^\star} \Big\langle \big(G_k (A_k G_k)^{\dagger} A_k - I_k\big) G_k z^\star_{k}, G_k (A_k G_k)^{\dagger} \widetilde{A_k} \widetilde{G_k} \widetilde{z^\star_{k}} \Big\rangle \nonumber\\
& \quad +\mathrm  E_{z^\star} \left\| G_k (A_k G_k)^{\dagger} \widetilde{A_k} \widetilde{G_k} \widetilde{z^\star_{k}} \right\|_2^2 + \| \widetilde{G_k} \|_F^2\nonumber \\
& = \Big\| G_k (A_k G_k)^{\dagger} A_k G_k - G_k\Big \|_F^2 + \Big\| G_k (A_k G_k)^{\dagger} \widetilde{A_k} \widetilde{G_k} \Big\|_F^2 + \| \widetilde{G_k} \|_F^2 \label{eq4} \\
& = \Big\| G_k P_{G_k^T A_k^T} - G_k \Big\|_F^2 + \big\| \widetilde{G_k} \big\|_F^2 + \Big\| G_k (A_k G_k)^{\dagger} \widetilde{A_k} \widetilde{G_k} \Big\|_F^2  \label{eq5}
\end{align}
$\eqref{eq3}$ follows by Lemma \ref{Lemma10}, and $\eqref{eq4}$ follows by Lemma \ref{Lemma10} and the fact that $z_k^\star $, and $\widetilde{z_k^\star }$ are independent with zero mean. 
By Lemma \ref{lemma2},
we get:
\begin{align}
\mathbb{E}(k) &= \|G_n\|_F^2 - \mathbb{E}_{A} \left\langle G_k^2, P_{G_k^T A_k^T} \right\rangle + \mathbb{E}_{A} \left\| G_k (A_k G_k)^{\dagger} \widetilde{A}_k \widetilde{G}_k \right\|_F^2 \nonumber \\
&= \|G_n\|_F^2 - \mathbb{E}_{A_k} \left\langle G_k^2, P_{G_k^T A_k^T} \right\rangle + \mathbb{E}_{A_k} \left\| G_k (A_k G_k)^{\dagger} \right\|_F^2 \sum_{i=k+1}^n \sigma_i^2. \label{eq:E(k)un}
\end{align}
}
\end{proof}
We now return to the proof of Theorem~\ref{Theorem1}. 
After deriving the exact expression for \(E(k)\) in the regime \(k\ge m\), 
we verify that this expression is monotone decreasing in the selected latent 
dimension \(k\).
\begin{proof}[Proof of Theorem \ref{Theorem1}.]

For fixed $A$, let $R_k \coloneq \mathrm  E_{x^\star\thicksim \mathcal{G}_n}\big\| \hat{x}(k) -x^\star \big\|_{2}^{2}$, where $\hat{x}(k)$ is given by \eqref{eq3}. 
By definition,
$\mathrm E(k)=\mathrm E_A\big[R_k\big]$. 
Thus, for $m <k \leq n$,
{\allowdisplaybreaks
\begin{align*}
    \mathrm E(k) - \mathrm E(k-1) =\mathrm E_A\big[R_k\big]-\mathrm E_A\big[R_{k-1}\big]
=\mathrm E_A\big[R_k-R_{k-1}\big]
     =\mathrm E_{A_{k-1}}\mathrm E_{\widetilde{A_{k-1}}}\big[R_k-R_{k-1}\big].  
\end{align*}}
In order to show $E(k) - E(k-1) \leq 0$, we will show that for all full rank $A_{k-1}$, 
\begin{align}
  \mathrm  E_{\widetilde{A_{k-1}}}\big[R_k-R_{k-1}\big] \leq 0. \label{eq6}
\end{align}
Since $\widetilde{A_{k-1}} =[a_k, \widetilde{A_{k}}]$, let $f_k \coloneq \mathrm E_{\widetilde{A_{k}}}\big[R_k\big]$. Therefore, we have
\begin{align}
   \mathrm  E_{\widetilde{A_{k-1}}}\big[R_k-R_{k-1}\big] = \mathrm E_{a_k}\mathrm E_{\widetilde{A_{k}}}\big[R_k-R_{k-1}\big] \label{eq7}
\end{align}
\begin{align*}
   \mathrm  E_{a_k}(f_k) - f_{k-1} =  \mathrm E_{a_k}(f_k - f_{k-1}).
\end{align*}
Hence, for full rank fixed $A_{k-1}$, it suffices to show that , 
\begin{align}
 \mathrm  E_{a_k}(f_k - f_{k-1}) \leq 0. \label{eq8} 
\end{align}
The remainder of this proof establishes $\eqref{eq8}$.\\
By Lemma \ref{Under_E(k)}, $f_k$ is given by
\begin{align*}
    f_k = \big\| G_n\big\|_F^2-\big\langle G_k^2,P_{G_k^TA_k^T} \big\rangle+\big\| G_k\big(A_kG_k\big)^{\dagger}\big\|_F^2\sum_{i = k+1}^n\sigma_i^2 = \big\| G_n\big\|_F^2-T_1(k)+T_2(k)\sum_{i = k+1}^n\sigma_i^2,
\end{align*}
where $T_1(k)=\big\langle G_k^2,P_{G_k^TA_k^T}\big\rangle$, $T_2(k)=\big\| G_k\big(A_kG_k\big)^{\dagger}\big\|_F^2$\\
Thus,
{\allowdisplaybreaks
\begin{align}
    f_k - f_{k-1}& = T_1(k-1)- T_1(k)-T_2(k-1)\sigma_k^2+\Big(T_2(k)-T_2(k-1)\Big)\sum_{i=k+1}^n\sigma_i^2 .\label{eq9}
\intertext{Define $M_{k} = A_kG_k = \Big[M_{k-1}, m_k\Big]$. By Lemma \ref{lemma2}, we know $$T_1(k)-T_1(k-1)=\sum_{i=1}^{k-1}\big(\sigma_i^2-\sigma_k^2\big)\big(P_{G_k^TA_k^T}-P_{G_{k-1}^TA_{k-1}^T}\big)_{ii}=\sum_{i=1}^{k-1}\big(\sigma_i^2-\sigma_k^2\big)\big(P_{M_k^T}-P_{M_{k-1}^T}\big)_{ii} \quad .$$}
\intertext{By Lemma \ref{lemma3}, we know
$$T_2(k)-T_2(k-1) = \sum_{i=1}^{k-1}\big(\sigma_i^2-\sigma_k^2\big)\Big(M_k^{\dagger}\big(M_k{^{\dagger}}\big)^T-M_{k-1}^{\dagger}\big(M_{k-1}{^{\dagger}}\big)^T\Big)_{ii}+\sigma_k^2\bigg(\Tr\Big(M_{k}^{\dagger}\big(M_{k}^{\dagger}\big)^T\Big)-\Tr \Big(M_{k-1}^{\dagger}\big(M_{k-1}^{\dagger}\big)^T\Big)\bigg).$$}
\intertext{By \eqref{eq23} we induce $T_2(k-1)$, 
plug  $T_1(k-1)-T_1(k), T_2(k)-T_2(k-1)$, and $T_2(k-1)$'s expressions into the equation \eqref{eq9}, we get}
    f_k - f_{k-1} &=- \sum_{i=1}^{k-1}\big(\sigma_i^2-\sigma_k^2\big)\Big(P_{M_k^T}-P_{M_{k-1}^T}\Big)_{ii}\nonumber \\
    & \quad - \sum_{i=1}^{k-1}\sigma_i^2\Big(M_{k-1}^{\dagger}\big(M_{k-1}{^{\dagger}}\big)^T\Big)_{ii}\sigma_k^2 \nonumber\\&\quad +\sum_{i=1}^{k-1}\big(\sigma_i^2-\sigma_k^2\big)\Big(M_{k}^{\dagger}\big(M_{k}{^{\dagger}}\big)^T-M_{k-1}^{\dagger}\big(M_{k-1}{^{\dagger}}\big)^T\Big)_{ii}\sum_{i=k+1}^n\sigma_i^2\nonumber\\&\quad+\sigma_k^2\bigg(\Tr \Big(M_{k}^{\dagger}\big(M_{k}{^{\dagger}}\big)^T\Big)-\Tr\Big(M_{k-1}^{\dagger}\big(M_{k-1}{^{\dagger}}\big)^T\Big)\bigg)\sum_{i=k+1}^n\sigma_i^2.
\end{align}}
By Lemma $\ref{lemma6}$ we know the upper bound for the last two terms is 0.
The first term  upper bound is: 
\begin{align*}
    -\sum_{i=1}^{k-1}\big(\sigma_i^2-\sigma_k^2\big)\Big(P_{M_k^T}-P_{M_{k-1}^T}\Big)_{ii}
    &= -\sum_{i=1}^{k-1}\sigma_i^2\Big(P_{M_k^T}-P_{M_{k-1}^T}\Big)_{ii}
    +\sum_{i=1}^{k-1}\sigma_k^2\Big(P_{M_k^T}-P_{M_{k-1}^T}\Big)_{ii}\\
    & \leq -\sum_{i=1}^{k-1}\sigma_i^2\Big(P_{M_k^T}-P_{M_{k-1}^T}\Big)_{ii},
\end{align*}
where the inequality follows from Lemma $\ref{lemma3}$.\\
Therefore, 
\begin{align*}
f_k -f_{k-1}  
& \leq 
    -\sum_{i=1}^{k-1}\sigma_i^2\Big(P_{M_k^T}-P_{M_{k-1}^T}\Big)_{ii}-\sum_{i=1}^{k-1}\sigma_i^2\Big(M_{k-1}^{\dagger}M_{k-1}{^{\dagger}}^T\Big)_{ii}\sigma_k^2
\end{align*}
From Lemma \ref{lemma7} we know that, 
\begin{align*}
    -\sum_{i=1}^{k-1}\sigma_i^2\Big(P_{M_k^T}-P_{M_{k-1}^T}\Big)_{ii} \leq \sum_{i=1}^{k-1}\sigma_i^2\bigg[M_{k-1}^T\bigg(\Big(M_{k-1}M_{k-1}^T\Big)^{-1} m_km_k^T \Big(M_{k-1}M_{k-1}^T\Big)^{-1}\bigg)M_{k-1}\bigg]_{ii} ,
\end{align*}
Observe that,
\begin{align*}
    \sum_{i=1}^{k-1}\sigma_i^2\Big(M_{k-1}^{\dagger}M_{k-1}{^{\dagger}}^T\Big)_{ii}\sigma_k^2 &= \sum_{i=1}^{k-1}\sigma_i^2\bigg(M_{k-1}^T\Big(M_{k-1}M_{k-1}^T\Big)^{-2}M_{k-1}\bigg)_{ii}\sigma_k^2\\
    & = \sum_{i=1}^{k-1}\sigma_i^2\bigg(M_{k-1}^T\Big(M_{k-1}M_{k-1}^T\Big)^{-1}I_m\sigma_k^2\Big(M_{k-1}M_{k-1}^T\Big)^{-1}M_{k-1}\bigg)_{ii}.
\end{align*}
Therefore, we have 
\begin{align}
   f_k -f_{k-1}& \leq 
    \sum_{i=1}^{k-1}\sigma_i^2\bigg(M_{k-1}^T\Big(M_{k-1}M_{k-1}^T\Big)^{-1}\Big(m_km_k^T- I_m\sigma_k^2\Big)\Big(M_{k-1}M_{k-1}^T\Big)^{-1}M_{k-1}\bigg)_{ii}. \label{eq10}
\end{align}

\noindent In order to establish \eqref{eq9}, we take the expectation of \eqref{eq10} with respect to $a_k$, and using $m_k = \sigma_ka_k$, and $a_k \sim N(0,I_m)$. Following the isotropy property $\mathrm E_{a_k}(m_km_k^T)= \sigma_k^2I_m$.
We obtain
{\allowdisplaybreaks
\begin{align}
\mathrm E_{a_k}(f_k-f_{k-1})   &\leq \mathrm E_{a_k}\Bigg[\sum_{i=1}^{k-1}\sigma_i^2\bigg(M_{k-1}^T\Big(M_{k-1}M_{k-1}^T\Big)^{-1}\Big(m_km_k^T- I_m\sigma_k^2\Big)\Big(M_{k-1}M_{k-1}^T\Big)^{-1}M_{k-1}\bigg)_{ii}\Bigg]\nonumber\\
    & = \sum_{i=1}^{k-1}\sigma_i^2\bigg(M_{k-1}^T\Big(M_{k-1}M_{k-1}^T\Big)^{-1}E_{a_k}\Big(m_km_k^T- I_m\sigma_k^2\Big)\Big(M_{k-1}M_{k-1}^T\Big)^{-1}M_{k-1}\bigg)_{ii}\label{eq12}\\
    & = 0 \nonumber.\qedhere
\end{align}}
\end{proof}
\subsection{Auxiliary Lemmas for the Underdetermined Regime}
The proof of monotonicity in the regime \(k\ge m\) relies on several
projection identities. We collect these auxiliary lemmas here.
\begin{lemma}\label{lemma2}
Let \( G_n = \operatorname{diag}(\sigma_1, \dots, \sigma_n) =\begin{pmatrix}
G_k & 0 \\
0 & \widetilde{G_k}
\end{pmatrix}\), where \( \sigma_i  \) is decreasing. Let $A \in \R ^{m\times n}$, $A_{ij}\stackrel{i.i.d}{\sim} \mathcal{N}(0,1)$. Then the following expectation identities hold:
\begin{align}
   \mathrm  E_A \left\| G_k P_{M_k^T} - G_k \right\|_F^2 + \| \widetilde{G_k }\|_F^2 =  \big\| G_n\big\|_F^2- \mathrm E_{A_k}\big\langle G_k^2,P_{M_k^T} \big\rangle.
\label{eq13}
\end{align}
\begin{align}
 \mathrm  E_{A} \left\| G_k (M_k)^\dagger M_k \right\|_F^2 =\mathrm E_{A_k} \left\| G_k (M_k)^\dagger \right\|_F^2 \sum_{i=k+1}^{n} \sigma_i^2.
\label{eq14}  
\end{align}
\end{lemma}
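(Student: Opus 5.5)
The plan is to prove the two identities separately. The first one is purely algebraic and holds for every fixed $A$. The second follows from conditioning on $A_k$ and integrating out the independent block $\widetilde{A_k}$ using the isotropy of Gaussian columns. Throughout, $M_k=A_kG_k$ has full row rank $m$ almost surely (since $k\ge m$), so $P_{M_k^T}=M_k^T(M_kM_k^T)^{-1}M_k$ and $M_k^\dagger=M_k^T(M_kM_k^T)^{-1}$ are well defined almost surely. All integrands below are nonnegative, so Tonelli's theorem justifies every exchange of expectations and conditioning, even if some expectations are $+\infty$.

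\textbf{Identity \eqref{eq13}.} For fixed $A$, write $P=P_{M_k^T}$; it is symmetric and idempotent. Then $I_k-P$ is also an orthogonal projection, and
\[
\|G_kP-G_k\|_F^2=\|G_k(I_k-P)\|_F^2=\Tr\bigl(G_k(I_k-P)G_k\bigr)=\Tr(G_k^2)-\Tr(G_kPG_k).
\]
Since $G_k$ is diagonal, $\Tr(G_kPG_k)=\Tr(G_k^2P)=\sum_{i\le k}\sigma_i^2P_{ii}=\langle G_k^2,P\rangle$. Adding $\|\widetilde{G_k}\|_F^2$ and using $\Tr(G_k^2)+\|\widetilde{G_k}\|_F^2=\|G_n\|_F^2$ gives the pointwise identity. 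The quantity $\langle G_k^2,P_{M_k^T}\rangle$ depends only on $A_k$, so taking expectations gives \eqref{eq13}, with $\mathrm E_A$ replaced by $\mathrm E_{A_k}$. This term is bounded by $\Tr(G_k^2)$, so no integrability issue arises here.

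\textbf{Identity \eqref{eq14}.} I read the left-hand side as the cross term appearing in Lemma~\ref{Under_E(k)}, namely $\mathrm E_A\|G_kM_k^\dagger\widetilde{A_k}\widetilde{G_k}\|_F^2$. The displayed version $\|G_kM_k^\dagger M_k\|_F^2$ appears to be a typographical slip, since it contains no $\widetilde{G_k}$ that could produce the factor $\sum_{i>k}\sigma_i^2$.

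Set $B:=G_kM_k^\dagger\in\mathbb R^{k\times m}$. It is a function of $A_k$ alone, and $A_k$ is independent of $\widetilde{A_k}=[a_{k+1},\dots,a_n]$. Then
\[
\|B\widetilde{A_k}\widetilde{G_k}\|_F^2=\Tr\Bigl(B\bigl(\textstyle\sum_{j=k+1}^n\sigma_j^2a_ja_j^T\bigr)B^T\Bigr).
\]
Conditioning on $A_k$ and using $\mathrm E[a_ja_j^T]=I_m$ for each $j$, linearity of trace and expectation give
\[
\mathrm E_{\widetilde{A_k}}\|B\widetilde{A_k}\widetilde{G_k}\|_F^2=\Bigl(\sum_{j=k+1}^n\sigma_j^2\Bigr)\Tr(BB^T)=\|G_kM_k^\dagger\|_F^2\sum_{j=k+1}^n\sigma_j^2.
\]
Applying $\mathrm E_{A_k}$ and the tower property then yields \eqref{eq14}.

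\textbf{Main obstacle.} The calculations are routine, so the main obstacle is bookkeeping rather than mathematics. The two points needing care are, first, confirming the intended left-hand side of \eqref{eq14}, and second, justifying the interchange of conditional expectations. For the latter I rely on nonnegativity and Tonelli's theorem rather than on finiteness of $\mathrm E_{A_k}\|G_kM_k^\dagger\|_F^2$, since that finiteness genuinely fails at $k=m$, as shown in Proposition~\ref{prop:boundary-divergence}.
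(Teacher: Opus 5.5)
Your proposal is correct and follows essentially the same route as the paper. For \eqref{eq13} you use the pointwise identity coming from $P_{M_k^T}$ being symmetric and idempotent; the paper expands the square, while you work with $I_k-P$ directly. For \eqref{eq14} you condition on $A_k$ and use $\mathrm E[a_ja_j^T]=I_m$ for the independent columns of $\widetilde{A_k}$, as the paper does. Your reading of the left side of \eqref{eq14} as $\mathrm E_A\|G_kM_k^\dagger\widetilde{A_k}\widetilde{G_k}\|_F^2$ is exactly what the paper's proof computes. Your Tonelli remark adds useful rigor, since $\mathrm E_{A_k}\|G_kM_k^\dagger\|_F^2$ can be infinite.
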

\begin{proof}[Proof of Lemma \ref{lemma2}.] 
First, we show \eqref{eq13}
\begin{align}
   \mathrm E_{A}\Big\| G_k P_{M_k^T} -G_k\Big\|_F^2 + \big\| \widetilde{G_k} \big\|_F^2 
 & =\mathrm E_{A}\bigg(\Big\| G_k P_{M_k^T} -G_k\Big\|_F^2 + \big\| \widetilde{G_k} \big\|_F^2\bigg) \nonumber \\
& =\mathrm  E_{A}\bigg(\Big \langle G_k P_{M_k^T} -G_k,G_k P_{M_k^T} -G_k\Big \rangle + \big\| \widetilde{G_k} \big\|_F^2\bigg)\nonumber \\
& =\mathrm E_{A}\bigg(\big\| G_k P_{M_k^T} \big\|_F^2 -2\Big\langle G_k P_{M_k^T},G_k \Big\rangle +\big\| G_k\big\|_F^2+ \big\| \widetilde{G_k} \big\|_F^2\bigg)\nonumber \\
 & = \mathrm E_{A}\bigg(\big\| G_k P_{M_k^T} \big\|_F^2 -2\Big\langle G_k P_{M_k^T}P_{M_k^T}^TG_k ^T,I\Big\rangle +\big\| G_k\big\|_F^2+ \big\| \widetilde{G_k} \big\|_F^2\bigg)\label{eq15}\\
 & = \mathrm E_{A}\bigg(\big\| G_k P_{M_k^T} \big\|_F^2 -2\Big\langle G_k P_{M_k^T},G_k P_{M_k^T}\Big\rangle +\big\| G_k\big\|_F^2+ \big\| \widetilde{G_k} \big\|_F^2\bigg) \nonumber 
 \\
 & =\mathrm E_{A}\bigg(\big\| G_k P_{M_k^T} \big\|_F^2 -2\big\| G_k P_{M_k^T} \big\|_F^2 +\big\| G_k\big\|_F^2+ \big\| \widetilde{G_k} \big\|_F^2\bigg)\nonumber  \\
 & =\mathrm  E_{A}\bigg(-\big\| G_k P_{M_k^T} \big\|_F^2 +\big\| G_k\big\|_F^2+ \big\| \widetilde{G_k} \big\|_F^2\bigg)\nonumber \\
 & =\mathrm  E_{A}\bigg(\big\| G_n\big\|_F^2- \Big\langle G_k^2,P_{M_k^T} \Big\rangle\bigg)\nonumber \\
 & = \big\| G_n\big\|_F^2- \mathrm E_{A_k}\Big\langle G_k^2,P_{M_k^T} \Big\rangle \nonumber
\end{align}
where $\eqref{eq15}$ follows by
$P^2=P$, and $P^T=P$. The last equality holds because $\big\| G_n\big\|_F^2- \Big\langle G_k^2,P_{M_k^T} \Big\rangle$ is independent of $\widetilde{A_k}$. \\
Next, we want to show \eqref{eq14}, 
\allowdisplaybreaks{\begin{align}
\mathrm E_{A}\Big\| G_k\big(M_k\big)^{\dagger}\widetilde{M_k}\Big\|_F^2&=
\mathrm E_{A_k}\mathrm E_{\widetilde{A_k}}\Big\| G_k\big(M_k\big)^{\dagger}\widetilde{M_k}\Big\|_F^2\nonumber\\
&=\mathrm E_{A_k}\mathrm E_{\widetilde{A_k}}\Big\langle G_k\big(M_k\big)^{\dagger}\widetilde{M_k},G_k\big(M_k\big)^{\dagger}\widetilde{M_k}\Big\rangle\nonumber\\
& =\mathrm E_{A_k} \bigg\langle \Big(G_k\big(M_k\big)^{\dagger}\Big)^TG_k\big(M_k\big)^{\dagger},\mathrm E_{\widetilde{A_k}}\Big(\widetilde{M_k}\widetilde{M_k}^T\Big)\bigg\rangle\nonumber\\
& =\mathrm E_{A_k} \bigg\langle \Big(G_k\big(M_k\big)^{\dagger}\Big)^TG_k\big(M_k\big)^{\dagger},\sum_{ i=k+1}^n\sigma_i^2\mathrm E_{a_i}\big(a_ia_i^T\big)\bigg\rangle \label{eq16}\\ 
& =\mathrm E_{A_k}\bigg\langle \Big(G_k\big(M_k\big)^{\dagger}\Big)^TG_k\big(M_k\big)^{\dagger},\sum_{i = k+1}^n\sigma_i^2I\bigg\rangle \label{eq17}\\
  & = \mathrm E_{A_k}\bigg\langle\Big(G_k\big(M_k\big)^{\dagger}\Big)^TG_k\big(M_k\big)^{\dagger},I\bigg\rangle\sum_{i = k+1}^n\sigma_i^2\nonumber\\
  & =\mathrm E_{A_k}\Big\| G_k\big(M_k\big)^{\dagger}\Big\|_F^2\sum_{i = k+1}^n\sigma_i^2\nonumber.
\end{align}}
Here, by the independence of columns of A, we can induce \eqref{eq16} to \eqref{eq17}.
\end{proof}
\begin{lemma}\label{lemma3}
Let $M_k =\begin{bmatrix}
    M_{k-1}, m_k
\end{bmatrix}$ 
 be a matrix $\in \R^{m \times k}$, where $M_{k} \in \R^{m \times k} $ has full rank, $m_k \in \R^m $, and $k > m$. $\forall i \in [k-1]$, we have $\big(P_{M_k^T}\big)_{ii} \leq \big(P_{M_{k-1}^T}\big)_{ii}$.
\end{lemma}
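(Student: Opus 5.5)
The plan is to rewrite each diagonal entry of the projection as a quadratic form in the Gram matrix $M M^T$, and then use the fact that appending a column can only increase that Gram matrix in the Loewner order.

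\textbf{Step 1 (diagonal entries as quadratic forms).} Write $c_i := M_k e_i\in\R^m$ for the $i$-th column of $M_k$. For $i\le k-1$ this is also the $i$-th column of $M_{k-1}$, since $M_k=[M_{k-1},m_k]$. From the definition $P_{M^T}=M^T(MM^T)^{-1}M$ in Section~\ref{sec:notations} we get
\[
\bigl(P_{M_k^T}\bigr)_{ii}=e_i^TM_k^T(M_kM_k^T)^{-1}M_ke_i=c_i^T(M_kM_k^T)^{-1}c_i .
\]
In the same way,
\[
\bigl(P_{M_{k-1}^T}\bigr)_{ii}=c_i^T(M_{k-1}M_{k-1}^T)^{-1}c_i .
\]
So the claim reduces to comparing two quadratic forms evaluated at the same vector $c_i$.

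\textbf{Step 2 (rank-one update of the Gram matrix).} We have
\[
M_kM_k^T=M_{k-1}M_{k-1}^T+m_km_k^T\succeq M_{k-1}M_{k-1}^T .
\]
For both inverses to exist we need $M_{k-1}$ to have full row rank $m$, not just $M_k$. This is the one point needing care, since the lemma only states that $M_k$ is full rank. Because $k>m$, we have $k-1\ge m$, so $M_{k-1}$ is a wide (or square) matrix. In the application, $M_{k-1}=A_{k-1}G_{k-1}$ with Gaussian $A$, so it has rank $m$ with probability one, and the proof of Theorem~\ref{Theorem1} already conditions on $A_{k-1}$ being full rank. I would therefore make this assumption explicit, as the paper implicitly does.

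\textbf{Step 3 (order reversal of the inverse).} For positive definite matrices, $X\succeq Y\succ0$ implies $Y^{-1}\succeq X^{-1}$. This follows from $Y^{-1/2}XY^{-1/2}\succeq I$. To get an explicit form instead, apply Sherman--Morrison with $B:=M_{k-1}M_{k-1}^T$:
\[
(B+m_km_k^T)^{-1}=B^{-1}-\frac{B^{-1}m_km_k^TB^{-1}}{1+m_k^TB^{-1}m_k}.
\]
Then
\[
\bigl(P_{M_{k-1}^T}\bigr)_{ii}-\bigl(P_{M_k^T}\bigr)_{ii}=\frac{\bigl(c_i^TB^{-1}m_k\bigr)^2}{1+m_k^TB^{-1}m_k}\ge0,
\]
which proves the lemma for every $i\in[k-1]$.

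The explicit difference is also what Lemma~\ref{lemma7} appears to need later. There is no real obstacle here; the only subtlety is the invertibility of $M_{k-1}M_{k-1}^T$ discussed in Step 2.
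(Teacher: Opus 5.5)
Your proof is correct and is essentially the paper's argument. Both write $(P_{M_k^T})_{ii}$ and $(P_{M_{k-1}^T})_{ii}$ as quadratic forms in the shared column $c_i$, and both use $M_{k-1}M_{k-1}^T+m_km_k^T\succeq M_{k-1}M_{k-1}^T$ to reverse the order after inversion. Your Sherman--Morrison expression for the exact difference and your point that $M_{k-1}$ must itself have full row rank are useful sharpenings; the paper relies on the latter implicitly through its condition on full-rank $A_{k-1}$.
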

\begin{proof}[Proof of Lemma \ref{lemma3}.] Since it $M_k^T$ has full column rank, we can express this projector explicitly as 
\begin{align*}
    P_{M_k^T} &= M_k^T\big(M_kM_k^T\big)^{-1}M_k\\
    &=M_k^T\Bigg(\begin{bmatrix}
    M_{k-1} , m_k
\end{bmatrix}\begin{bmatrix}
    M_{k-1}^T\\m_k^T
\end{bmatrix}\Bigg)^{-1}M_k\\
&=M_k^T\Big(M_{k-1}M_{k-1}^T+m_km_k^T\Big)^{-1}M_k
\end{align*}
Evaluating the $(i,i)$ entry, where $i \in [k-1]$, let $e_i^{\prime} \in \R^{k-1},e_i \in \R^{k}$.
\begin{align*}
    \big(P_{M_k^T}\big)_{ii}&= e_i^TP_{M_k^T}e_i\\
    &=e_i^TM_k^T\Big(M_{k-1}M_{k-1}^T+m_km_k^T\Big)^{-1}M_ke_i\\
    &=e_i^T\begin{bmatrix}
        M_{k-1}^T \\ 0
    \end{bmatrix}\Big(M_{k-1}M_{k-1}^T+m_km_k^T\Big)^{-1}\begin{bmatrix}
        M_{k-1} & 0
    \end{bmatrix}e_i\\
    & \leq (e_i^{\prime})^TM_{k-1}^T\Big(M_{k-1}M_{k-1}^T\Big)^{-1}M_{k-1}e_i^{\prime}\\
    & = \big(P_{M_{k-1}^T}\big)_{ii}  
\end{align*}
The inequality follows from  $M_{k-1}M_{k-1}^T+m_km_k^T \succeq M_{k-1}M_{k-1}^T$, so $\Big(M_{k-1}M_{k-1}^T+m_km_k^T\Big)^{-1} \preceq  \Big(M_{k-1}M_{k-1}^T\Big)^{-1}$.
\end{proof}
We now use the above lemma to prove the following lemma.
\begin{lemma}\label{lemma4}
    Under the same assumptions as Lemma \ref{Under_E(k)}, for fixed $m \in [1,n]$, $k \in [m+1,n]$, let $T_1(k)=\big\langle G_k^2,P_{G_k^TA_k^T} \big\rangle=\big\langle G_k^2,P_{M_k^T} \big\rangle$  ,
\begin{equation}
T_1(k)-T_1(k-1) = \sum_{i=1}^{k-1}\big(\sigma_i^2-\sigma_k^2\big)\big(P_{M_k^T}-P_{M_{k-1}^T}\big)_{ii} \leq 0 \label{eq18}
\end{equation}
and $T_1(k)$ is decreasing monotonically.
\end{lemma}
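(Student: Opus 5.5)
The plan is to expand both $T_1(k)$ and $T_1(k-1)$ as weighted sums of diagonal entries of projectors, remove the single ``new'' diagonal entry $(P_{M_k^T})_{kk}$ using a trace identity, and then read off the sign from Lemma~\ref{lemma3}. Throughout I work with a fixed realization of $A$ on the probability-one event where $M_{k-1}=A_{k-1}G_{k-1}$ has full row rank $m$. This holds almost surely because $k-1\ge m$ and the $\sigma_i$ are positive. It implies that $M_k=[M_{k-1},m_k]$ also has rank $m$, so both $P_{M_k^T}$ and $P_{M_{k-1}^T}$ are well defined and are orthogonal projectors of rank exactly $m$.

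\emph{Step 1 (expansion).} Since $G_k^2=\operatorname{diag}(\sigma_1^2,\dots,\sigma_k^2)$ is diagonal, the Frobenius inner product picks out only diagonal entries:
\[
T_1(k)=\sum_{i=1}^{k}\sigma_i^2\,(P_{M_k^T})_{ii},
\qquad
T_1(k-1)=\sum_{i=1}^{k-1}\sigma_i^2\,(P_{M_{k-1}^T})_{ii}.
\]
Subtracting these gives
\[
T_1(k)-T_1(k-1)=\sum_{i=1}^{k-1}\sigma_i^2\bigl(P_{M_k^T}-P_{M_{k-1}^T}\bigr)_{ii}+\sigma_k^2\,(P_{M_k^T})_{kk}.
\]

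\emph{Step 2 (trace identity).} This is the only non-bookkeeping step. Both projectors have rank $m$, so their traces agree:
\[
\sum_{i=1}^{k}(P_{M_k^T})_{ii}=m=\sum_{i=1}^{k-1}(P_{M_{k-1}^T})_{ii}.
\]
It follows that
\[
(P_{M_k^T})_{kk}=-\sum_{i=1}^{k-1}\bigl(P_{M_k^T}-P_{M_{k-1}^T}\bigr)_{ii}.
\]
Substituting this into Step 1 yields exactly the claimed identity \eqref{eq18}:
\[
T_1(k)-T_1(k-1)=\sum_{i=1}^{k-1}(\sigma_i^2-\sigma_k^2)\bigl(P_{M_k^T}-P_{M_{k-1}^T}\bigr)_{ii}.
\]
The point to check carefully here is that the hypothesis $k\ge m+1$ is really used. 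It guarantees that $P_{M_{k-1}^T}$ already has full rank $m$. If instead $k-1<m$, its trace would be $k-1$ rather than $m$, and the identity would fail.

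\emph{Step 3 (sign).} Each term in the sum is a product of two factors with opposite signs.
\begin{itemize}
\item The spectrum is decreasing, so $\sigma_i^2-\sigma_k^2\ge 0$ for every $i\le k-1$.
\item By Lemma~\ref{lemma3}, applied with $M_k=[M_{k-1},m_k]$ of full rank and $k>m$, we have $(P_{M_k^T})_{ii}\le (P_{M_{k-1}^T})_{ii}$ for each $i\in[k-1]$.
\end{itemize}
Hence every summand is nonpositive, which proves $T_1(k)-T_1(k-1)\le 0$. Since this holds for every $k\in\{m+1,\dots,n\}$, the sequence $T_1(k)$ is monotonically decreasing on $\{m,\dots,n\}$.
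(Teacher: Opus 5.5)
Your proposal is correct and follows the paper's proof essentially step for step. You expand $T_1(k)$ over the diagonal of $G_k^2$, eliminate $(P_{M_k^T})_{kk}$ using the fact that both projectors have trace $m$, and read off the sign from Lemma~\ref{lemma3} together with the decreasing spectrum; your explicit remark that $k-1\ge m$ is what makes $\Tr(P_{M_{k-1}^T})=m$ states a point the paper uses only implicitly.
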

\allowdisplaybreaks{
\begin{proof}[Proof of Lemma \ref{lemma4}.] 
Observe that, 
\begin{align*}
    T_1(k)&=\big\langle G_k^2,P_{M_k^T}\big\rangle\\& =\sum_{i=1}^{k-1}\sigma_i^2\big(P_{M_k^T}\big)_{ii}+\sigma_k^2\big(P_{M_k^T}\big)_{kk}\\&=\sum_{i=1}^{k-1}\sigma_i^2\big[P_{M_{k-1}^T}+P_{M_k^T}-P_{M_{k-1}^T}\big]_{ii}+\sigma_k^2\big(P_{M_{k}^T}\big)_{kk}\\&=T_1(k-1)+\sum_{i=1}^{k-1}\sigma_i^2\big(P_{M_k^T}-P_{M_{k-1}^T}\big)_{ii}+\sigma_k^2\big(P_{M_k^T}\big)_{kk}
\end{align*}
Noting that \( \Tr(P_{M_k^T}) = \sum_{i=1}^{k}(P_{M_k^T})_{ii}=m \),
\begin{align*}
    T_1(k)-T_1(k-1)&=\sum_{i=1}^{k-1}\sigma_i^2\big(P_{M_k^T}-P_{M_{k-1}^T}\big)_{ii}+\sigma_k^2\big(P_{M_k^T}\big)_{kk}\\
    &= \sum_{i=1}^{k-1}\sigma_i^2\big(P_{M_k^T}-P_{M_{k-1}^T}\big)_{ii}+\sigma_k^2\Big(m-\sum_{i=1}^{k-1}\big(P_{M_k^T}\big)_{ii}\Big)\\
    & = \sum_{i=1}^{k-1}\sigma_i^2\big(P_{M_k^T}-P_{M_{k-1}^T}\big)_{ii}+\sigma_k^2\Big(\sum_{i=1}^{k-1}\big(P_{M_{k-1}^T}\big)_{ii}-\sum_{i=1}^{k-1}\big(P_{M_k^T}\big)_{ii}\Big)\\
    &=\sum_{i=1}^{k-1}\sigma_i^2\big(P_{M_k^T}-P_{M_{k-1}^T}\big)_{ii}-\sigma_k^2\sum_{i=1}^{k-1}\big(P_{M_k^T}-P_{M_{k-1}^T}\big)_{ii}\\
    &=\sum_{i=1}^{k-1}\big(P_{M_k^T}-P_{M_{k-1}^T}\big)_{ii}\big(\sigma_i^2-\sigma_k^2\big)\\
    &\leq 0
\end{align*}
The second equality follows \( \Tr(P_{M_k^T}) = \sum_{i=1}^{k}(P_{M_k^T})_{ii}=\sum_{i=1}^{k-1}(P_{M_k^T})_{ii}+ (P_{M_k^T})_{kk} =m \). \\
The third equality follows $\Tr(P_{M_{k-1}^T})=\sum_{i=1}^{k-1}\big(P_{M_{k-1}^T}\big)_{ii} =m$.
\end{proof}}
\begin{lemma}\label{lemma5}
     Let $M_k = \begin{bmatrix}
        M_{k-1},m_k
    \end{bmatrix} \in \R^{m\times k}$ 
 be a matrix $\in \R^{m \times k}$, where $M_{k-1} \in \R^{m \times (k-1)}$ has full rank, $m_k \in \R^m $ and $k \geq m$.  
 \begin{equation}
     \Tr\Big(M_{k-1}^{\dagger}(M_{k-1}{^{\dagger}})^T\Big) \geq \Tr\Big(M_k^{\dagger}(M_k{^{\dagger}})^T\Big)\label{eq19} 
 \end{equation}
and 
\begin{equation}
 \Big(M_{k-1}^{\dagger}(M_{k-1}{^{\dagger}})^T\Big)_{ii} \geq \Big(M_k^{\dagger}(M_k{^{\dagger}})^T \Big)_{ii}  \label{eq20} 
\end{equation}
$\forall i \in [k-1]$. 
\end{lemma}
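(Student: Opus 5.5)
The plan is to rewrite everything in terms of the Gram matrices
\[
B:=M_{k-1}M_{k-1}^T,\qquad C:=M_kM_k^T=B+m_km_k^T .
\]
Since $M_{k-1}$ has full row rank $m$, $B\succ 0$, and then $C\succeq B\succ 0$. For a full-row-rank $M$ we have $M^\dagger=M^T(MM^T)^{-1}$, hence
\[
M^\dagger(M^\dagger)^T=M^T(MM^T)^{-2}M .
\]

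\textbf{Trace inequality \eqref{eq19}.} Cyclicity of the trace gives
\[
\Tr\big(M^T(MM^T)^{-2}M\big)=\Tr\big((MM^T)^{-1}\big).
\]
So \eqref{eq19} is equivalent to $\Tr(C^{-1})\le \Tr(B^{-1})$. This follows from $C\succeq B\succ 0$ and operator monotonicity of inversion, which gives $C^{-1}\preceq B^{-1}$. The same fact was already used in the proof of Lemma~\ref{lemma3}. This part is routine.

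\textbf{Diagonal inequality \eqref{eq20}.} Fix $i\in[k-1]$ and let $u$ be the $i$-th column of $M_{k-1}$; it is also the $i$-th column of $M_k$. The two diagonal entries are $\|C^{-1}u\|_2^2$ and $\|B^{-1}u\|_2^2$. The natural tool is Sherman--Morrison. Put $s=m_k^TB^{-1}m_k$ and $\beta=m_k^TB^{-1}u/(1+s)$. Then $C^{-1}u=B^{-1}u-\beta B^{-1}m_k$, so
\[
\|C^{-1}u\|^2-\|B^{-1}u\|^2=-2\beta\, m_k^TB^{-2}u+\beta^2\, m_k^TB^{-2}m_k ,
\]
and the task is to show this is $\le 0$.

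\textbf{Main obstacle.} The inequality \eqref{eq20} does not follow from $C\succeq B$. The map $X\mapsto X^{-2}$ is not operator monotone, so $C^{-2}\preceq B^{-2}$ can fail. My first attempt was to write $u$ in terms of the columns defining $B$ and exploit that $u$ lies in the range used to build $B$. While checking signs on a small case, I found that the entrywise claim appears to be false as stated.

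Take $m=2$ and let $M_{k-1}$ have columns $e_1$ and $\varepsilon e_2$, so $B=\operatorname{diag}(1,\varepsilon^2)$. Take $u=e_1$ and $m_k=(1,\delta)^T$. Then
\[
C^{-1}e_1=\frac{(\varepsilon^2+\delta^2,\,-\delta)^T}{2\varepsilon^2+\delta^2}.
\]
As $\varepsilon\to 0$ this tends to $(1,-1/\delta)^T$, whose squared norm $1+1/\delta^2$ exceeds $\|B^{-1}e_1\|^2=1$.

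So the plan is as follows. Prove \eqref{eq19} as above. For \eqref{eq20}, record that it cannot hold for all full-rank $M_{k-1}$ and all $m_k$. It must instead be replaced by a weaker statement sufficient for its use in Lemma~\ref{lemma6}. The candidate I would aim for is the weighted sum over $i$ (with weights $\sigma_i^2-\sigma_k^2\ge 0$) taken in expectation over $m_k=\sigma_k a_k$. That expectation can be handled through the Sherman--Morrison expansion above together with the isotropy $\mathrm E\, m_km_k^T=\sigma_k^2 I_m$.
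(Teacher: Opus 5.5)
Your proof of \eqref{eq19} follows the paper's argument. Cyclicity of the trace reduces both sides to $\Tr\big((MM^T)^{-1}\big)$, and $C\succeq B\succ 0$ gives $C^{-1}\preccurlyeq B^{-1}$.

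Both you and the paper tacitly need $B=M_{k-1}M_{k-1}^T\succ 0$, i.e.\ $k\ge m+1$ rather than the stated $k\ge m$. This restriction is necessary: at $k=m$ even \eqref{eq19} fails. In $\R^2$, $M_1=e_1$ and $M_2=I_2$ give $1\ge 2$. The restriction is harmless for Theorem~\ref{Theorem1}, which only uses $k>m$.

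Your counterexample to \eqref{eq20} is correct. With $B=\operatorname{diag}(1,\varepsilon^2)$ and $m_k=(1,\delta)^T$ one has $\det C=2\varepsilon^2+\delta^2$, and your formula for $C^{-1}e_1$ holds. For instance, $\varepsilon=0.1$, $\delta=1$ gives
$\big(M_k^\dagger(M_k^\dagger)^T\big)_{11}\approx 1.94>1=\big(M_{k-1}^\dagger(M_{k-1}^\dagger)^T\big)_{11}$,
with $k=3=m+1$. So the failure occurs inside the range where the lemma is actually used.

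The paper's proof of \eqref{eq20} rests on $(M_{k-1}M_{k-1}^T+m_km_k^T)^{-2}\preccurlyeq(M_{k-1}M_{k-1}^T)^{-2}$, and so does the matrix inequality displayed at the start of that proof. This is exactly the step you flagged: $C\succeq B$ yields $C^{-1}\preccurlyeq B^{-1}$ but not $C^{-2}\preccurlyeq B^{-2}$. Your example is an explicit witness with test vector $e_1$. The error is in the paper, and you were right to prove only \eqref{eq19}.

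Be careful with the repair you sketch, though. Averaging over $m_k$ alone, with $M_{k-1}$ fixed, does not save the weighted sum.
\begin{itemize}
\item In the same example take $\sigma_2=\sigma_k<\sigma_1$. The weighted sum reduces to $(\sigma_1^2-\sigma_k^2)\big(\|C^{-1}e_1\|^2-1\big)$.
\item Writing $w=m_k$, $\|C^{-1}e_1\|^2\to 1+w_1^2/w_2^2$ as $\varepsilon\to 0$.
\item By Fatou, $\mathrm E_{a_k}\|C^{-1}e_1\|^2\to\infty$, because $\mathrm E[1/\chi^2_1]=\infty$. So the conditional expectation of the weighted sum is positive for small $\varepsilon$.
\end{itemize}

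Even the pointwise conclusion \eqref{eq21} of Lemma~\ref{lemma6} fails. Take $\varepsilon=0.1$, $m_k=(1,1)^T$ and $\sigma_2=\sigma_3=1$. Then $T_2(3)-T_2(2)\approx 0.94\,\sigma_1^2-99$, which is positive once $\sigma_1^2>106$.

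So a correct substitute for \eqref{eq20} cannot control the $T_2$ terms conditionally on $M_{k-1}$. It would have to exploit the average over $M_{k-1}$ as well, or Theorem~\ref{Theorem1} needs a different route.
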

\allowdisplaybreaks{
\begin{proof}[Proof of Lemma \ref{lemma5}.]
\begin{align*}
        M_k^{\dagger} &= M_k^T\Big(M_kM_k^T\Big)^{-1}\\
        &= \begin{pmatrix}
            M_{k-1}^T\\m_k^T
        \end{pmatrix}\Big(M_{k-1}M_{k-1}^T+m_km_k^T\Big)^{-1},\\
        \intertext{here $M_{k-1}M_{k-1}^T$, $m_km_k^T$ all PSD matrix.}
    M_k^{\dagger}(M_k{^{\dagger}})^T&=\begin{pmatrix}
            M_{k-1}^T\\m_k^T
        \end{pmatrix}\Big(M_{k-1}M_{k-1}^T+m_km_k^T\Big)^{-1}\Big(M_{k-1}M_{k-1}^T+m_km_k^T\Big)^{-1}\begin{pmatrix}
            M_{k-1} & m_k
        \end{pmatrix}\\
    & = \begin{pmatrix}
            M_{k-1}^T\\m_k^T
        \end{pmatrix}\Big(M_{k-1}M_{k-1}^T+m_km_k^T\Big)^{-2}\begin{pmatrix}
            M_{k-1} & m_k
        \end{pmatrix}\\
    & \preccurlyeq \begin{pmatrix}
            M_{k-1}^T\\m_k^T
        \end{pmatrix}\Big(M_{k-1}M_{k-1}^T\Big)^{-2}\begin{pmatrix}
            M_{k-1} & m_k
        \end{pmatrix}\\
    & = \begin{pmatrix}
            M_{k-1}^T(M_{k-1}M_{k-1}^T)^{-2}M_{k-1} & M_{k-1}^T(M_{k-1}M_{k-1}^T)^{-2}m_k\\m_k^T(M_{k-1}M_{k-1}^T)^{-2}M_{k-1} &m_k^T(M_{k-1}M_{k-1}^T)^{-2}m_k
        \end{pmatrix}
\end{align*}
We are using trace to investigate the behavior of $M_k^{\dagger}(M_k{^{\dagger}})^T$ and $ M_{k-1}^{\dagger}(M_{k-1}{^{\dagger}})^T$ on both sides.
\begin{align*}
   \Tr\bigg(M_{k-1}^{\dagger}\Big(M_{k-1}{^{\dagger}}\Big)^T\bigg) & = \Tr\bigg(M_{k-1}^T\Big(M_{k-1}M_{k-1}^T\Big)^{-2}M_{k-1}\bigg)\\
    &= \bigg\langle I, M_{k-1}^T\Big(M_{k-1}M_{k-1}^T\Big)^{-1}\Big(M_{k-1}M_{k-1}^T\Big)^{-1}M_{k-1}\bigg\rangle\\
    &=\bigg\langle M_{k-1}^TM_{k-1}, \Big(M_{k-1}M_{k-1}^T\Big)^{-2}\bigg\rangle\\
    & =\bigg \langle I, \Big(M_{k-1}M_{k-1}^T\Big)^{-1}\bigg\rangle\\
\Tr\bigg(M_k^{\dagger}\big(M_k{^{\dagger}}\big)^T\bigg) & = \Tr\bigg(M_{k-1}^T\Big(M_{k-1}M_{k-1}^T+m_km_k^T\Big)^{-2}M_{k-1}+m_k^T\Big(M_{k-1}M_{k-1}^T+m_km_k^T\Big)^{-2}m_k\bigg)
    \\
    &=\bigg\langle I, M_{k-1}^T\Big(M_{k-1}M_{k-1}^T+m_km_k^T\Big)^{-2}M_{k-1}\bigg\rangle + \bigg\langle I, m_k^T\Big(M_{k-1}M_{k-1}^T+m_km_k^T\Big)^{-2}m_k\bigg\rangle\\
    &=\bigg\langle M_{k-1}^TM_{k-1}, \Big(M_{k-1}M_{k-1}^T+m_km_k^T\Big)^{-2}\bigg\rangle +\bigg\langle m_k^Tm_k, \Big(M_{k-1}M_{k-1}^T+m_km_k^T\Big)^{-2}\bigg\rangle\\
    &=\bigg\langle M_{k-1}^TM_{k-1}+m_k^Tm_k, \Big(M_{k-1}M_{k-1}^T+m_km_k^T\Big)^{-2}\bigg\rangle \\
    &=\bigg\langle I ,\Big(M_{k-1}M_{k-1}^T+m_km_k^T\Big)^{-1}\bigg\rangle 
\end{align*}
Since $\Big(M_{k-1}M_{k-1}^T+m_km_k^T\Big)^{-1}\preccurlyeq \Big(M_{k-1}M_{k-1}^T\Big)^{-1}$, 
we proved $\Tr\Big(M_{k-1}^{\dagger}\big(M_{k-1}{^{\dagger}}\big)^T\Big) \geq \Tr\Big(M_k^{\dagger}\big(M_k{^{\dagger}}\big)^T\Big)$.\\
For entry-wise perspective, we have:
\begin{align*}
    \Big(M_k^{\dagger}\big(M_k{^{\dagger}}\big)^T \Big)_{ii} &= \bigg\langle e_ie_i^T, M_k^T\Big(M_{k-1}M_{k-1}^T+m_km_k^T\Big)^{-2}M_k\bigg\rangle\\
    & =  \bigg\langle M_ke_ie_i^TM_k^T, \Big(M_{k-1}M_{k-1}^T+m_km_k^T\Big)^{-2}\bigg\rangle\\
    \intertext{Since $i \in [k-1]$, So we can use $M_{k-1}$ substitute $M_k$. }
    &=\bigg\langle M_{k-1}e_ie_i^TM_{k-1}^T, \Big(M_{k-1}M_{k-1}^T+m_km_k^T\Big)^{-2}\bigg\rangle\\
    & = \bigg\langle e_ie_i^T, M_{k-1}^T\Big(M_{k-1}M_{k-1}^T+m_km_k^T\Big)^{-2}M_{k-1}\bigg\rangle\\
    & \leq \bigg\langle e_ie_i^T, M_{k-1}^T\Big(M_{k-1}M_{k-1}^T\Big)^{-2}M_{k-1}\bigg\rangle\\
    & = \bigg(M_{k-1}^{\dagger}\Big(M_{k-1}{^{\dagger}}\Big)^T \bigg)_{ii}
\end{align*}
which proves the following Lemma.
\end{proof}}
\begin{lemma}\label{lemma6} Under Lemma \ref{Under_E(k)} assumptions, let $T_2(k)=\Big\|G_k(A_kG_k)^{\dagger}\Big\|_F^2$. For all $ m \leq k$,
\begin{align}
  T_2(k) - T_2(k-1)&= \sum_{i=1}^{k-1}(\sigma_i^2-\sigma_k^2)\Big(M_k^{\dagger}\big(M_k{^{\dagger}}\big)^T-M_{k-1}^{\dagger}\big(M_{k-1}{^{\dagger}}\big)^T\Big)_{ii}+\sigma_k^2\bigg(\Tr\Big(M_{k}^{\dagger}\big(M_{k}{^{\dagger}}\big)^T\Big)-\Tr \Big(M_{k-1}^{\dagger}\big(M_{k-1}{^{\dagger}}\big)^T\Big)\bigg)\nonumber\\
  &\leq 0 \label{eq21}
\end{align}
and as a consequence,
\begin{align}
\Big\|G_k(A_kG_k)^{\dagger}\Big\|_F^2\sum_{i=k+1}^n\sigma_i^2   \label{eq22}
\end{align}
is monotonically decreasing.
\end{lemma}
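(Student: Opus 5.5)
We prove the identity and the sign claim separately, working with $M_k=A_kG_k=[M_{k-1},m_k]$, $m_k=\sigma_k a_k$. The key observation is that $G_k(A_kG_k)^\dagger = G_kM_k^\dagger$, so
\[
T_2(k)=\bigl\|G_kM_k^\dagger\bigr\|_F^2=\sum_{i=1}^k\sigma_i^2\bigl(M_k^\dagger(M_k^\dagger)^T\bigr)_{ii}.
\]
This writes $T_2$ as a weighted sum of diagonal entries of $M_k^\dagger(M_k^\dagger)^T$. We then mimic the bookkeeping of Lemma~\ref{lemma4}, with the trace identity $\Tr(P)=m$ replaced by the trace of $M^\dagger(M^\dagger)^T$.

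\textbf{Step 1: the identity.} Split off the $k$-th term and add and subtract the diagonal of $M_{k-1}^\dagger(M_{k-1}^\dagger)^T$ for $i\le k-1$:
\[
T_2(k)=T_2(k-1)+\sum_{i=1}^{k-1}\sigma_i^2\bigl(M_k^\dagger(M_k^\dagger)^T-M_{k-1}^\dagger(M_{k-1}^\dagger)^T\bigr)_{ii}+\sigma_k^2\bigl(M_k^\dagger(M_k^\dagger)^T\bigr)_{kk}.
\]
Next write the last diagonal entry as
\[
\bigl(M_k^\dagger(M_k^\dagger)^T\bigr)_{kk}=\Tr\bigl(M_k^\dagger(M_k^\dagger)^T\bigr)-\sum_{i=1}^{k-1}\bigl(M_k^\dagger(M_k^\dagger)^T\bigr)_{ii}.
\]
Then add and subtract $\sigma_k^2\Tr\bigl(M_{k-1}^\dagger(M_{k-1}^\dagger)^T\bigr)=\sigma_k^2\sum_{i\le k-1}\bigl(M_{k-1}^\dagger(M_{k-1}^\dagger)^T\bigr)_{ii}$. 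Regrouping gives exactly the displayed expression in \eqref{eq21}: a sum with weights $(\sigma_i^2-\sigma_k^2)$, plus $\sigma_k^2$ times the difference of traces. This step is purely algebraic and needs $M_{k-1}$ to be full rank, which holds almost surely for $k-1\ge m$.

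\textbf{Step 2: the sign.} Each term is nonpositive by Lemma~\ref{lemma5}:
\begin{itemize}
\item $\sigma_i^2-\sigma_k^2\ge 0$ because the spectrum is decreasing, and \eqref{eq20} makes each diagonal difference $\le 0$.
\item \eqref{eq19} makes the trace difference $\le 0$, and it is multiplied by $\sigma_k^2>0$.
\end{itemize}
Hence $T_2(k)\le T_2(k-1)$.

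\textbf{Step 3: the consequence \eqref{eq22}.} The tail sum $\sum_{i=k+1}^n\sigma_i^2$ is nonnegative and nonincreasing in $k$. So
\[
T_2(k)\sum_{i>k}\sigma_i^2\le T_2(k-1)\sum_{i>k}\sigma_i^2\le T_2(k-1)\sum_{i>k-1}\sigma_i^2,
\]
which is the product of two nonnegative nonincreasing sequences.

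\textbf{Main obstacle: the range of $k$.} The statement says ``for all $m\le k$'', but at $k=m$ the matrix $M_{k-1}$ is $m\times(m-1)$, so $M_{k-1}M_{k-1}^T$ is singular and the formula $M_{k-1}^\dagger=M_{k-1}^T(M_{k-1}M_{k-1}^T)^{-1}$ fails. I would therefore prove the lemma for $k\ge m+1$, where Lemma~\ref{lemma5} applies with a full-row-rank $M_{k-1}$. That is exactly the range used in the proof of Theorem~\ref{Theorem1}. I would note the restriction explicitly rather than try to extend to $k=m$ via general pseudoinverses.
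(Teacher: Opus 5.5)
Your proof is correct and follows the paper's argument: the same expansion $T_2(k)=\sum_{i\le k}\sigma_i^2\bigl(M_k^\dagger(M_k^\dagger)^T\bigr)_{ii}$, the same trace add-and-subtract bookkeeping to reach \eqref{eq21}, the same appeal to Lemma~\ref{lemma5}, and the same observation that the tail sum is nonincreasing; you rightly use the entrywise bound \eqref{eq20} against the nonnegative weights $\sigma_i^2-\sigma_k^2$, whereas the paper's final line cites only the summed inequality. Your restriction to $k\ge m+1$ is in fact necessary, not just convenient: at $k=m$ the paper's argument silently uses the nonexistent $(M_{m-1}M_{m-1}^T)^{-1}$, and with the true pseudoinverse a Schur-complement argument gives $T_2(m)=\Tr\bigl((A_m^TA_m)^{-1}\bigr)>\Tr\bigl((A_{m-1}^TA_{m-1})^{-1}\bigr)=T_2(m-1)$ almost surely, so the inequality reverses there (only $k\ge m+1$ is needed for Theorem~\ref{Theorem1}).
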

\begin{proof}[Proof of Lemma \ref{lemma6}.]
    Observe that $\sum_{i=k+1}^n\sigma_i^2$ decreases as $k$ increases. Therefore, to establish the desired result, it is sufficient to show that $\Big\|G_k(A_kG_k)^{\dagger}\Big\|_F^2$ is a decreasing function of k.
Thus, we want to show:
\begin{align*}
    \bigg\langle G_{k-1}^2, M_{k-1}^{\dagger}\Big(M_{k-1}{^{\dagger}}^T\Big)\bigg\rangle \geq \bigg\langle G_{k-1}^2, M_{k-1}^T\Big(M_{k-1}M_{k-1}^T+m_km_k^T\Big)^{-2}M_{k-1}\bigg\rangle + \sigma_k^2\bigg(m_k^T\Big(M_{k-1}M_{k-1}^T+m_km_k^T\Big)^{-2}m_k\bigg)
\end{align*}
Recall $M_k^{\dagger} =\big(A_kG_k\big)^{\dagger} \in \R^{k \times m} $  \\
\begin{align*}
  T_2(k)=\Big\|G_k\big(A_kG_k\big)^{\dagger}\Big\|_F^2=\Big\langle G_k^2,\big(A_kG_k\big)^{\dagger}\big(A_kG_k\big){^{\dagger}}^T\Big\rangle =\Big\langle G_k^2,M_k^{\dagger}\big(M_k{^{\dagger}}\big)^T\Big\rangle   
\end{align*}
{\allowdisplaybreaks
\begin{align}
    T_2(k)&=\Big\langle G_k^2,M_k^{\dagger}\big(M_k{^{\dagger}}\big)^T\Big\rangle \nonumber \\
    & = \sum_{i=1}^k \sigma_i^2\Big(M_k^{\dagger}\big(M_k{^{\dagger}}\big)^T\Big)_{ii} \label{eq23}\\
    & = \sum_{i=1}^{k-1} \sigma_i^2\Big(M_k^{\dagger}\big(M_k{^{\dagger}}\big)^T\Big)_{ii}+\sigma_k^2\Big(M_k^{\dagger}\big(M_k{^{\dagger}}\big)^T\Big)_{kk} \nonumber\\
    & = \sum_{i=1}^{k-1} \sigma_i^2\Big(M_{k-1}^{\dagger}\big(M_{k-1}{^{\dagger}}\big)^T+M_k^{\dagger}\big(M_k{^{\dagger}}\big)^T-M_{k-1}^{\dagger}\big(M_{k-1}{^{\dagger}}\big)^T\Big)_{ii}
+\sigma_k^2\Big(M_k^{\dagger}\big(M_k{^{\dagger}}\big)^T\Big)_{kk} \nonumber\\
    & = T_2(k-1) + \sum_{i=1}^{k-1} \sigma_i^2\Big(M_k^{\dagger}\big(M_k{^{\dagger}}\big)^T-M_{k-1}^{\dagger}\big(M_{k-1}{^{\dagger}}\big)^T\Big)_{ii}
+\sigma_k^2\Big(M_k^{\dagger}\big(M_k{^{\dagger}}\big)^T\Big)_{kk}\nonumber\\
    T_2(k) -T_2(k-1) & = \sum_{i=1}^{k-1} \sigma_i^2\Big(M_k^{\dagger}\big(M_k{^{\dagger}}\big)^T-M_{k-1}^{\dagger}\big(M_{k-1}{^{\dagger}}\big)^T\Big)_{ii}
+\sigma_k^2\Big(M_k^{\dagger}\big(M_k{^{\dagger}}\big)^T\Big)_{kk}\nonumber\\
    & =\sum_{i=1}^{k-1} \sigma_i^2\Big(M_k^{\dagger}\big(M_k{^{\dagger}}\big)^T-M_{k-1}^{\dagger}\big(M_{k-1}{^{\dagger}}\big)^T\Big)_{ii}
\nonumber\\&\quad+\sigma_k^2\bigg(\Big(M_k^{\dagger}\big(M_k{^{\dagger}}\big)^T\Big)_{kk}+\sum_{i=1}^{k-1}\Big( M_{k-1}^{\dagger}\big(M_{k-1}{^{\dagger}}\big)^T-M_k^{\dagger}\big(M_k{^{\dagger}}\big)^T\Big)_{ii}\nonumber\\&\quad \quad -\sum_{i=1}^{k-1}\Big( M_{k-1}^{\dagger}\big(M_{k-1}{^{\dagger}}\big)^T-M_k^{\dagger}\big(M_k{^{\dagger}}\big)^T \Big)_{ii}\bigg)\nonumber\\
    & = \sum_{i=1}^{k-1}\big(\sigma_i^2-\sigma_k^2\big)\Big(M_k^{\dagger}\big(M_k{^{\dagger}}\big)^T-M_{k-1}^{\dagger}\big(M_{k-1}{^{\dagger}}\big)^T\Big)_{ii}\nonumber\\&\quad+\sigma_k^2\bigg(\Big(M_k^{\dagger}(M_k{^{\dagger}})^T\Big)_{kk}-\sum_{i=1}^{k-1} \Big(M_{k-1}^{\dagger}(M_{k-1}{^{\dagger}})^T-M_k^{\dagger}(M_k{^{\dagger}})^T\Big)_{ii}\bigg)\nonumber\\
    & = \sum_{i=1}^{k-1}\big(\sigma_i^2-\sigma_k^2\big)\Big(M_k^{\dagger}\big(M_k{^{\dagger}}\big)^T-M_{k-1}^{\dagger}\big(M_{k-1}{^{\dagger}}\big)^T\Big)_{ii}\nonumber\\&\quad+\sigma_k^2\bigg(\Tr\Big(M_{k}^{\dagger}\big(M_{k}{^{\dagger}}\big)^T\Big)-\Tr \Big(M_{k-1}^{\dagger}\big(M_{k-1}{^{\dagger}}\big)^T\Big)\bigg)\nonumber\\
\intertext{
Based on Lemma \ref{lemma5}, we have
\begin{equation*}
    \sum_{i=1}^{k-1} \Big(M_k^{\dagger}(M_k{^{\dagger}})^T\Big)_{ii}  \leq \sum_{i=1}^{k-1} \Big(M_{k-1}^{\dagger}(M_{k-1}{^{\dagger}})^T\Big)_{ii} 
\end{equation*}
\begin{equation*}
    \Tr\Big(M_{k}^{\dagger}(M_{k}{^{\dagger}})^T\Big)\leq \Tr \Big(M_{k-1}^{\dagger}(M_{k-1}{^{\dagger}})^T\Big)
\end{equation*}
}
T_2(k) -T_2(k-1) &\leq 0.\qedhere
\end{align}}
\end{proof}
\begin{lemma}\label{lemma7}
    Under Lemma \ref{Under_E(k)} assumption, let  \(M_k = A_kG_k \) be a sequence of matrices evolving as \( M_k = [M_{k-1}, m_k] \), where \( m_k = \sigma_ka_k \) is an additional column vector. The projection matrices associated with these matrices are given by:
\[
P_{G_k^T A_k^T} =P_{M_k^T}= M_k^T (M_k M_k^T)^{-1} M_k, \quad
P_{G_{k-1}^T A_{k-1}^T}=P_{M_{k-1}^T} = M_{k-1}^T (M_{k-1} M_{k-1}^T)^{-1} M_{k-1}.
\]
Then,
\begin{align}
 \sum_{i=1}^{k-1}\sigma_i^2\Big(P_{M_k^T}-P_{M_{k-1}^T}\Big)_{ii} 
  \geq -\sum_{i=1}^{k-1}\sigma_i^2\bigg(M_{k-1}^T\bigg(\Big(M_{k-1}M_{k-1}^T\Big)^{-1} m_km_k^T \Big(M_{k-1}M_{k-1}^T\Big)^{-1}\bigg)M_{k-1}\bigg)_{ii}\label{eq24}.
\end{align}
\end{lemma}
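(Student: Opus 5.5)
Write $B:=M_{k-1}M_{k-1}^T\in\mathbb R^{m\times m}$, which is invertible since $k-1\ge m$ and $M_{k-1}$ has full rank almost surely. Then $M_kM_k^T=B+m_km_k^T$. As in the proof of Lemma~\ref{lemma3}, for $i\in[k-1]$ the $(i,i)$ entry of $P_{M_k^T}$ only involves the first $k-1$ columns of $M_k$:
\[
\big(P_{M_k^T}\big)_{ii}=\Big(M_{k-1}^T\big(B+m_km_k^T\big)^{-1}M_{k-1}\Big)_{ii}.
\]
The plan is to compare the two projectors entrywise. The whole difference is carried by the change of the middle inverse from $B^{-1}$ to $(B+m_km_k^T)^{-1}$.

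\textbf{Step 1 (rank-one update).} Apply Sherman--Morrison:
\[
\big(B+m_km_k^T\big)^{-1}=B^{-1}-\frac{B^{-1}m_km_k^TB^{-1}}{1+m_k^TB^{-1}m_k}.
\]
This gives the exact identity
\[
\big(P_{M_k^T}-P_{M_{k-1}^T}\big)_{ii}=-\frac{1}{1+m_k^TB^{-1}m_k}\Big(M_{k-1}^TB^{-1}m_km_k^TB^{-1}M_{k-1}\Big)_{ii},
\qquad i\in[k-1].
\]

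\textbf{Step 2 (drop the denominator).} The diagonal entry on the right equals $\big(e_i^TM_{k-1}^TB^{-1}m_k\big)^2\ge 0$. The scalar $c:=1+m_k^TB^{-1}m_k$ satisfies $c\ge1$ because $B^{-1}\succ0$. Hence
\[
\big(P_{M_k^T}-P_{M_{k-1}^T}\big)_{ii}\ \ge\ -\Big(M_{k-1}^TB^{-1}m_km_k^TB^{-1}M_{k-1}\Big)_{ii}.
\]

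\textbf{Step 3 (sum).} Multiply by $\sigma_i^2\ge0$ and sum over $i\in[k-1]$; this yields \eqref{eq24} directly.

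The only point needing care is Step 1. We must check that $B$ is invertible almost surely, which holds because $k-1\ge m$ and $A_{k-1}$ is Gaussian. We must also check that the formula for $P_{M_k^T}$ in terms of $(M_kM_k^T)^{-1}$ applies, which holds because $M_k$ has full row rank. Beyond these checks the argument is a direct rank-one perturbation computation with no real obstacle. Inequality \eqref{eq24} is loose only by the factor $1/c$, and this is exactly what the later expectation step over $a_k$ exploits.
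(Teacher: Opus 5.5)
Your proof is correct and follows the same skeleton as the paper's. For $i\in[k-1]$ both arguments reduce the diagonal entries to $M_{k-1}^T(\cdot)M_{k-1}$, with the middle inverse changing from $B^{-1}$ to $(B+m_km_k^T)^{-1}$. Both bound that change from below by $-B^{-1}m_km_k^TB^{-1}$, and both finish with nonnegativity of the diagonal terms and $\sigma_i^2\ge 0$.

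The difference is the key step. The paper invokes Lemma~\ref{lemma13}, the general operator inequality $(M+\widetilde M)^{-1}\succeq M^{-1}-M^{-1}\widetilde M M^{-1}$ for any PSD perturbation. That lemma is proved via convexity of $x\mapsto 1/x$ on the eigenvalues of $M^{-1/2}\widetilde M M^{-1/2}$, and is then transported to the diagonal by congruence with $M_{k-1}$. You instead use Sherman--Morrison to get an exact identity and then drop the scalar $1/c\le 1$.

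Your route is more elementary and self-contained, and it is sharper. It shows that in the rank-one case the residual in Lemma~\ref{lemma13} is exactly $\frac{m_k^TB^{-1}m_k}{c}\,B^{-1}m_km_k^TB^{-1}$. Your identity $(P_{M_k^T}-P_{M_{k-1}^T})_{ii}=-(e_i^TM_{k-1}^TB^{-1}m_k)^2/c\le 0$ also recovers Lemma~\ref{lemma3} for free. The paper's route buys generality, since it applies verbatim to any PSD update such as adding several columns at once, at the cost of a separate matrix inequality.

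One small correction to your closing remark. What the expectation step in Theorem~\ref{Theorem1} needs is not the looseness itself. It needs the fact that, after dropping $1/c$, the bound is linear in $m_km_k^T$, so isotropy of $a_k$ applies.

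You are right to insist on $k-1\ge m$. For $k=m$ the matrix $B$ is singular, so the lemma is only meaningful for $m+1\le k\le n$, which is exactly where it is used.
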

\begin{proof}[Proof of Lemma \ref{lemma7}.] 
Since 
{\allowdisplaybreaks
\begin{align*}
  \sum_{i=1}^{k-1}\sigma_i^2\Big(P_{G_k^TA_k^T}-P_{G_{k-1}^TA_{k-1}^T}\Big)_{ii}
  & = \sum_{i=1}^{k-1}\sigma_i^2\bigg(\Big(M_k^T\Big(M_kM_k^T\Big)^{-1}M_k\Big)_{ii}-\Big(M_{k-1}^T\Big(M_{k-1}M_{k-1}^T\Big)^{-1}M_{k-1}\Big)_{ii}\bigg) \\&=\sum_{i=1}^{k-1}\sigma_i^2\bigg(\Big(M_{k-1}^T\Big(M_kM_k^T\Big)^{-1}M_{k-1}\Big)_{ii}-\Big(M_{k-1}^T\Big(M_{k-1}M_{k-1}^T\Big)^{-1}M_{k-1}\Big)_{ii}\bigg)\\
  & = \sum_{i=1}^{k-1}\sigma_i^2\bigg(M_{k-1}^T\bigg(\Big(\Big(M_kM_k^T\Big)^{-1}-\Big(M_{k-1}M_{k-1}^T\Big)^{-1}\bigg)M_{k-1}\bigg)_{ii}
\end{align*}}
We know that $\Big(M_kM_k^T\Big)^{-1} = \Big(M_{k-1}M_{k-1}^T+m_km_k^T\Big)^{-1}$,\\
By Lemma \ref{lemma13}, we know that:
\begin{align*}
    \Big(M_{k-1}M_{k-1}^T+m_km_k^T\Big)^{-1}-\Big(M_{k-1}M_{k-1}^T\Big)^{-1}+\Big(M_{k-1}M_{k-1}^T\Big)^{-1} m_km_k^T \Big(M_{k-1}M_{k-1}^T\Big)^{-1} \succcurlyeq 0
\end{align*}
Therefore,
\begin{align*}
\sum_{i=1}^{k-1}\sigma_i^2\Big(P_{G_k^TA_k^T}-P_{G_{k-1}^TA_{k-1}^T}\Big)_{ii}=&\quad \sum_{i=1}^{k-1}\sigma_i^2\bigg(M_{k-1}^T\bigg(\Big(M_kM_k^T\Big)^{-1}-\Big(M_{k-1}M_{k-1}^T\Big)^{-1}\bigg)M_{k-1}\bigg)_{ii}\\
     &= \sum_{i=1}^{k-1}\sigma_i^2\bigg(M_{k-1}^T\bigg(\Big(M_{k-1}M_{k-1}^T+m_km_k^T\Big)^{-1}-\Big(M_{k-1}M_{k-1}^T\Big)^{-1}\bigg)M_{k-1}\bigg)_{ii}\\
     & \geq -\sum_{i=1}^{k-1}\sigma_i^2\bigg(M_{k-1}^T \Big(M_{k-1}M_{k-1}^T\Big)^{-1} m_km_k^T \Big(M_{k-1}M_{k-1}^T\Big)^{-1}M_{k-1}\bigg)_{ii}.
\end{align*}
\end{proof}
\subsection{Main Result for Overdetermined System}\label{secov}
We now consider the overdetermined regime \(1\le k\le m\).
Our goal in this section is to derive an explicit expression for
\(E(k)\) and use it to compare \(E(k)\) with the full-model error \(E(n)\).

For $k \leq m$, we know that
$\hat{z}(k) =\underset{z_k\in \R^k}{\argmin} \ \frac{1}{2} \left\| A\mathcal{G}_k z_k - y \right\|_2^2 $.
Since \(A_kG_k\) has full column rank with probability $1$, solving this
least-squares problem gives $\hat{z}(k) = \big(A_kG_k\big)^{\dagger}AG_nz^\star$, where $\hat{z}(k)\in \R^k$, $\big(A_kG_k\big)^{\dagger}=\Big(G_k^TA_k^TA_kG_k\Big)^{-1}G_k^TA_k^T$. Here $A_k \in \R^{m \times k}, G_k \in \R^{k \times k}$. 
\begin{lemma}\label{lemma8}
    Suppose $x^\star \thicksim \mathcal{G}_n$, $G_n = diag(\sigma_1,\dots,\sigma_n) \in \R ^{n\times n}$, such that $\sigma_1 \geq \sigma_2 \geq \dots \geq \sigma_n >0 $. For any $k \leq m \leq n$, the random matrix $A \in \R^{m\times n}$ with i.i.d. $\mathcal{N}(0,1)$ entries, 
    the MAP estimator obeys
\begin{align}
    \mathrm E(k)&=\mathrm E_A\mathrm E_{x^\star\thicksim \mathcal{G}_n}\big\Arrowvert \hat{x}(k) -x^\star \big\Arrowvert_{2}^{2}\nonumber\\
    &=\sum_{i=k+1}^n\sigma_i^2+\mathrm E_{A_k}\Big\Arrowvert G_k\big(A_kG_k\big)^{\dagger}\Big\Arrowvert_F^2\sum_{i=k+1}^n\sigma_i^2 \label{eq:E(k)over}
\end{align} 

\end{lemma}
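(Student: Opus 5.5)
We mirror the proof of Lemma~\ref{Under_E(k)}, now with the least-squares formula $\hat z(k)=(A_kG_k)^\dagger AG_nz^\star$, where $(A_kG_k)^\dagger=(G_kA_k^TA_kG_k)^{-1}G_kA_k^T$ is a left inverse. Set $M_k=A_kG_k$ and $\widetilde M_k=\widetilde{A_k}\widetilde{G_k}$, and split $z^\star=(z_k^\star,\widetilde{z_k^\star})$. Then
\[
AG_nz^\star=M_kz_k^\star+\widetilde M_k\widetilde{z_k^\star}.
\]
Because $M_k$ has full column rank almost surely (as $k\le m$), we have $M_k^\dagger M_k=I_k$. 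Hence
\[
G_k\hat z(k)-G_kz_k^\star=G_kM_k^\dagger\widetilde M_k\widetilde{z_k^\star}.
\]
This is the key simplification compared with the underdetermined case: the bias term $\|G_kP-G_k\|_F^2$ vanishes identically.

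Next we expand the error in the block form of the diagonal reduction (Lemma~\ref{lem:diagonal-reduction}). The error vector stacks $G_kM_k^\dagger\widetilde M_k\widetilde{z_k^\star}$ on top of $-\widetilde{G_k}\widetilde{z_k^\star}$. Taking the expectation over $\widetilde{z_k^\star}\sim\mathcal N(0,I_{n-k})$ via $\mathrm E\|Bw\|^2=\|B\|_F^2$ gives
\[
\|\widetilde{G_k}\|_F^2+\|G_kM_k^\dagger\widetilde{A_k}\widetilde{G_k}\|_F^2 .
\]
Only the independent block $\widetilde{z_k^\star}$ appears, so no cross term arises.

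We then take the expectation over $A$. The first term equals $\sum_{i>k}\sigma_i^2$. For the second, we condition on $A_k$ and use that $\widetilde{A_k}$ is independent of $A_k$ with $\mathrm E[\widetilde M_k\widetilde M_k^T]=\sum_{i>k}\sigma_i^2 I_m$. This is exactly the computation \eqref{eq16}--\eqref{eq17} in Lemma~\ref{lemma2}, and it yields $\mathrm E_{A_k}\|G_k(A_kG_k)^\dagger\|_F^2\sum_{i>k}\sigma_i^2$, which is \eqref{eq:E(k)over}.

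The only delicate point is the measure-theoretic one. The quantity $\|G_kM_k^\dagger\|_F^2$ may have infinite expectation, which happens at $k=m-1$ and $k=m$. Since every integrand is nonnegative, Tonelli's theorem justifies the interchange of expectations and the conditioning step in the extended-real sense. The identity therefore holds as stated, with both sides possibly $+\infty$.
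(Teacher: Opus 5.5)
Your proposal is correct and follows essentially the same route as the paper. Both arguments split the error into its two coordinate blocks, use $(A_kG_k)^\dagger A_kG_k=I_k$ to cancel the $z_k^\star$ contribution, average over $\widetilde{z_k^\star}$ to obtain a Frobenius norm, and then average over the independent block $\widetilde{A_k}$ using $\mathrm E[\widetilde{A_k}\widetilde{G_k}^2\widetilde{A_k}^T]=\sum_{i>k}\sigma_i^2 I_m$. Your Tonelli remark, which justifies the identity in the extended-real sense needed at $k=m-1$ and $k=m$, is a small but welcome addition that the paper leaves implicit.
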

\begin{proof}[Proof of Lemma \ref{lemma8}.] 
{\allowdisplaybreaks
\begin{align*}
    \mathrm E(k) &:= \mathrm E_A\mathrm E_{x^\star\thicksim \mathcal{G}_n}\big\Arrowvert \hat{x}(k) -x^\star \big\Arrowvert_{2}^{2}\\
&=\mathrm E_A\mathrm E_{z^\star\thicksim \mathcal{N}(0,I_n)}\Big\Arrowvert \mathcal{G}_k\hat{z}(k)-G_nz^\star \Big\Arrowvert_{2}^{2}\\
&=\mathrm E_A\mathrm E_{z^\star}\bigg\Arrowvert \binom{\mathrm{G}_k}{0}\hat{z}(k)-\begin{pmatrix}
G_k & 0 \\
0 & \widetilde{G_k}
\end{pmatrix}z^\star\bigg\Arrowvert_{2}^{2} \\
&=\mathrm E_A\mathrm E_{z^\star}\Big\Arrowvert G_k\hat{z}(k)-G_k z^\star_{k}\Big\Arrowvert_2^2+ \sum_{i=k+1}^n \sigma_i^2\\
&=\mathrm E_A\mathrm E_{z^\star}\Big\Arrowvert G_k\big(A_kG_k\big)^{\dagger}AG_nz^\star-G_kz_k^\star \Big\Arrowvert_2^2+ \sum_{i=k+1}^n \sigma_i^2\\
&=\mathrm  E_A\mathrm E_{z^\star}\Big\Arrowvert G_k\big(A_kG_k\big)^{\dagger}\Big(A_kG_k z^\star_{k}+\widetilde{A_k}\widetilde{G_k}\widetilde{z^\star_{k}}\Big)-G_k z^\star_{k}\Big\Arrowvert_2^2+ \sum_{i=k+1}^n \sigma_i^2\\
& = \mathrm E_A\mathrm E_{z^\star}\Big\Arrowvert G_k\Big(G_k^TA_k^TA_kG_k\Big)^{-1}G_k^TA_k^T\Big(A_kG_k z^\star_{k}+\widetilde{A_k}\widetilde{G_k}\widetilde{z^\star_{k}}\Big)-G_k z^\star_{k}\Big\Arrowvert_2^2+ \sum_{i=k+1}^n \sigma_i^2\\
& = \mathrm E_A\mathrm E_{z^\star}\Big\Arrowvert G_k\Big(G_k^TA_k^TA_kG_k\Big)^{-1}G_k^TA_k^TA_kG_k z^\star_{k}+G_k\Big(G_k^TA_k^TA_kG_k\Big)^{-1}G_k^TA_k^T\widetilde{A_k}\widetilde{G_k}\widetilde{z^\star_{k}}-G_k z^\star_{k}\Big\Arrowvert_2^2\\&\qquad+ \sum_{i=k+1}^n \sigma_i^2\\
&=\mathrm E_A\mathrm E_{z^\star}\Big\Arrowvert G_k\big(A_kG_k\big)^{\dagger}\widetilde{A_k}\widetilde{G_k}\widetilde{z^\star_{k}}\Big\Arrowvert_2^2
+\sum_{i=k+1}^n\sigma_i^2\\
& = \sum_{i=k+1}^n \sigma_i^2 + \mathrm E_A\Big\Arrowvert G_k\big(A_kG_k\big)^{\dagger}\widetilde{A_k}\widetilde{G_k}\Big\Arrowvert_F^2
\\
& =\sum_{i=k+1}^n\sigma_i^2 + \mathrm E_{A_k}\Big\Arrowvert G_k\big(A_kG_k\big)^{\dagger}\Big\Arrowvert_F^2\sum_{i=k+1}^n\sigma_i^2. \qedhere
\end{align*}}   
\end{proof}
The proof of Lemma~\ref{lemma:closed-form-k-less-m} relies on the following
result of~\cite{halko2011finding}. We use the Frobenius-norm specialization
of \cite[Theorem~10.5]{halko2011finding} and include its proof below for the
reader's convenience. The argument follows the proof of
\cite[Theorem~10.5]{halko2011finding} and refers to
\cite[Theorem~9.1 and Propositions~10.1--10.2]{halko2011finding}.
\begin{lemma}[Special case of the result in \cite{halko2011finding}]
\label{lem:hmt-frob-specialized}
Let
$\Sigma=\operatorname{diag}(\sigma_1,\ldots,\sigma_n),    \sigma_1\ge\cdots\ge\sigma_n>0,$
and fix \(1\le k\le m-2\). Partition
\[
    \Sigma=
    \begin{pmatrix}
    \Sigma_1 & 0\\
    0 & \Sigma_2
    \end{pmatrix},
    \qquad
    \Sigma_1\in\mathbb R^{k\times k},
    \qquad
    \Sigma_2\in\mathbb R^{(n-k)\times(n-k)}.
\]
Let \(\Omega\in\mathbb R^{n\times m}\) be a standard Gaussian matrix and
partition it as
\[
    \Omega=
    \begin{pmatrix}
    \Omega_1\\
    \Omega_2
    \end{pmatrix},
    \qquad
    \Omega_1\in\mathbb R^{k\times m},
    \qquad
    \Omega_2\in\mathbb R^{(n-k)\times m}.
\]
Let \(Y=\Sigma\Omega\), and let \(P_Y\) denote the orthogonal projector
onto \(\operatorname{range}(Y)\). Then
\[
   \mathrm E_{\Omega}
    \bigl\|(I-P_Y)\Sigma\bigr\|_F^2
    \le
    \left(1+\frac{k}{m-k-1}\right)
    \|\Sigma_2\|_F^2 .
\]
\end{lemma}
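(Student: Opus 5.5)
We follow the Halko--Martinsson--Tropp argument, specialized to the Frobenius norm. The plan is to first prove a deterministic bound for a fixed $\Omega$, and then average it over the Gaussian distribution.

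\emph{Deterministic step.} Since $\Omega_1\in\mathbb R^{k\times m}$ with $k<m$ is Gaussian, it has full row rank $k$ almost surely. So $\Omega_1^\dagger=\Omega_1^T(\Omega_1\Omega_1^T)^{-1}$ is a right inverse: $\Omega_1\Omega_1^\dagger=I_k$. Consider the matrix
\[
Z:=Y\Omega_1^\dagger\Sigma_1^{-1}=\begin{pmatrix} I_k\\ F\end{pmatrix},\qquad F:=\Sigma_2\Omega_2\Omega_1^\dagger\Sigma_1^{-1}.
\]
Its range is contained in $\operatorname{range}(Y)$, so $P_Z\preceq P_Y$ and therefore $\|(I-P_Y)\Sigma\|_F^2\le\|(I-P_Z)\Sigma\|_F^2$. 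Because $Z$ has full column rank, $P_Z=Z(Z^TZ)^{-1}Z^T$ with $Z^TZ=I+F^TF$. Write out $I-P_Z$ in $2\times2$ block form. Use the operator inequalities $I-(I+F^TF)^{-1}\preceq F^TF$ for the top-left block and $I-F(I+F^TF)^{-1}F^T\preceq I$ for the bottom-right block. Following \cite[Proposition~8.2 / Theorem~9.1]{halko2011finding}, this gives
\[
(I-P_Z)\preceq\begin{pmatrix} F^TF & -(I+F^TF)^{-1}F^T\\ -F(I+F^TF)^{-1} & I\end{pmatrix}.
\]
Conjugating by $\Sigma$ and taking traces, using $\|(I-P)\Sigma\|_F^2=\operatorname{Tr}(\Sigma(I-P)\Sigma)$, yields the key deterministic inequality
\[
\|(I-P_Y)\Sigma\|_F^2\le\|\Sigma_2\|_F^2+\|\Sigma_2\Omega_2\Omega_1^\dagger\|_F^2 .
\]
The off-diagonal blocks drop out because they do not contribute to the diagonal trace. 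Only the diagonal blocks, $\Sigma_1F^TF\Sigma_1=(\Sigma_2\Omega_2\Omega_1^\dagger)^T(\Sigma_2\Omega_2\Omega_1^\dagger)$ and $\Sigma_2^2$, remain. A cleaner alternative route I would check first is $\operatorname{Tr}(\Sigma(I-P_Z)\Sigma)=\min_X\|\Sigma-ZX\|_F^2$, evaluated at the trial choice $X=\Sigma_1[I\ 0]$. This choice directly gives $\|\Sigma_2\|_F^2+\|F\Sigma_1\|_F^2$, because the residual has top block $0$, bottom-left block $F\Sigma_1$ and bottom-right block $\Sigma_2$. This avoids the semidefinite block bounds entirely, so I would use it.

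\emph{Probabilistic step.} The blocks $\Omega_1$ and $\Omega_2$ are disjoint sets of entries, hence independent. Conditioning on $\Omega_1$ and using $\mathrm E\,\Omega_2 B B^T\Omega_2^T=\|B\|_F^2 I$ for a standard Gaussian $\Omega_2$ (Proposition~10.1 of \cite{halko2011finding}) gives
\[
\mathrm E\|\Sigma_2\Omega_2\Omega_1^\dagger\|_F^2=\|\Sigma_2\|_F^2\,\mathrm E\|\Omega_1^\dagger\|_F^2 .
\]
Finally, $\|\Omega_1^\dagger\|_F^2=\operatorname{Tr}((\Omega_1\Omega_1^T)^{-1})$, and $\Omega_1\Omega_1^T$ is Wishart $W_k(m,I)$. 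Its inverse has mean $I/(m-k-1)$, which is finite precisely when $m-k\ge2$ (Proposition~10.2 of \cite{halko2011finding}). So $\mathrm E\|\Omega_1^\dagger\|_F^2=k/(m-k-1)$, and combining the two steps gives the claimed bound.

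The main obstacle is the deterministic step, namely getting the comparison $P_Z\preceq P_Y$ and the trace bound right. The least-squares trial-matrix formulation should make it routine. The inverse-Wishart moment is standard and is the only place where the hypothesis $k\le m-2$ enters.
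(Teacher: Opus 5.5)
Your proposal is correct and follows the paper's proof essentially step for step. You obtain the deterministic range-finder bound $\|(I-P_Y)\Sigma\|_F^2\le\|\Sigma_2\|_F^2+\|\Sigma_2\Omega_2\Omega_1^\dagger\|_F^2$ through the flattened matrix $Z=Y\Omega_1^\dagger\Sigma_1^{-1}$ and the semidefinite block bounds, exactly as in the paper's appendix, and then use conditioning on $\Omega_1$, the Gaussian Frobenius moment identity, and the inverse-Wishart mean $\mathrm{E}(\Omega_1\Omega_1^\top)^{-1}=I_k/(m-k-1)$.

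Your trial-matrix shortcut $X=\Sigma_1[I\ 0]$ in $\min_X\|\Sigma-ZX\|_F^2$ is a valid and slightly cleaner substitute for the semidefinite block argument. It even works directly with the trial product $Y\Omega_1^\dagger[I\ 0]$, without needing $\Sigma_1^{-1}$.
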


\begin{proof}[Proof of Lemma \ref{lem:hmt-frob-specialized}.]
We follow the proof of
\cite[Theorem~10.5]{halko2011finding}, specialized to the present
Frobenius-norm setting and notation.

Since \(m\ge k+2\), the matrix \(\Omega_1\) has full row rank almost
surely. Applying the deterministic range-finder bound
\cite[Theorem~9.1]{halko2011finding} with the Frobenius norm gives
\[
    \bigl\|(I-P_Y)\Sigma\bigr\|_F^2
    \le
    \|\Sigma_2\|_F^2
    +
    \|\Sigma_2\Omega_2\Omega_1^\dagger\|_F^2 .
\]
Conditioning on \(\Omega_1\), the matrix \(\Omega_2\) is an independent
standard Gaussian matrix. By the Gaussian Frobenius moment identity
\cite[Proposition~10.1]{halko2011finding},
\[
    \mathrm E_{\Omega_2}
    \|\Sigma_2\Omega_2\Omega_1^\dagger\|_F^2
    =
    \|\Sigma_2\|_F^2
    \|\Omega_1^\dagger\|_F^2 .
\]
It remains to average over \(\Omega_1\). Since \(\Omega_1\) has full row
rank,
$\|\Omega_1^\dagger\|_F^2 =
\operatorname{tr}\bigl((\Omega_1\Omega_1^\top)^{-1}\bigr).$
Moreover, with Lemma~\ref{lem:iw-moment},
$\Omega_1\Omega_1^\top\sim \operatorname{Wishart}_k(I_k,m).$
Using the mean of inverse-Wishart,
\[
    \mathrm E(\Omega_1\Omega_1^\top)^{-1}
    =
    \frac{1}{m-k-1}I_k,
\]
which is finite because \(m>k+1\), we obtain
\[
    \mathrm E_{\Omega_1}\|\Omega_1^\dagger\|_F^2
    =
    \frac{k}{m-k-1}.
\]
Therefore
\[
    \mathrm E_{\Omega}
    \|\Sigma_2\Omega_2\Omega_1^\dagger\|_F^2
    =
    \frac{k}{m-k-1}\|\Sigma_2\|_F^2 .
\]
Taking expectation in the deterministic range-finder bound yields
\[
    \mathrm E_{\Omega}
    \bigl\|(I-P_Y)\Sigma\bigr\|_F^2
    \le
    \left(1+\frac{k}{m-k-1}\right)
    \|\Sigma_2\|_F^2 .
\]
\end{proof}
We now prove Lemma~\ref{lemma:closed-form-k-less-m}.
After deriving the exact expression for \(E(k)\) in the overdetermined regime,
we can verify $E(n) \leq E(k)$ for $1\le k <m-1$.
\allowdisplaybreaks{
\begin{proof}[Proof of Lemma~\ref{lemma:closed-form-k-less-m}.] 
Fix \(1\leq k\leq m-2\). 
By Lemma~\ref{lemma8}, we have
\[\mathrm E(k)=\sum_{i=k+1}^n\sigma_i^2+\mathrm E_{A_k}\Big\Arrowvert G_k\big(A_kG_k\big)^{\dagger}\Big\Arrowvert_F^2\sum_{i=k+1}^n\sigma_i^2.\]
Now, by Lemma~\ref{lemma9}, we get  
\begin{align}
    \mathrm E(k)= \sum_{i=k+1}^n\sigma_i^2+ \frac{k}{m-k-1} \sum_{i=k+1}^n\sigma_i^2\label{eq:exact-Ek-overdetermined}
\end{align}
It remains to compare \(E(k)\) with the full-model error \(E(n)\). By the
definition of the full-model estimator,
\begin{align*}
    \mathrm E(n) & = \mathrm E_{A,x^\star}\|\hat{x}(n)-x^\star\|_2^2 \\
    & = \mathrm E_{A,z^\star}\|G_nG_n^TA^T(AG_nG_n^TA^T)^{-1}AG_nz^\star-G_nz^\star\|_2^2  \\
    & = \mathrm E_{A}\|G_nG_n^TA^T(AG_nG_n^TA^T)^{-1}AG_n-G_n\|_F^2  \\
    & =\mathrm E_{A}\Big\|G_n\Big(\mathrm{P}_{G^T_nA^T}-I_n\Big)\Big\|_F^2 \\
    & = \mathrm E_{A}\Big\|\Big(I_n-\mathrm{P}_{G^T_nA^T}\Big)G_n\Big\|_F^2,
\end{align*}
where the last equality follows from the invariance of the Frobenius norm
under transposition, together with the symmetry of
\(P_{G_n^\top A^\top}\) and \(G_n\).
Here \(P_{G_n^\top A^\top}\) denotes the orthogonal projector onto
\(\operatorname{Range}(G_n^\top A^\top)\).  Since \(G_n=\Sigma\) is
diagonal, \(G_n^\top A^\top=\Sigma A^\top\).

Let \(\Omega=A^\top\). Since \(A\) has i.i.d. standard Gaussian entries,
\(\Omega\) is a standard Gaussian matrix, so Lemma~\ref{lem:hmt-frob-specialized}
applies.
Thus,
\[
    P_{G_n A^\top}=P_{\Sigma\Omega}.
\]
Thus
\[
    E(n)
    =
    \mathrm E_{\Omega}
    \bigl\|
    (I_n-P_{\Sigma\Omega})\Sigma
    \bigr\|_F^2 .
\]
Applying Lemma~\ref{lem:hmt-frob-specialized} with
\(Y=\Sigma\Omega\) gives
\[
    E(n)
    \le
    \left(1+\frac{k}{m-k-1}\right)
    \|\Sigma_2\|_F^2 .
\]
Since
\[
    \|\Sigma_2\|_F^2
    =
    \sum_{i=k+1}^n\sigma_i^2,
\]
we obtain
\[
    E(n)
    \le
    \left(1+\frac{k}{m-k-1}\right)
    \sum_{i=k+1}^n\sigma_i^2
    =
    E(k),
\]
where the last equality is Eq.\eqref{eq:exact-Ek-overdetermined} This
proves both the closed-form formula for \(E(k)\) and the comparison
\(E(n)\le E(k)\).
\end{proof}}
We next record an auxiliary estimate used in the analysis of the overdetermined
regime \(k<m\). In this regime, the reconstruction error contains the term $T(k)
    :=
    \mathrm E_{A_k}
    \left\|
        G_k (A_kG_k)^\dagger
    \right\|_F^2,$
which measures the size of the least-squares reconstruction operator associated
with the \(k\)-dimensional prior. The following lemma gives an explicit formula
for this term.
\begin{lemma}\label{lemma9}
Under Lemma \ref{lemma8} assumptions, let $T(k) =\mathrm E_{A_k} \Big\Arrowvert G_k\big(A_kG_k\big)^{\dagger}\Big\Arrowvert_F^2 $. For all $k+1 \leq m$, $T(k) =\frac{k}{m-k-1}$, which is monotone increasing in \(k\).
\end{lemma}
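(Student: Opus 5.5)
The computation rests on one observation: for $k\le m$ the diagonal matrix $G_k$ cancels out of $G_k(A_kG_k)^{\dagger}$. Indeed, $A_kG_k$ has full column rank almost surely, since $G_k$ is invertible and $A_k\in\mathbb R^{m\times k}$ with $k<m$ has full column rank with probability one. Hence
\[
G_k(A_kG_k)^{\dagger}
=G_k\bigl(G_kA_k^\top A_kG_k\bigr)^{-1}G_kA_k^\top
=G_kG_k^{-1}(A_k^\top A_k)^{-1}G_k^{-1}G_kA_k^\top
=(A_k^\top A_k)^{-1}A_k^\top=A_k^{\dagger},
\]
using $G_k^\top=G_k$. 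So $T(k)=\mathrm E_{A_k}\|A_k^\dagger\|_F^2$, which is independent of the spectrum.

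Next I compute the Frobenius norm:
\[
\|A_k^\dagger\|_F^2=\operatorname{tr}\bigl((A_k^\top A_k)^{-1}A_k^\top A_k(A_k^\top A_k)^{-1}\bigr)=\operatorname{tr}\bigl((A_k^\top A_k)^{-1}\bigr).
\]
The matrix $W=A_k^\top A_k$ is $\operatorname{Wishart}_k(I_k,m)$, because the rows of $A_k$ are i.i.d.\ $\mathcal N(0,I_k)$ vectors in $\mathbb R^k$. I would then invoke the inverse-Wishart mean, as in Lemma~\ref{lem:iw-moment} and the proof of Lemma~\ref{lem:hmt-frob-specialized}: $\mathrm E\,W^{-1}=\frac{1}{m-k-1}I_k$ when $m>k+1$. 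Taking the trace gives $T(k)=\frac{k}{m-k-1}$.

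This mean computation is the only nontrivial step, and it is also where the hypothesis matters. If one wants a self-contained argument instead of citing the moment, I would use exchangeability plus a Schur-complement identity. By symmetry, $\mathrm E\,\operatorname{tr}W^{-1}=k\,\mathrm E (W^{-1})_{11}$. Writing $a_1$ for the first column of $A_k$, we have $(W^{-1})_{11}=1/\|(I-P)a_1\|^2$, where $P$ projects onto the span of the other $k-1$ columns. Conditionally on those columns, $\|(I-P)a_1\|^2\sim\chi^2_{m-k+1}$, and $\mathrm E[1/\chi^2_\nu]=1/(\nu-2)$ for $\nu>2$. With $\nu=m-k+1$ this gives $1/(m-k-1)$, which is finite exactly when $m\ge k+2$. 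The stated hypothesis $k+1\le m$ therefore has to be read as $k\le m-2$ for the value to be finite; at $k=m-1$ the expectation is $+\infty$, consistent with Proposition~\ref{prop:boundary-divergence}. I would note this explicitly in the proof.

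Finally, monotonicity is immediate: as $k$ increases, the numerator $k$ increases and the positive denominator $m-k-1$ decreases, so $\frac{k}{m-k-1}$ is strictly increasing on $1\le k\le m-2$.
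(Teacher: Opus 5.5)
Your proposal is correct and follows essentially the same route as the paper: cancel the invertible $G_k$ to get $\|G_k(A_kG_k)^\dagger\|_F^2=\operatorname{Tr}\bigl((A_k^\top A_k)^{-1}\bigr)$, then apply the inverse-Wishart mean. Your self-contained Schur-complement/$\chi^2_{m-k+1}$ derivation of that mean is a valid alternative, and you are right that the hypothesis must really be $k\le m-2$: the paper's proof itself restricts to $k<m-1$, and $T(m-1)=+\infty$.
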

\begin{proof}[Proof of Lemma \ref{lemma9}.]
To establish the desired result, recall $A_k \in R^{m\times k}, G_k \in \R^{k\times k }.$ Let $M_k = A_kG_k \in \R^{m\times k}$, for $k\leq m$, observe that $\text{Rank}(M_k)=k$, hence $M_k$ has full column rank, and $M_k^{\dagger} = (M_k^{\top}M_k)^{-1}M_k^{\top}.$   
    \allowdisplaybreaks{
We have
    \begin{align*}
       \Big\Arrowvert G_k\big(A_kG_k\big)^{\dagger}\Big\Arrowvert_F^2 &= \Big\Arrowvert G_kM_k^{\dagger}\Big\Arrowvert_F^2 \\&= \Big\langle G_k M_k^{\dagger},G_k M_k^{\dagger} \Big\rangle  \\
        & = \Big\langle G_k^TG_k,M_k^{\dagger}(M_k^{\dagger})^T\Big\rangle\\
        & = \Big\langle G_k^TG_k,(M_k^{\top}M_k)^{-1}M_k^{\top}M_k(M_k^{\top}M_k)^{-1}\Big\rangle\\
        & = \Big\langle G_k^TG_k,(M_k^{\top}M_k)^{-1}\Big\rangle\\
        & = \Big\langle G_k^TG_k,(G_k^{\top}A_k^{\top}A_kG_k)^{-1}\Big\rangle\\
        & = \Big\langle G_k^TG_k,G_k^{-1}(A_k^{\top}A_k)^{-1}G_k^{-\top}\Big\rangle\\
        & = \Big\langle I,(A_k^{\top}A_k)^{-1}\Big\rangle\\
       & = \text{Tr}\Big((A_k^{\top}A_k)^{-1}\Big)\\
\end{align*}
Therefore, following the Inverse Wishart Matrix definition, and apply Lemma~\ref{lem:wishart-inverse}, we know that $\mathrm E_{A_k}\Big\Arrowvert G_k\big(A_kG_k\big)^{\dagger}\Big\Arrowvert_F^2  =  \mathrm E_{A_k}\text{Tr}\Big((A_k^{\top}A_k)^{-1}\Big) = E_{A_k}\frac{I_k}{m-k-1} = \frac{k}{m-k-1}$, where $k < m-1$. It is clear that \(T(k)\) increases monotonically with \(k\).}
\end{proof}

\subsection{Boundary divergence at the measurement threshold}\label{sec:boundary-divergence}
In this section, we consider the two critical dimensions
\(k=m-1\) and \(k=m\). We show that
\[
E(m-1)=+\infty,\quad E(m)=+\infty.
\]
\begin{proof}[Proof of Proposition \ref{prop:boundary-divergence}.]
Recall from Section~\ref{sec:diag} that, without loss of generality,
we can take
\(G_n=\operatorname{diag}(\sigma_1,\ldots,\sigma_n),\)
and let \(A_k\in\mathbb R^{m\times k}\) denote the first \(k\) columns of
\(A\).

Fix \(k\in\{m-1,m\}\). Since \(A_k\) has full column rank almost surely and
\(G_k\) is invertible,
$(A_kG_k)^\dagger=G_k^{-1}A_k^\dagger,$
and therefore
$G_k(A_kG_k)^\dagger=A_k^\dagger.$
The reconstruction-error formula Eq.\eqref{eq:E(k)over} for \(k\le m\) consequently gives
\[
    E(k)
    =
    \left(
        1+\mathrm E\|A_k^\dagger\|_F^2
    \right)
    \sum_{i=k+1}^n\sigma_i^2.
\]
Since \(k<n\) and
$\sum_{i=k+1}^n\sigma_i^2>0.$
It therefore remains to prove that
$\mathrm E\|A_k^\dagger\|_F^2=+\infty.$

Let
$ A_k=Q_kR_k$
be the reduced QR decomposition, where
\(Q_k\in\mathbb R^{m\times k}\) has orthonormal columns and
\(R_k\in\mathbb R^{k\times k}\) is upper triangular with positive diagonal
entries. Since
$A_k^\dagger=R_k^{-1}Q_k^\top,$
we have
\[
    \|A_k^\dagger\|_F^2
    =
    \|R_k^{-1}Q_k^\top\|_F^2
    =\|R_k^{-1}\|_F^2
=
\sum_{i,j=1}^k |(R_k^{-1})_{ij}|^2
\ge
|(R_k^{-1})_{k,k}|^2
=
\frac{1}{(R_k)_{k,k}^2}.
\]
The second equality follows from $Q_k^\top Q_k=I_k$; equivalently, right multiplication by $Q_k^\top$ preserves the Frobenius norm in this case. The last equality follows from the fact that $R_k$ is invertible and upper triangular, so the diagonal entries of $R_k^{-1}$ are the reciprocals of those of $R_k$; in particular, $(R_k^{-1})_{kk}=(R_k)_{kk}^{-1}$.

By the Bartlett decomposition \cite[Theorem~3.2.14]{muirhead2009aspects} for a standard Gaussian
\(m\times k\) matrix,
\[
   (R_k)_{j,j}^2\sim\chi^2_{m-j+1},
    \qquad j=1,\ldots,k.
\]
In particular,
\[
    (R_k)_{k,k}^2\sim\chi^2_{m-k+1}.
\]
For \(k\in\{m-1,m\}\), the number of degrees of freedom
\(m-k+1\) is either \(2\) or \(1\). Since
\[
    \mathrm E\left[\frac{1}{\chi_\nu^2}\right]=+\infty,
    \qquad \nu\le 2,
\]
it follows that
\[
    \mathrm E\|A_k^\dagger\|_F^2
    \ge
    \mathrm E\left[\frac{1}{(R_k)_{k,k}^2}\right]
    =
    +\infty.
\]
Consequently,
\[
    E(k)=+\infty,
    \qquad k\in\{m-1,m\},
\]
which proves the claim.
\end{proof}

\section*{Acknowledgments}
The authors thank Sean Gunn for valuable feedback and discussions. 

\section*{Funding}
PH acknowledges support from NSF Awards DMS-1848087 and DMS-2022205.
\section*{Data Availability}
No experimental datasets were generated or analyzed in this study. The numerical experiments reported in the paper use synthetically generated data according to the models described in the manuscript.

\clearpage
\appendix
\section{MAP interpretation of the reconstruction rule}\label{sec:map}
We briefly explain how the estimator in \eqref{eq0} arises from a MAP principle.
Let $\mathcal{G}_k \subset \mathbb{R}^n$ denote a $k$-dimensional generative model. The maximum a posteriori estimator of $x^\star$ under the prior induced by $\mathcal{G}_k$ is
\begin{align*}
    \hat{x}_{\text{MAP}}(k)=
    \underset{x \in \text{Range}(\mathcal{G}_k)}{\argmin}
 \frac{1}{2}\left\|y - Ax \right\|_2^2+ \gamma^2\log \mathcal{P}_{\mathcal{G}_k(x)},
\end{align*}
where $\mathcal{P}_{\mathcal{G}_k(x)}$ is the density function induced by $\mathcal{G}_k(x) $ over $\text{Range}\ (\mathcal{G}_k)$ and evaluated at $x$, and $\left\|\cdot \right\|_2$ is the $\ell_2$ norm, and $\gamma$ denotes the noise level.
In the
noiseless compressed sensing setting studied here, we consequently analyze the MAP estimator as $\gamma \downarrow 0$.
Suppose $A$ is full rank, the reconstruction rule
takes different forms depending on whether the latent dimension is smaller
or larger than the number of measurements.
Thus, the MAP estimator can be written as:
\begin{align*}
    \hat{x}(k) = 
\begin{cases} 
\underset{x \in \text{Range}(\mathcal{G}_k)}{\argmin} \log\mathcal{P}_{\mathcal{G}_k(x)} \quad \text{s.t.} \quad y = Ax, & \text{for } k \geq m  \\
\underset{x \in \text{Range}(\mathcal{G}_k)}{\argmin}\ \frac{1}{2} \left\| y-Ax \right\|_2^2, & \text{for }  k < m
\end{cases} \
\end{align*}
It is more convenient to work with the estimator through their latent representation. Thus, in latent coordinates, the MAP estimator
can be written as $\hat{x}(k) = \mathcal{G}_k
\hat{z}(k)$
where
\begin{align*}
\hat{z}(k) = 
\begin{cases} 
\underset{z_k \in \R^k}{\argmin}\ \frac{1}{2}\| z_k \|_2^2 \quad \text{s.t.} \quad y = A\mathcal{G}_k z_k, & \text{for } k \geq m  \\
\underset{z_k \in \R^k}{\argmin} \ \frac{1}{2} \left\| A\mathcal{G}_k z_k - y \right\|_2^2, & \text{for }  k < m
\end{cases} \  
\end{align*}
This yields the two-regime estimator stated in \eqref{eq0}.
\section{Proofs of auxiliary results}
\begin{lemma}\label{Lemma10}
Let $x \sim \mathcal{N}\left(0, I_n\right) \in \R^n$ then for any matrix $M \in \R^{m \times n}$, we have
    $$
    \mathrm{E}_x\|M x\|^2=\|M\|_F^2 .
    $$
\end{lemma}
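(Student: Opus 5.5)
The plan is to expand the squared norm coordinatewise and use only linearity of expectation together with the second moments of a standard Gaussian vector.

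Write $Mx=\sum_{j=1}^n x_j M_{:,j}$, where $M_{:,j}$ denotes the $j$-th column of $M$. Then
\[
\|Mx\|_2^2 = x^\top M^\top M x = \sum_{j,l=1}^n x_j x_l (M^\top M)_{jl}.
\]
Taking expectations term by term, which is justified because the sum is finite and each $x_jx_l$ is integrable, gives
\[
\mathrm E_x\|Mx\|_2^2 = \sum_{j,l}(M^\top M)_{jl}\,\mathrm E[x_jx_l].
\]
For $x\sim\mathcal N(0,I_n)$ we have $\mathrm E[x_jx_l]=\delta_{jl}$. Hence the sum collapses to
\[
\sum_j (M^\top M)_{jj} = \Tr(M^\top M) = \|M\|_F^2.
\]

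The same computation can be written more compactly as
\[
\mathrm E\,\Tr(M x x^\top M^\top) = \Tr\bigl(M\,\mathrm E[xx^\top]\,M^\top\bigr) = \Tr(MM^\top).
\]
This step uses the cyclicity of the trace and the fact that the trace commutes with expectation.

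There is no real obstacle here. The only point to check is that the second moments are finite, which holds for Gaussians, so the interchange of expectation with the finite sum is legitimate. The identity is used in the paper both for fixed $M$ and conditionally on $A$, where $M$ depends only on $A$ and $x=z^\star$ is independent of $A$. In that setting the statement applies verbatim under the conditional expectation.
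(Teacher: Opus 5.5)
Your proof is correct and is essentially the paper's argument. The paper writes $\mathrm E_x\|Mx\|^2=\mathrm E_x\langle M^\top M, xx^\top\rangle=\langle M^\top M, I_n\rangle=\|M\|_F^2$, which is your coordinatewise expansion with $\mathrm E[x_jx_l]=\delta_{jl}$ written in matrix form.
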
 
\begin{proof}[Proof of Lemma \ref{Lemma10}.]
$$
\mathbb{E}_x\|M x\|^2=\mathbb{E}_x\langle M x, M x\rangle=\mathbb{E}_x\left\langle M^T M, x x^T\right\rangle=\left\langle M^T M, I_n\right\rangle=\|M\|_F^2 .
$$
\end{proof}
\begin{lemma}\label{Lemma11}
    Let $x \sim \mathcal{N}(0, \Sigma) \in \R^n$ then for any matrix $M \in \R^{m \times n}$, we have
    $$
    \mathrm{E}_x\|M x\|^2=\operatorname{Tr}\left(M \Sigma M^T\right) .
    $$
\end{lemma}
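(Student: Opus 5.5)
This identity follows from Lemma~\ref{Lemma10} by a linear change of variables. I will write the general Gaussian vector as a linear image of a standard one and then invoke the identity already proved for \(\mathcal N(0,I_n)\).

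\textbf{Step 1: factor the covariance.} Since \(\Sigma\) is a covariance matrix, it is symmetric positive semidefinite. It therefore admits a square root \(\Sigma^{1/2}\) with \(\Sigma^{1/2}\Sigma^{1/2}=\Sigma\) and \((\Sigma^{1/2})^T=\Sigma^{1/2}\), for instance from the spectral decomposition. Any factor \(L\) with \(LL^T=\Sigma\), such as a Cholesky factor, would serve equally well.

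\textbf{Step 2: represent \(x\) through a standard Gaussian.} Let \(w\sim\mathcal N(0,I_n)\). Then \(\Sigma^{1/2}w\) is Gaussian with mean zero and covariance \(\Sigma^{1/2}I_n\Sigma^{1/2}=\Sigma\). Hence \(x\) and \(\Sigma^{1/2}w\) have the same distribution, and
\[
\mathrm E_x\|Mx\|^2=\mathrm E_w\|M\Sigma^{1/2}w\|^2 .
\]

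\textbf{Step 3: apply Lemma~\ref{Lemma10}.} Applying Lemma~\ref{Lemma10} to the matrix \(M\Sigma^{1/2}\in\mathbb R^{m\times n}\) gives
\[
\mathrm E_w\|M\Sigma^{1/2}w\|^2=\|M\Sigma^{1/2}\|_F^2=\operatorname{Tr}\bigl(M\Sigma^{1/2}\Sigma^{1/2}M^T\bigr)=\operatorname{Tr}\bigl(M\Sigma M^T\bigr).
\]

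\textbf{Alternative route.} One can skip the factorization and mimic the proof of Lemma~\ref{Lemma10} directly:
\[
\mathrm E\langle Mx,Mx\rangle=\langle M^TM,\mathrm E\,xx^T\rangle=\langle M^TM,\Sigma\rangle=\operatorname{Tr}(M^TM\Sigma)=\operatorname{Tr}(M\Sigma M^T),
\]
where the last step uses cyclicity of the trace.

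\textbf{Main point to check.} The only step needing care is the equality in distribution in Step 2 when \(\Sigma\) is singular. It still holds, because a Gaussian law is determined by its mean and covariance. The alternative route avoids this issue entirely, since it uses only \(\mathrm E\,xx^T=\Sigma\).
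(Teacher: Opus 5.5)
Your proof is correct, but your main route differs from the paper's. You whiten, writing $x \stackrel{d}{=} \Sigma^{1/2}w$ with $w\sim\mathcal N(0,I_n)$, and then apply Lemma~\ref{Lemma10} to $M\Sigma^{1/2}$. The paper instead pushes forward to $Y=Mx\sim\mathcal N(0,M\Sigma M^T)$ and identifies $\mathrm E\|Y\|^2$ with the sum of the coordinate variances, i.e.\ $\operatorname{Tr}\operatorname{Cov}(Y)$. Your ``alternative route'' is essentially the paper's argument, written through the trace inner product $\langle M^TM,\mathrm E\,xx^T\rangle$ rather than through the covariance of $Mx$.

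The whitening route lets you reuse Lemma~\ref{Lemma10} directly. The price is the equality-in-distribution step, which is where Gaussianity enters, along with the care about singular $\Sigma$ that you flagged. The second-moment computation, in your form or the paper's, needs neither. It shows the identity holds for any mean-zero random vector with covariance $\Sigma$; the paper's remark that $Y$ is Gaussian is only used through its covariance.

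Your write-up is also a bit cleaner than the paper's. Its chain $\mathrm E[Y^TY]=\sum_i Y_i^2=\sum_{i=1}^n\operatorname{Var}(Y_i)$ drops an expectation, and the sum should run over $i=1,\dots,m$ since $Y\in\mathbb R^m$.
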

\begin{proof}[Proof of Lemma \ref{Lemma11}.] 
    Let $Y=M x$, then $Y \sim \mathcal{N}\left(0, M \Sigma M^T\right)$. Thus,
$$
\mathbb{E}_x\|M x\|^2=\mathbb{E}_Y\|Y\|^2=\mathbb{E}\left[Y^T Y\right]=\sum_i^n Y_i^2=\sum_{i=1}^n \operatorname{Var}\left(Y_i\right)=\operatorname{Tr}\left(M \Sigma M^T\right).
$$
\end{proof}
\begin{lemma}\cite{vershynin2010introduction}
Let $X, Y$ be independent isotropic random vectors in $\mathbb{R}^n$. Then $\mathbb{E}\|X\|_2^2=$ $n$ and $\mathbb{E}\langle X, Y\rangle^2=n$.
\end{lemma}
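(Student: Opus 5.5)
This is the standard isotropy fact: for independent isotropic $X,Y\in\mathbb R^n$, $\mathbb E\|X\|_2^2=n$ and $\mathbb E\langle X,Y\rangle^2=n$. Recall that isotropy means $\mathbb E[XX^\top]=I_n$. The plan is to reduce both identities to traces of second-moment matrices, using only the definition of isotropy and, for the second claim, independence through conditioning.

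For the first identity, write $\|X\|_2^2=\Tr(XX^\top)$. By linearity of trace and expectation,
\[
\mathbb E\|X\|_2^2=\Tr\bigl(\mathbb E[XX^\top]\bigr)=\Tr(I_n)=n.
\]
This is the same computation as in Lemma~\ref{Lemma10} with $M=I_n$, except that Gaussianity is replaced by isotropy. Gaussianity is never used there beyond the second moment.

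For the second identity, I will condition on $Y$. Since $X$ and $Y$ are independent, for fixed $Y=y$ we have
\[
\mathbb E_X\langle X,y\rangle^2=\mathbb E_X\bigl[y^\top XX^\top y\bigr]=y^\top\mathbb E[XX^\top]\,y=\|y\|_2^2 .
\]
Taking the expectation over $Y$ and applying the first part to $Y$ gives $\mathbb E\langle X,Y\rangle^2=\mathbb E\|Y\|_2^2=n$. To justify the interchange of expectations (Fubini/Tonelli), note that all integrands are nonnegative, so Tonelli applies without any integrability hypothesis.

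There is no real obstacle. The only point needing care is that the conditioning step uses independence, so that the conditional law of $X$ given $Y$ is still isotropic. It also uses the existence of second moments, which is implicit in the definition of isotropy.
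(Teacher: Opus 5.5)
Your argument is correct and is essentially the standard proof in Vershynin's notes, which the paper only cites and does not reproduce. There, $\mathbb E\|X\|_2^2=\Tr(\mathbb E[XX^\top])=\Tr(I_n)=n$, and $\mathbb E\langle X,Y\rangle^2=n$ follows by conditioning on $Y$, using isotropy of $X$ to obtain $\|Y\|_2^2$, and then applying the first identity to $Y$, with your Tonelli remark justifying the interchange of expectations.
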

\begin{lemma}\label{lemma13}
    Let $M\in \R^{m\times m}\succeq 0,\widetilde{M}\in \R^{m\times m} \succeq 0$ be positive semidefinite (PSD) matrix, 
    $\widetilde{M} $ is the perturbation of $M$, we have 
    \begin{align*}
    \big(M+\widetilde{M}\big)^{-1} - M^{-1}+M^{-1}\widetilde{M}M^{-1}  \succcurlyeq 0 
    \end{align*}
\end{lemma}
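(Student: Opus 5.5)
The plan is to reduce the inequality to a scalar statement by congruence with $M^{-1/2}$. First note that the statement only makes sense when $M$ is invertible. So I will read the hypothesis as $M\succ 0$, which is the case in its use in Lemma~\ref{lemma7}, where $M=M_{k-1}M_{k-1}^T$ with $M_{k-1}$ of full row rank. Then $M+\widetilde M\succeq M\succ 0$ is invertible as well, and $M^{1/2}$ and $M^{-1/2}$ exist as symmetric positive definite matrices.

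Define the symmetric matrix
\[
B:=M^{-1/2}\widetilde M M^{-1/2}\succeq 0,
\]
which is positive semidefinite by congruence. The three terms factor as follows:
\[
M+\widetilde M=M^{1/2}(I+B)M^{1/2}
\quad\Longrightarrow\quad
(M+\widetilde M)^{-1}=M^{-1/2}(I+B)^{-1}M^{-1/2},
\]
\[
M^{-1}=M^{-1/2}IM^{-1/2},\qquad M^{-1}\widetilde M M^{-1}=M^{-1/2}BM^{-1/2}.
\]
Hence the matrix in question equals
\[
M^{-1/2}\bigl[(I+B)^{-1}-I+B\bigr]M^{-1/2}.
\]
Since congruence by the symmetric matrix $M^{-1/2}$ preserves positive semidefiniteness, it suffices to show that $(I+B)^{-1}-I+B\succeq 0$.

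The middle matrix is a function of the single symmetric matrix $B$. Diagonalize $B=Q\Lambda Q^T$ with $Q$ orthogonal and $\lambda_j\ge 0$. Then $(I+B)^{-1}-I+B=Q\,\mathrm{diag}\bigl(f(\lambda_j)\bigr)Q^T$ with
\[
f(\lambda)=\frac{1}{1+\lambda}-1+\lambda=\frac{\lambda^2}{1+\lambda}.
\]
This is nonnegative for every $\lambda\ge 0$, so the middle matrix is positive semidefinite and the lemma follows.

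The only step that needs care is the first one: making the invertibility assumption on $M$ explicit and checking that the congruence factorization of $(M+\widetilde M)^{-1}$ is valid. After that, the argument is the one-line scalar inequality above.

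An alternative route is the Sherman–Morrison formula when $\widetilde M=m_km_k^T$ has rank one, which is the case actually used in Lemma~\ref{lemma7}. It gives the difference explicitly as
\[
\frac{M^{-1}m_km_k^TM^{-1}\,(m_k^TM^{-1}m_k)}{1+m_k^TM^{-1}m_k}\succeq 0.
\]
The congruence argument handles general $\widetilde M\succeq 0$, so I would present that one.
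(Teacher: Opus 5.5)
Your proof is correct and is essentially the paper's argument: the paper also conjugates by $M^{-1/2}$, sets $B=M^{-1/2}\widetilde M M^{-1/2}$, diagonalizes $B$, and checks eigenvalue by eigenvalue that $(1+\lambda)^{-1}-(1-\lambda)\ge 0$. The paper justifies that scalar inequality by convexity of $1/x$, whereas you compute it explicitly as $\lambda^2/(1+\lambda)$. Your explicit assumption $M\succ 0$ is a worthwhile clarification, since the paper writes $M^{-1}$ while only assuming $M\succeq 0$.
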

\begin{proof}[Proof of Lemma \ref{lemma13}.] 
\begin{align*}
\Big(M+\widetilde{M}\Big)^{-1}& 
 = \Big[M^{\frac{1}{2}}\Big(I+M^{-\frac{1}{2}}\widetilde{M}M^{-\frac{1}{2}}\Big)M^{\frac{1}{2}}\Big]^{-1}\\
 & =M^{-\frac{1}{2}}\Big(I+M^{-\frac{1}{2}}\widetilde{M}M^{-\frac{1}{2}}\Big)^{-1}M^{-\frac{1}{2}}
\end{align*}
\begin{align*}
    \Big(M+\widetilde{M}\Big)^{-1}-M^{-1}&= M^{-\frac{1}{2}}\Big(I+M^{-\frac{1}{2}}\widetilde{M}M^{-\frac{1}{2}}\Big)^{-1}M^{-\frac{1}{2}}-M^{-1}\\
    & = M^{-\frac{1}{2}}\bigg[\Big(I+M^{-\frac{1}{2}}\widetilde{M}M^{-\frac{1}{2}}\Big)^{-1}-I\bigg]M^{-\frac{1}{2}}
\end{align*}
Thus,
\begin{align*}
\Big(M+\widetilde{M}\Big)^{-1}-\Big(M^{-1} - M^{-1}\widetilde{M}M^{-1}\Big)& = M^{-\frac{1}{2}}\Big[\Big(I+M^{-\frac{1}{2}}\widetilde{M}M^{-\frac{1}{2}}\Big)^{-1}-I\Big]M^{-\frac{1}{2}} + M^{-1}\widetilde{M}M^{-1}\\
   & = M^{-\frac{1}{2}}\Big[\Big(I+M^{-\frac{1}{2}}\widetilde{M}M^{-\frac{1}{2}}\Big)^{-1}-\Big(I-M^{-\frac{1}{2}}\widetilde{M}M^{-\frac{1}{2}}\Big)\Big]M^{-\frac{1}{2}}
\end{align*} 
Since $M$ and $\widetilde{M}$ is PSD, and $M^{-\frac{1}{2}}\widetilde{M}M^{-\frac{1}{2}}$ hands a basis of eigenvectors, in that basis, everything is diagonalible. \\
Let's $M^{-\frac{1}{2}}\widetilde{M}M^{-\frac{1}{2}}  = U\Lambda^2U^T$ \\
Expand  $(I+M^{-\frac{1}{2}}\widetilde{M}M^{-\frac{1}{2}})^{-1}$ in the basis given by the $U$
\begin{align*}
    (I+M^{-\frac{1}{2}}\widetilde{M}M^{-\frac{1}{2}})^{-1} = (I+U\Lambda^2U^T)^{-1} =U(I+\Lambda^2)^{-1}U^{T}
\end{align*}
Similarly, expand this linear term $I- M^{-\frac{1}{2}}\widetilde{M}M^{-\frac{1}{2}}$ in the basis given by $U$
\begin{align*}
    I- M^{-\frac{1}{2}}\widetilde{M}M^{-\frac{1}{2}} =I - U\Lambda^2U^T =  U(I-\Lambda^2)^{-1}U^{T}
\end{align*}
\begin{align*}
   \Big(I+M^{-\frac{1}{2}}\widetilde{M}M^{-\frac{1}{2}}\Big)^{-1}-\Big(I-M^{-\frac{1}{2}}\widetilde{M}M^{-\frac{1}{2}}\Big)&=U\Big(I + \Lambda^2\Big)^{-1}U^T -U\Big(I-\Lambda^2 \Big)U^T\\
   & =U\bigg[\Big(I + \Lambda^2\Big)^{-1}-\Big(I-\Lambda^2 \Big)\bigg]U^T
\end{align*}
Since we know that $\frac{1}{x}$ is a convex function for $x > 0$, and $1-x$ is the linear approximation around $x= 1$, so we get:
\begin{align*}
    (1+\lambda_i^2)^{-1}
-(1-\lambda_i^2) \geq 0\end{align*}
So we know that $\Big(I+M^{-\frac{1}{2}}\widetilde{M}M^{-\frac{1}{2}}\Big)^{-1}-\Big(I-M^{-\frac{1}{2}}\widetilde{M}M^{-\frac{1}{2}}\Big)$ is PSD.
\end{proof}
\begin{claim}\label{Claim4.1}
    Let $H(k) = E_{\widetilde{A_{k-1}}}\Big[h(A_{k},t)\Big].$ $\forall A_{k},h(A_{k},t)$ is monotone. It is sufficient to show that the monotonicity of $H(k)$ is determined by  $h(A_{k},t)$.
\end{claim}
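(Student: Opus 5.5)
The plan is to read Claim~\ref{Claim4.1} as the reduction already used in the proof of Theorem~\ref{Theorem1}. There, $H(k)=\mathrm E_{\widetilde{A_{k-1}}}[h(A_k,t)]$ and $h$ is the fixed-$A$ risk $R_k$, or its partial average $f_k$. The claim says that if the increment $h(A_k,t)-h(A_{k-1},t)$ has a definite sign for every realization of the conditioning variables, or has a definite sign after averaging only over the newly added column $a_k$, then $H$ is monotone in the same direction. I would prove it by writing the difference $H(k)-H(k-1)$ as a single expectation over a common probability space and then pushing the sign inside.

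First I would make the common probability space explicit. Both $R_k$ and $R_{k-1}$ are measurable functions of the same Gaussian matrix $A=[A_{k-1},a_k,\widetilde{A_k}]$. Linearity of expectation then gives
\[
H(k)-H(k-1)=\mathrm E_{a_k}\mathrm E_{\widetilde{A_k}}\bigl[h(A_k,t)-h(A_{k-1},t)\bigr],
\]
with $A_{k-1}$ held fixed. To justify this step, each term must be integrable, so that the subtraction is not of the form $\infty-\infty$. For $k\ge m+1$ I would check integrability via Lemma~\ref{Under_E(k)}: the nonnegative terms are bounded by $\|G_n\|_F^2$ plus $T_2(k)\sum_{i>k}\sigma_i^2$. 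Using Lemma~\ref{lemma6}, $T_2(k)\le T_2(m+1)$, and $\mathrm E\,T_2(m+1)\le\sigma_1^2\,\mathrm E\operatorname{tr}\bigl((M_{m+1}M_{m+1}^T)^{-1}\bigr)$, which is a finite inverse-Wishart moment because there are $m+1$ columns. Fubini–Tonelli then justifies the iterated expectation.

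Second comes the sign step. If the conditional increment $\mathrm E_{a_k}\bigl[h(A_k,t)-h(A_{k-1},t)\bigr]$ is $\le 0$ for almost every full-rank $A_{k-1}$, as shown in \eqref{eq12}, then integrating over $A_{k-1}$ preserves the inequality, so $H(k)\le H(k-1)$. Iterating over $k$ gives monotonicity. A pointwise sign on $h$ is the special case in which no averaging is needed. The same argument with the inequality reversed handles increasing $H$. Full rank of $A_{k-1}$ holds almost surely, so the exceptional null set does not affect the expectation.

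The main obstacle is the integrability check. The conditional bound in \eqref{eq12} uses $\mathrm E_{a_k}$ of a term involving $(M_{k-1}M_{k-1}^T)^{-2}$. This is finite for each fixed full-rank $A_{k-1}$, but its outer expectation could diverge when $k-1=m$, as the divergence argument in Proposition~\ref{prop:boundary-divergence} suggests. I would try to avoid needing that outer expectation at all: the comparison only requires the conditional inequality almost surely, together with integrability of $R_k$ and $R_{k-1}$ themselves. For $k-1=m$, where $E(m)=+\infty$, the conclusion $E(m+1)\le E(m)$ holds trivially, so the careful argument is needed only for $k\ge m+2$, where both risks are finite.
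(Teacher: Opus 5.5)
Your skeleton is the paper's own proof, which just writes $H(k)-H(k-1)=\mathrm E_{\widetilde{A_{k-1}}}[h(A_k,t)-h(A_{k-1},t)]$ and reads off the sign, with no integrability discussion. The trouble is the integrability layer you add, which rests on a false step. Your bound $\mathrm E\,T_2(m+1)\le\sigma_1^2\,\mathrm E\operatorname{tr}\bigl((M_{m+1}M_{m+1}^T)^{-1}\bigr)$ is correct, but its right side is $+\infty$, not finite.

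Indeed, $M_{m+1}M_{m+1}^T\preceq\sigma_1^2A_{m+1}A_{m+1}^T$ with $A_{m+1}A_{m+1}^T\sim\mathcal W_m(I_m,m+1)$, and Lemma~\ref{lem:wishart-inverse} would need $m+1>m+1$. Concretely, $\bigl((A_{m+1}A_{m+1}^T)^{-1}\bigr)_{11}$ is the reciprocal of a $\chi^2_2$ variable, which is the same mechanism as in Proposition~\ref{prop:boundary-divergence}.

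No sharper bound can rescue this. Since $G_k^2\succeq\sigma_k^2I$, you get $T_2(k)\ge\sigma_k^2\operatorname{tr}\bigl((M_kM_k^T)^{-1}\bigr)$, so $\mathrm E\,T_2(m+1)=+\infty$ and hence $E(m+1)=+\infty$ whenever $n\ge m+2$. So the finiteness asserted in Theorem~\ref{Theorem1} also fails at $k=m+1$, and the blow-up occurs at $k\in\{m-1,m,m+1\}$.

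Consequently $R_{m+1}$ is not integrable, and your Fubini--Tonelli justification does not go through. Your statement that both risks are finite for $k\ge m+2$ is also false at $k=m+2$. Unconditional integrability of $R_k$ holds only for $k\ge m+2$, where $\mathrm E\operatorname{tr}\bigl((A_kA_k^T)^{-1}\bigr)=m/(k-m-1)$. So your route covers only $k\ge m+3$, with $k\in\{m+1,m+2\}$ handled as trivial cases.

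The cleaner repair is to drop unconditional integrability altogether and use nonnegativity. Fix a full-rank $A_{k-1}$ with $k-1\ge m$; then $f_{k-1}$ is finite. Since $M_kM_k^T\succeq M_{k-1}M_{k-1}^T$, you get $T_2(k)\le\sigma_1^2\operatorname{tr}\bigl((M_kM_k^T)^{-1}\bigr)\le\sigma_1^2\operatorname{tr}\bigl((M_{k-1}M_{k-1}^T)^{-1}\bigr)$, so $f_k$ is dominated by a finite function of $A_{k-1}$ alone. Hence the conditional increment in \eqref{eq8} is a difference of finite numbers, and \eqref{eq12} gives $\mathrm E_{a_k}f_k\le f_{k-1}$ almost surely.

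Now integrate this inequality between nonnegative functions over $A_{k-1}$, using Tonelli and monotonicity of the integral on $[0,\infty]$. This yields $E(k)\le E(k-1)$ for every $k\ge m+1$, including the cases where the right side is $+\infty$, and no $\infty-\infty$ ever arises. With that substitution, your argument is the paper's linearity-and-sign argument made rigorous.
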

\begin{proof}[Proof of Claim \ref{Claim4.1}.]
    \begin{align*}
       D &=  H(k)-H(k-1)\\ & = E_{\widetilde{A_{k-1}}}\Big[h(A_{k},t)\Big] - E_{\widetilde{A_{k-1}}}\Big[h(A_{k-1},t)\Big]\\
        & = E_{\widetilde{A_{k-1}}}\Big[h(A_{k},t)- h(A_{k-1},t)\Big]
    \end{align*}
Therefore, we know that the sign of $D$ is decided by the sign of $h(A_{k},t)- h(A_{k-1},t)$.
\end{proof}
\begin{theorem}[Deterministic error bound \cite{halko2011finding}]\label{deterministic error bound}
    Let $A$ be an $m \times n$ matrix with SVD $A=U\Sigma V$, and fix $k \geq 0$. Choose a test matrix $\Omega$, and construct the sample matrix $Y=A \Omega$. Partition $\Sigma$ as 

\[
A = U
\begin{bmatrix}
\Sigma_1 & \\
& \Sigma_2
\end{bmatrix}
\begin{bmatrix}
V_1^{*} \\
V_2^{*}
\end{bmatrix},
\qquad
\Sigma_1 \in \mathbb{R}^{k \times k},\;
\Sigma_2 \in \mathbb{R}^{(n-k)\times(n-k)}.
\]
and define $\Omega_1$ and $\Omega_2$ via $\Omega_1=V_1^\star  \Omega$ and $ \Omega_2=V_2^\star  \Omega$. Assuming that $\Omega_1$ has full row rank, the approximation error satisfies
\begin{equation}
    \left\|\left(\mathbf{I}-P_{Y}\right) A\right\|_F^2 \leq\left\|\Sigma_2\right\|_F^2+ \| \Sigma_2 \Omega_2 \Omega_1^{\dagger} \|_F^2,
\end{equation}
\end{theorem}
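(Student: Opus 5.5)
The statement immediately above is the deterministic range-finder bound of Halko–Martinsson–Tropp: $\|(I-P_Y)A\|_F^2 \le \|\Sigma_2\|_F^2+\|\Sigma_2\Omega_2\Omega_1^\dagger\|_F^2$ whenever $\Omega_1$ has full row rank. I would prove it in three steps: reduce to a diagonal matrix, exhibit an explicit subspace inside $\operatorname{range}(Y)$ adapted to the top $k$ singular directions, and then bound the residual against that subspace with a block-matrix computation.

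\textbf{Reduction.} Since $\|(I-P_Y)A\|_F=\|U^*(I-P_Y)U\,U^*A\|_F$ and $U^*P_YU=P_{U^*Y}$, I may replace $A$ by $\Sigma V^*$ and $Y$ by $U^*Y=\Sigma V^*\Omega=\begin{pmatrix}\Sigma_1\Omega_1\\ \Sigma_2\Omega_2\end{pmatrix}$. Right multiplication by $V^*$ does not change the Frobenius norm, so it suffices to bound $\|(I-P_Y)\Sigma\|_F^2$ with $Y$ in this block form.

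\textbf{Test matrix.} Since $\Omega_1$ has full row rank, $\Omega_1\Omega_1^\dagger=I_k$. Assuming first that $\Sigma_1$ is invertible, set
\[
Z:=Y\Omega_1^\dagger\Sigma_1^{-1}=\begin{pmatrix}I_k\\ F\end{pmatrix},\qquad F:=\Sigma_2\Omega_2\Omega_1^\dagger\Sigma_1^{-1}.
\]
Then $\operatorname{range}(Z)\subseteq\operatorname{range}(Y)$, so $P_Z\preceq P_Y$. This gives $(I-P_Y)\preceq(I-P_Z)$, and hence
\[
\|(I-P_Y)\Sigma\|_F^2=\operatorname{tr}\bigl(\Sigma^*(I-P_Y)\Sigma\bigr)\le\operatorname{tr}\bigl(\Sigma^*(I-P_Z)\Sigma\bigr).
\]

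\textbf{Key estimate.} I would write $P_Z=Z(I+F^*F)^{-1}Z^*$ and compute the block form of $I-P_Z$. The goal is the HMT bound
\[
I-P_Z\preceq\begin{pmatrix}F^*F & -(I+F^*F)^{-1}F^*\\ -F(I+F^*F)^{-1} & I\end{pmatrix}.
\]
The $(1,1)$ block of $I-P_Z$ equals $I-(I+F^*F)^{-1}$, which is $\preceq F^*F$ by the scalar inequality $1-(1+t)^{-1}\le t$. The $(2,2)$ block is $\preceq I$ because $I-P_Z$ is itself an orthogonal projection. Conjugating by $\Sigma$ and taking traces, the off-diagonal blocks vanish because $\Sigma$ is block diagonal. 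The result is
\[
\operatorname{tr}\bigl(\Sigma_1F^*F\Sigma_1\bigr)+\operatorname{tr}\bigl(\Sigma_2(\cdot)\Sigma_2\bigr)\le\|F\Sigma_1\|_F^2+\|\Sigma_2\|_F^2,
\]
and $F\Sigma_1=\Sigma_2\Omega_2\Omega_1^\dagger$, which is exactly the claimed bound.

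\textbf{Main obstacle.} The delicate step is the semidefinite comparison for the full block matrix, not just its diagonal blocks. To handle it I would follow HMT's Proposition 8.2-type argument and check positivity of the difference via a Schur complement. Alternatively, and more simply, I would avoid the full block bound altogether. For trace purposes only the diagonal blocks of $I-P_Z$ matter, because $\Sigma$ is block diagonal and so $\operatorname{tr}(\Sigma^*(I-P_Z)\Sigma)$ depends only on those blocks. Each diagonal block can be bounded directly as above, so I would commit to this route.

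Finally, if $\Sigma_1$ is singular, I would replace the zero singular values by $\epsilon>0$ in the construction of $Z$ and let $\epsilon\to0$. Alternatively, I would note that in the paper's application $\sigma_i>0$ for all $i$, so $\Sigma_1$ is invertible and this case never arises.
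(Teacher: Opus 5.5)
Your main argument is the paper's proof. You rotate by $U^*$, flatten to $Z=\tilde{Y}\Omega_1^{\dagger}\Sigma_1^{-1}$ (top block $\mathbf{I}$, bottom block $F$), use $\operatorname{range}(Z)\subseteq\operatorname{range}(\tilde{Y})$, bound the diagonal blocks of $\mathbf{I}-P_Z$ by $F^*F$ and $\mathbf{I}$, and take the trace after conjugating by $\Sigma$. The full-block comparison you call delicate is in fact immediate: the two block matrices share their off-diagonal blocks, so their difference is block diagonal with psd blocks. That is how the paper argues, and your trace-only route is equivalent to it.

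The step that does not work as written is the singular case.
\begin{itemize}
\item If you perturb only inside $Z$, setting $Z_\epsilon=\tilde{Y}\Omega_1^{\dagger}(\Sigma_1^{\epsilon})^{-1}$, its top block is $\Sigma_1(\Sigma_1^{\epsilon})^{-1}$. This is a diagonal matrix with entries in $\{0,1\}$, not $\mathbf{I}$, so the block formula for $\mathbf{I}-P_Z$ and your key estimate no longer apply.
\item If instead you perturb $A$ (and hence $Y$) and let $\epsilon\to0$, you need continuity of $Y\mapsto P_Y$. This fails exactly when the rank drops. Every limit point of $P_{Y_\epsilon}$ is a projector onto a subspace containing $\operatorname{range}(Y)$, possibly strictly larger. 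The limiting inequality then bounds a residual that is $\le\|(\mathbf{I}-P_Y)A\|_F^2$, which is the wrong direction.
\item The rank drop really occurs here. With ordered singular values and singular $\Sigma_1$, $\operatorname{range}(\tilde{Y}_\epsilon)$ has dimension $k$ for every $\epsilon>0$, while $\operatorname{range}(\tilde{Y})$ has dimension $\operatorname{rank}\Sigma_1<k$.
\item Your fallback, that $\sigma_i>0$ in the paper's application, is true but does not prove the theorem as stated.
\end{itemize}
The missing observation, which the paper uses, is direct. Since the singular values are in decreasing order, a singular $\Sigma_1$ forces $\Sigma_2=0$, so the right-hand side is $0$. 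Because both $\Omega_1$ and $V_1^*$ have full row rank, $\operatorname{range}(\tilde{Y})=\bigl\{\begin{bmatrix}\Sigma_1 v\\ 0\end{bmatrix}: v\in\mathbb{R}^k\bigr\}=\operatorname{range}(\tilde{A})$, so the left-hand side vanishes too.
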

\begin{proof}[Proof of Theorem \ref{deterministic error bound}.]
Throughout, write $\Sigma=\operatorname{diag}(\Sigma_1,\Sigma_2)$ and recall
\[
\tilde{A}:=U^\star A=\Sigma V^\star 
=\begin{bmatrix}\Sigma_1 V_1^\star \\[2pt]\Sigma_2 V_2^\star \end{bmatrix},
\qquad
\tilde{Y}:=\tilde{A}\Omega
=\begin{bmatrix}\Sigma_1\Omega_1\\[2pt]\Sigma_2\Omega_2\end{bmatrix},
\]
where the block sizes are $k$ and $n-k$.

We note that the left factor $U$ is irrelevant.
Because $\tilde{Y}=U^\star Y$ and orthogonal projectors transform by conjugation, $P_{\tilde{Y}}=U^\star P_{Y} U$. Hence
\[
(\mathbf I- P_{\tilde{Y}})\tilde{A}
=U^\star (\mathbf I- P_{Y})A,
\]
and since the Frobenius norm is unitarily invariant,
\[
\bigl\|(\mathbf I- P_{Y})A\bigr\|_F
=\bigl\|(\mathbf I- P_{\tilde{Y}})\tilde{A}\bigr\|_F .
\]
It therefore suffices to bound $\|(\mathbf I- P_{\tilde{Y}})\tilde{A}\|_F^2$.

We assume $k$ is chosen so that every diagonal entry of $\Sigma_1$ is strictly positive. If not, then $\Sigma_2=\mathbf 0$ by the ordering of the singular values, and both blocks of $\tilde{A}$ lie in the row space of $\Omega_1$ (which has full row rank), so
\[
\operatorname{range}(\tilde{A})
=\operatorname{range}\!\begin{bmatrix}\Sigma_1 V_1^\star \\ \mathbf 0\end{bmatrix}
=\operatorname{range}\!\begin{bmatrix}\Sigma_1\Omega_1\\ \mathbf 0\end{bmatrix}
=\operatorname{range}(\tilde{Y}).
\]
In that case $(\mathbf I-P_{\tilde{Y}})\tilde{A}=\mathbf 0$ and the bound holds trivially (both sides vanish). So assume $\Sigma_1$ is invertible.

The idea is to replace $\tilde{Y}$ by a matrix whose top block is the identity, obtained by ``flattening'' the top spectrum. Since $\Omega_1$ has full row rank we have $\Omega_1\Omega_1^{\dagger}=\mathbf I$, so define
\[
Z:=\tilde{Y}\,\Omega_1^{\dagger}\Sigma_1^{-1}
=\begin{bmatrix}\mathbf I\\[2pt] F\end{bmatrix},
\qquad
F:=\Sigma_2\Omega_2\Omega_1^{\dagger}\Sigma_1^{-1}.
\]
(The top block is $\Sigma_1\Omega_1\Omega_1^{\dagger}\Sigma_1^{-1}=\mathbf I$; the bottom block is $F$.)
Because $Z$ is $\tilde{Y}$ times a matrix on the right, $\operatorname{range}( Z)\subseteq\operatorname{range}(\tilde{Y})$. Enlarging the range can only decrease the residual of the associated projector, so
\begin{equation}
\bigl\|(\mathbf I-P_{\tilde{Y}})\tilde{A}\bigr\|_F
\;\le\;
\bigl\|(\mathbf I- P_{Z})\tilde{A}\bigr\|_F .
\label{eq:range-mono}
\end{equation}

\medskip
The next step is to find the explicit form of the complementary projector $\mathbf I-P_{Z}$.
The matrix $Z$ has full column rank, so $P_{Z}= Z(Z^\star Z)^{-1}Z^\star $ with $Z^\star Z=\mathbf I+F^\star F$. A direct block computation gives
\[
\mathbf I- P_{Z}
=\begin{bmatrix}
\mathbf I-(\mathbf I+ F^\star F)^{-1} & -(\mathbf I+ F^\star F)^{-1}F^\star \\[4pt]
-F(\mathbf I+F^\star F)^{-1} & \mathbf I-F(\mathbf I+F^\star F)^{-1}F^\star 
\end{bmatrix},
\]
whose block structure conforms with the partition of $\Sigma$. Write $B:=-(\mathbf I+F^\star F)^{-1}F^\star $ for the off-diagonal block.

\medskip
Now, we can find the semidefinite bound on the projector.
Set $M:=F^\star  F\succeq\mathbf 0$. For the diagonal blocks:
\begin{itemize}
\item Top-left: for any psd $M$ one has $\mathbf I-(\mathbf I+ M)^{-1}\preccurlyeq M$ (equivalently $(\mathbf I+ M)^{-1}\succeq\mathbf I-M$, which follows entrywise on eigenvalues from $\tfrac{1}{1+\mu}-(1-\mu)=\tfrac{\mu^2}{1+\mu}\ge0$). Thus the top-left block is $\preccurlyeq F^\star F$.
\item Bottom-right: $F(\mathbf I+F^\star F)^{-1}F^\star \succeq\mathbf 0$, so this block is $\preccurlyeq\mathbf I$.
\end{itemize}
Since replacing the two diagonal blocks by these upper bounds leaves the off-diagonal blocks unchanged, the difference is block-diagonal with psd diagonal blocks, hence psd. Therefore
\[
\mathbf I-P_{Z}\;\preccurlyeq\;
\begin{bmatrix}F^\star F &  B\\[2pt]  B^\star  & \mathbf I\end{bmatrix}.
\]

Congruence preserves the semidefinite order, so conjugating by $\Sigma=\operatorname{diag}(\Sigma_1,\Sigma_2)$ gives
\[
\Sigma^\star (\mathbf I-P_{Z})\Sigma
\;\preccurlyeq\;
\begin{bmatrix}
\Sigma_1^\star  F^\star  F\Sigma_1 & \Sigma_1^\star  B\Sigma_2\\[3pt]
\Sigma_2^\star  B^\star \Sigma_1 & \Sigma_2^\star \Sigma_2
\end{bmatrix}.
\]
Now use $\tilde{A}=\Sigma V^\star $ together with unitary invariance of the trace:
\[
\bigl\|(\mathbf I-P_{Z})\tilde{A}\bigr\|_F^2
=\operatorname{Tr}\!\bigl(\tilde{A}^\star (\mathbf I- P_{Z})\tilde{A}\bigr)
=\operatorname{Tr}\!\bigl(\Sigma^\star (\mathbf I-P_{ Z})\Sigma\bigr).
\]
The trace is monotone with respect to $\preccurlyeq$ and picks out only the diagonal blocks, so
\[
\operatorname{Tr}\!\bigl(\Sigma^\star (\mathbf I-P_{Z})\Sigma\bigr)
\;\le\;
\operatorname{Tr}\!\bigl(\Sigma_1^\star  F^\star  F\Sigma_1\bigr)
+\operatorname{Tr}\!\bigl(\Sigma_2^\star \Sigma_2\bigr)
=\bigl\|F\Sigma_1\bigr\|_F^2+\bigl\|\Sigma_2\bigr\|_F^2 .
\]

Finally, by definition of $F$,
\[
 F\Sigma_1
=\Sigma_2\Omega_2\Omega_1^{\dagger}\Sigma_1^{-1}\Sigma_1
=\Sigma_2\Omega_2\Omega_1^{\dagger},
\]
so $\|F\Sigma_1\|_F^2=\|\Sigma_2\Omega_2\Omega_1^{\dagger}\|_F^2$. Combining all together,
\[
\bigl\|(\mathbf I-P_{Y})A\bigr\|_F^2
=\bigl\|(\mathbf I-P_{\tilde{Y}})\tilde{A}\bigr\|_F^2
\le\bigl\|(\mathbf I-P_{Z})\tilde{A}\bigr\|_F^2
\le\bigl\|\Sigma_2\bigr\|_F^2+\bigl\|\Sigma_2\Omega_2\Omega_1^{\dagger}\bigr\|_F^2,
\]
which is the asserted bound.
\end{proof}


\section{Wishart and Inverse Wishart Distributions}
In this appendix we collect the basic definitions and facts on the Wishart and Inverse Wishart distributions that are used in the main text. Proofs of these results can be found in standard references on multivariate statistics (e.g., Muirhead, 1982).

\subsection{Wishart distribution}

\begin{definition}[Wishart distribution]
Let $X\in\mathbb{R}^{n\times p}$ have i.i.d. rows $x_i\sim\mathcal N(0,\Sigma)$, where 
$\Sigma\in\mathbb{R}^{p\times p}$ is positive definite.  
The random matrix
\[
W = X^\top X = \sum_{i=1}^n x_i x_i^\top
\]
is said to follow a \emph{Wishart distribution} with $n$ degrees of freedom and scale matrix $\Sigma$, written
\[
W \sim \mathcal W_p(\Sigma,n).
\]
\end{definition}

\begin{lemma}[Basic Properties of Wishart Matrices]\label{lem:wishart-basic}
If $W\sim \mathcal W_p(\Sigma,n)$, then:
\begin{enumerate}
    \item $W$ is almost surely symmetric positive semidefinite.
    \item $\mathbb{E}[W]=n\Sigma$.
    \item If $n\ge p$, then $W$ is almost surely invertible.
\end{enumerate}
\end{lemma}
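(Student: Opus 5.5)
The three claims follow directly from the definition $W=X^\top X=\sum_{i=1}^n x_ix_i^\top$ with $x_i\sim\mathcal N(0,\Sigma)$ i.i.d., and I will take them in order.

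\emph{Symmetry and positive semidefiniteness.} Symmetry holds because $(X^\top X)^\top=X^\top X$. For positive semidefiniteness, take any $v\in\mathbb R^p$; then $v^\top W v=\|Xv\|_2^2\ge 0$. Both facts hold for every realization, so in particular almost surely.

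\emph{Mean.} By linearity, $\mathbb E[W]=\sum_{i=1}^n\mathbb E[x_ix_i^\top]$. Since $x_i$ has mean zero and covariance $\Sigma$, each term equals $\Sigma$, which gives $\mathbb E[W]=n\Sigma$.

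\emph{Almost-sure invertibility for $n\ge p$.} First note $v^\top Wv=\|Xv\|_2^2$, so $W$ is invertible if and only if $X$ has full column rank $p$. It is therefore enough to show that the first $p$ rows $x_1,\dots,x_p$ are linearly independent almost surely. I would argue inductively. For $j\le p$, condition on $x_1,\dots,x_{j-1}$; their span $S$ has dimension at most $j-1<p$. The vector $x_j$ is independent of $S$ and has the nondegenerate density of $\mathcal N(0,\Sigma)$, since $\Sigma\succ 0$. A proper subspace of $\mathbb R^p$ has Lebesgue measure zero, so $\Pr(x_j\in S)=0$. A union bound over $j=1,\dots,p$ then shows the first $p$ rows are independent with probability one. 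Hence $\operatorname{rank}X=p$ and $W\succ 0$ almost surely.

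The only step that needs any care is this last one, and its key point is that $\Sigma\succ 0$ makes the Gaussian law absolutely continuous with respect to Lebesgue measure. Alternatively, one can avoid the induction: $\det(X_{1:p})$ is a nonzero polynomial in the entries of $X$, and the zero set of a nonzero polynomial is Lebesgue-null.
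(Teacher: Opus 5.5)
Your proof is correct and complete. The paper does not prove this lemma: it only refers to standard multivariate-statistics texts (Muirhead). So your argument supplies a self-contained proof rather than an alternative to one in the text.

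Parts 1 and 2 follow immediately from $v^\top Wv=\|Xv\|_2^2$ and $\mathbb{E}[x_ix_i^\top]=\Sigma$. In part 3 you correctly isolate the one point that needs care: $\Sigma\succ 0$ gives the rows a Lebesgue density.

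\begin{itemize}
\item In your sequential argument, this enters through $\Pr(x_j\in S)=\mathbb{E}\bigl[\Pr(x_j\in S\mid x_1,\dots,x_{j-1})\bigr]=0$, which uses independence and Fubini.
\item In your polynomial alternative, which is essentially the standard textbook argument, it enters because the top $p\times p$ block of $X$ has a nondegenerate Gaussian law (covariance $\Sigma\otimes I_p$ for the column-stacked vectorization). Hence the Lebesgue-null set $\{\det X_{1:p}=0\}$ has probability zero.
\end{itemize}

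The inductive version is slightly more elementary but uses independence of the rows. The polynomial version needs only joint absolute continuity of $X$, so it applies unchanged to dependent rows.
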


\begin{remark}
When $\Sigma=I_p$, $W$ is the sample covariance matrix (up to a factor of $n$) for standard Gaussian data.
\end{remark}

\subsection{Inverse Wishart distribution}

\begin{definition}[Inverse Wishart distribution]
A random matrix $S\in\mathbb{R}^{p\times p}$ is said to follow an \emph{Inverse Wishart distribution} with scale matrix $\Psi$ and $n$ degrees of freedom, written
\[
S \sim \mathcal W_p^{-1}(\Psi,n),
\]
if $S=W^{-1}$ where $W\sim \mathcal W_p(\Psi^{-1},n)$.
\end{definition}

\begin{lemma}[Moments of the inverse Wishart]\label{lem:iw-moment}
If $S\sim \mathcal W_p^{-1}(\Psi,n)$ and $n>p+1$, then
\[
\mathbb{E}[S]=\frac{\Psi}{\,n-p-1\,}.
\]
\end{lemma}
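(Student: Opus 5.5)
The plan is to prove the formula $\mathbb E[S]=\Psi/(n-p-1)$ by reducing to the standard case $\Psi=I_p$ and then computing $\mathbb E[W^{-1}]$ for $W\sim\mathcal W_p(I_p,n)$ through the Bartlett decomposition.

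\textbf{Reduction to $\Psi=I_p$.} By definition $S=W^{-1}$ with $W\sim\mathcal W_p(\Psi^{-1},n)$. Write $W=X^\top X$, where the rows of $X$ are i.i.d.\ $\mathcal N(0,\Psi^{-1})$. Set $X=X_0\Psi^{-1/2}$, so that $X_0$ has i.i.d.\ standard Gaussian entries. Then $W=\Psi^{-1/2}W_0\Psi^{-1/2}$ with $W_0=X_0^\top X_0\sim\mathcal W_p(I_p,n)$. Since $n>p+1\ge p$, $W_0$ is a.s.\ invertible by Lemma~\ref{lem:wishart-basic}, and
\[
S=\Psi^{1/2}W_0^{-1}\Psi^{1/2}.
\]
By linearity of expectation it therefore suffices to show $\mathbb E[W_0^{-1}]=I_p/(n-p-1)$.

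\textbf{The identity case.} First, $X_0$ and $X_0Q$ have the same law for every orthogonal $Q$. Hence $W_0^{-1}$ and $Q^\top W_0^{-1}Q$ have the same law, so $\mathbb E[W_0^{-1}]$ commutes with all orthogonal matrices and equals $c\,I_p$, provided the expectation is finite. It remains to compute a single diagonal entry, say $(W_0^{-1})_{pp}$, and to show it is integrable.

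\textbf{Computing the diagonal entry.} By the Schur complement,
\[
(W_0^{-1})_{pp}=\bigl(w_{pp}-w_{p\cdot}W_{11}^{-1}w_{\cdot p}\bigr)^{-1}.
\]
The quantity in parentheses equals $\|(I-P)x_p\|^2$, where $x_p$ is the last column of $X_0$ and $P$ is the orthogonal projector onto the span of the first $p-1$ columns. Conditionally on those columns, $P$ has rank $p-1$ and $x_p$ is an independent standard Gaussian vector in $\mathbb R^n$. So $\|(I-P)x_p\|^2\sim\chi^2_{n-p+1}$, independently of the conditioning, which is exactly the Bartlett fact $(R)_{pp}^2\sim\chi^2_{n-p+1}$ already used in Section~\ref{sec:boundary-divergence}.

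For $\nu>2$ we have $\mathbb E[1/\chi^2_\nu]=1/(\nu-2)$. This follows by integrating the density:
\[
\int_0^\infty t^{-1}\,t^{\nu/2-1}e^{-t/2}\,dt\Big/\bigl(2^{\nu/2}\Gamma(\nu/2)\bigr)=\frac{\Gamma(\nu/2-1)}{2\,\Gamma(\nu/2)}.
\]
With $\nu=n-p+1>2$ we get $c=1/(n-p-1)$. Finiteness of every entry then follows: the diagonal entries are integrable, and $W_0^{-1}\succ0$ gives $|(W_0^{-1})_{ij}|\le\sqrt{(W_0^{-1})_{ii}(W_0^{-1})_{jj}}\le\tfrac12\bigl((W_0^{-1})_{ii}+(W_0^{-1})_{jj}\bigr)$.

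\textbf{Main obstacle.} The point needing the most care is justifying that $\mathbb E[W_0^{-1}]$ exists before invoking the invariance argument. I would handle this by proving integrability of the diagonal entries first, as above, and only then apply the symmetry argument to identify the mean as a multiple of the identity.
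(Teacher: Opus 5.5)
Your proof is correct and complete. The paper gives no proof of this lemma: it states it in the Wishart appendix as a standard fact and refers the reader to Muirhead. Your proposal therefore supplies an argument the paper omits, and it is the classical one.

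You reduce to $\Psi=I_p$ via $S=\Psi^{1/2}W_0^{-1}\Psi^{1/2}$. Invariance in law under $X_0\mapsto X_0Q$ then forces $\mathbb E[W_0^{-1}]=cI_p$. Finally you identify $c$ from $1/(W_0^{-1})_{pp}=\|(I-P)x_p\|^2\sim\chi^2_{n-p+1}$, which is the fact $a^\top\Sigma^{-1}a/a^\top W^{-1}a\sim\chi^2_{n-p+1}$ with $a=e_p$.

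One step is used silently. Your Cauchy--Schwarz bound on the off-diagonal entries needs \emph{every} diagonal entry to be integrable, not only the $(p,p)$ one. This holds because conjugation by permutation matrices gives each $(W_0^{-1})_{ii}$ the law of $(W_0^{-1})_{pp}$. That invariance is in law and needs no integrability, so your ordering of the argument is sound.

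Compared with the bare citation, your proof makes the hypothesis $n>p+1$ transparent as $\nu=n-p+1>2$. It also ties the lemma to the paper's own divergence argument. With $p=k$ and $n=m$, one has $((A_k^\top A_k)^{-1})_{kk}=1/(R_k)_{kk}^2$ exactly, since $R_k^{-1}$ is upper triangular. So the same variable $1/\chi^2_{m-k+1}$ gives both results:
\begin{itemize}
\item $\mathbb E\Tr((A_k^\top A_k)^{-1})=k/(m-k-1)$ for $k\le m-2$ (Lemma~\ref{lemma9});
\item $+\infty$ for $k\in\{m-1,m\}$ (Proposition~\ref{prop:boundary-divergence}).
\end{itemize}
The dividing line is exactly whether $\mathbb E[1/\chi^2_\nu]$ is finite.
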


\begin{remark}
The Inverse Wishart distribution is commonly used as a conjugate prior for covariance matrices in Bayesian statistics.
\end{remark}

\subsection{Identities used in the paper}
The following identities are used repeatedly in Sections 3.4.
\begin{lemma}[Useful expectations]\label{lem:wishart-inverse}
If $W\sim \mathcal W_p(\Sigma,n)$ with $n>p+1$, then
\[
\mathbb{E}[W^{-1}] 
= \frac{1}{n-p-1}\Sigma^{-1},
\qquad
\mathbb{E}\big[\mathrm{Tr}(W^{-1})\big]
= \frac{p}{n-p-1}\,\mathrm{Tr}(\Sigma^{-1}).
\]
\end{lemma}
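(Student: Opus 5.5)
We prove the first identity by reducing to the isotropic case, where rotational invariance forces the mean to be a scalar multiple of the identity. Write $X=X_0\Sigma^{1/2}$, where $X_0\in\mathbb R^{n\times p}$ has i.i.d.\ $\mathcal N(0,1)$ entries. Then $W=\Sigma^{1/2}W_0\Sigma^{1/2}$ with $W_0=X_0^\top X_0\sim\mathcal W_p(I_p,n)$, and hence $W^{-1}=\Sigma^{-1/2}W_0^{-1}\Sigma^{-1/2}$. By linearity of expectation it therefore suffices to prove
\[
\mathbb E[W_0^{-1}]=\frac{1}{n-p-1}I_p .
\]
By Lemma~\ref{lem:wishart-basic}, $W_0$ is almost surely invertible, since $n\ge p$.

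For every orthogonal $O\in\mathbb R^{p\times p}$ we have $X_0O\overset{d}{=}X_0$, so $O^\top W_0^{-1}O\overset{d}{=}W_0^{-1}$. Once we know $\mathbb E[W_0^{-1}]$ is finite, it commutes with every orthogonal matrix and is therefore of the form $cI_p$. Permutation invariance lets us compute $c$ from a single diagonal entry, $c=\mathbb E[(W_0^{-1})_{pp}]$. Partition $X_0=[X',x_p]$ with $X'\in\mathbb R^{n\times(p-1)}$. The Schur complement formula gives
\[
(W_0^{-1})_{pp}=\bigl(x_p^\top x_p-x_p^\top X'(X'^\top X')^{-1}X'^\top x_p\bigr)^{-1}=\|(I-P_{X'})x_p\|_2^{-2},
\]
where $P_{X'}$ is the orthogonal projector onto $\operatorname{range}(X')$. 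Conditionally on $X'$, the range of $X'$ is a fixed $(p-1)$-dimensional subspace and $x_p$ is an independent standard Gaussian vector. Its projection onto the $(n-p+1)$-dimensional orthogonal complement therefore has squared norm distributed as $\chi^2_{n-p+1}$, independently of $X'$. (This is the same Bartlett-type fact used in the proof of Proposition~\ref{prop:boundary-divergence}.) Since $\mathbb E[1/\chi^2_\nu]=1/(\nu-2)$ for $\nu>2$, computed directly from the chi-square density, we get $c=1/(n-p-1)$, which is finite exactly when $n>p+1$.

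The main obstacle is justifying finiteness of the full matrix expectation rather than just of the diagonal entries. We handle this with positive semidefiniteness: $|(W_0^{-1})_{ij}|\le\sqrt{(W_0^{-1})_{ii}(W_0^{-1})_{jj}}\le\tfrac12\bigl((W_0^{-1})_{ii}+(W_0^{-1})_{jj}\bigr)$, so every entry is integrable once the diagonals are. This licenses the invariance argument above. Conjugating back by $\Sigma^{-1/2}$ gives $\mathbb E[W^{-1}]=\Sigma^{-1}/(n-p-1)$.

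For the trace identity, taking the trace of the first identity and using linearity gives $\mathbb E[\operatorname{Tr}(W^{-1})]=\operatorname{Tr}(\Sigma^{-1})/(n-p-1)$. In the case $\Sigma=I_p$, which is the only case used in the paper (Lemma~\ref{lemma9} and Lemma~\ref{lem:hmt-frob-specialized}), this equals $p/(n-p-1)$. The displayed formula carries an extra factor $p$ in general. We would state and prove the corrected form $\operatorname{Tr}(\Sigma^{-1})/(n-p-1)$, noting that it coincides with the displayed value only when $p=1$. In the case $\Sigma=I_p$ it specializes to $p/(n-p-1)$, which is all the applications require.
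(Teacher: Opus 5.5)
Your proof of the first identity is correct, and your correction of the second is right. The paper gives no argument for this lemma; it defers to standard multivariate references (Muirhead), so your proposal supplies a self-contained proof rather than reproducing one.

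Your steps all check out:
\begin{itemize}
\item the reduction $W=\Sigma^{1/2}W_0\Sigma^{1/2}$;
\item the invariance $O^\top W_0^{-1}O\overset{d}{=}W_0^{-1}$, which forces $\mathbb E[W_0^{-1}]=cI_p$;
\item the Schur-complement identity $1/(W_0^{-1})_{pp}=\|(I-P_{X'})x_p\|_2^2\sim\chi^2_{n-p+1}$;
\item the bound $|S_{ij}|\le\frac12(S_{ii}+S_{jj})$ for positive semidefinite $S$, which makes the matrix expectation finite before you invoke invariance. Permutation invariance gives every diagonal entry the same law, so integrability of one diagonal entry suffices.
\end{itemize}

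What your route buys over the citation is transparency about the threshold. Finiteness is exactly $\mathbb E[1/\chi^2_{n-p+1}]<\infty$, i.e.\ $n>p+1$. For $n\le p+1$ the same computation gives $+\infty$, which is precisely the mechanism behind Proposition~\ref{prop:boundary-divergence}. There $(R_k)_{kk}^2$ is the squared distance from the $k$-th column of $A_k$ to the span of the first $k-1$ columns, the same projected norm you use. So your argument explains both why the term in Lemma~\ref{lemma9} is finite for $k\le m-2$ and why $E(m-1)=E(m)=+\infty$.

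Your diagnosis of the trace identity is also correct. As displayed it is false for $p\ge 2$: with $\Sigma=I_p$ it would give $p^2/(n-p-1)$, whereas the true value is $p/(n-p-1)$. The correct statement, $\mathbb E[\Tr(W^{-1})]=\Tr(\Sigma^{-1})/(n-p-1)$, follows by taking traces in the first identity, as you do. The paper's actual uses, in Lemma~\ref{lemma9} and Lemma~\ref{lem:hmt-frob-specialized}, both take $\Sigma=I_k$ and use the correct value $k/(m-k-1)$. The extra factor of $p$ is therefore a typo in the statement and does not propagate into the main results.
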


\begin{lemma}[Concentration]\label{lem:wishart-lln}
As $n\to\infty$,
\[
\frac{1}{n}W \;\xrightarrow{\;a.s.\;}\; \Sigma,
\]
that is, the Wishart law concentrates around $n\Sigma$.
\end{lemma}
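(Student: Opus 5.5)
The plan is to read the statement as a strong law of large numbers applied entrywise. First I will fix what ``almost surely as $n\to\infty$'' means. Take a single infinite i.i.d. sequence $x_1,x_2,\dots$ of $\mathcal N(0,\Sigma)$ vectors in $\mathbb R^p$ on one probability space, and set
\[
W_n:=\sum_{i=1}^n x_ix_i^\top\sim\mathcal W_p(\Sigma,n),
\]
which matches the definition of the Wishart distribution above. Almost sure convergence only makes sense for such a coupled sequence, so the statement will be proved in this form. Since all norms on $\mathbb R^{p\times p}$ are equivalent, it suffices to show that each entry $(W_n/n)_{ab}$ converges almost surely to $\Sigma_{ab}$.

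Next, fix indices $a,b\in[p]$. The scalar random variables $\xi_i:=(x_i)_a(x_i)_b$, $i\ge1$, are i.i.d. because the $x_i$ are. They are integrable, since by Cauchy--Schwarz
\[
\mathbb E|\xi_i|\le\bigl(\mathbb E(x_i)_a^2\,\mathbb E(x_i)_b^2\bigr)^{1/2}=(\Sigma_{aa}\Sigma_{bb})^{1/2}<\infty.
\]
Their mean is $\mathbb E\xi_i=\Sigma_{ab}$, which is the entrywise form of $\mathbb E[W]=n\Sigma$ in Lemma~\ref{lem:wishart-basic}. By Kolmogorov's strong law of large numbers,
\[
\tfrac1n(W_n)_{ab}=\tfrac1n\sum_{i=1}^n\xi_i\longrightarrow\Sigma_{ab}
\]
on an event $\Omega_{ab}$ of probability one.

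Finally, intersect the $p^2$ events $\Omega_{ab}$. A finite intersection of probability-one events still has probability one, and on it every entry of $W_n/n$ converges to the corresponding entry of $\Sigma$. This gives $W_n/n\to\Sigma$ almost surely, in Frobenius norm or any other norm.

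No step here is technically hard. The main thing to get right is the setup in the first step: the claim needs the nested coupling of the $W_n$ through a common sequence of rows, because without it only convergence in probability would be available. That weaker mode would follow instead from Chebyshev's inequality, using the finite fourth moments of Gaussian vectors.
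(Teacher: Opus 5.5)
Your proof is correct. The paper gives no proof of this lemma; it only refers to standard multivariate-statistics texts (Muirhead). Your argument supplies the standard proof that citation stands in for: an entrywise application of Kolmogorov's strong law to the coupled sequence $W_n=\sum_{i\le n}x_ix_i^\top$, with integrability from Cauchy--Schwarz and an intersection of the $p^2$ probability-one events.

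One correction to your closing remark: the nested coupling is convenient but not necessary, and without it you still get almost sure convergence, not merely convergence in probability. Since $\xi=(x)_a(x)_b$ has a finite fourth moment, $\mathbb E\bigl|\tfrac1n\sum_{i=1}^n\xi_i-\Sigma_{ab}\bigr|^4=O(n^{-2})$. Hence $\mathbb P\bigl(|(W_n/n)_{ab}-\Sigma_{ab}|>\varepsilon\bigr)$ is summable in $n$, and this probability depends only on the marginal law of $W_n$. The first Borel--Cantelli lemma needs no independence across $n$, so it gives almost sure convergence for any sequence $W_n\sim\mathcal W_p(\Sigma,n)$ realized on a common probability space. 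This also matches the paper's purely distributional phrasing, ``the Wishart law concentrates around $n\Sigma$.''
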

\section{Supplementary experiment result from Section \ref{sec:standing}}
\begin{figure}[H]
    \centering
    \includegraphics[width=0.75\linewidth]{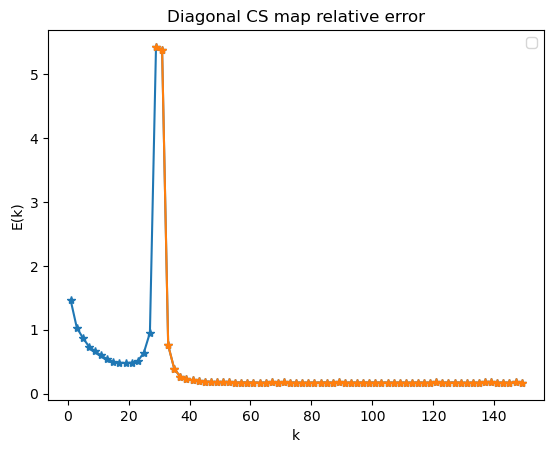}
    \caption{Relative reconstruction error in compressed sensing with diagonal generator matrix. It shows reconstruction error versus model dimension k in compressed sensing with $n=150, m=30$. The generator matrix $G_n $ has diagonal structure with exponentially decaying singular values. The error exhibits three regimes predicted by our theory: a U-shaped behavior for $k<m$, a sharp peak at the interpolation threshold $k=m$, and monotone decay for $k>m$ .}
    \textbf{Alt text:}
    \textbf{Alt text:}
    A line plot of relative reconstruction error versus latent dimension. The horizontal axis shows the latent dimension, and the vertical axis shows the relative reconstruction error. The plot contains data points connected by lines across the full range of latent dimensions.
    \label{fig:k=mblows up}
\end{figure}
\end{document}